\documentclass[aos]{imsart_MODIFIED}

\RequirePackage{amsthm,amsmath,amsfonts,amssymb}
\RequirePackage[numbers]{natbib}

\usepackage{hyperref}
\usepackage{url}
\usepackage[draft]{fixme}
\usepackage{xcolor}

\usepackage{todonotes}

\startlocaldefs
\usepackage[USenglish]{babel}
\usepackage{amsmath}
\usepackage{amssymb}
\usepackage{amsthm,bm}
\usepackage{thmtools}
\usepackage{mathtools}
\usepackage{verbatim}
\usepackage[shortlabels]{enumitem}
\usepackage{tikz}
\usepackage{tikz-qtree}
\usetikzlibrary{trees,positioning,shapes}
\usetikzlibrary{arrows.meta}
\usepackage{pgfplots}
\usepackage{booktabs}

\usepackage{chngcntr}
\usepackage{stmaryrd}
\usepackage{dsfont}  % type-1 font for \bbone
\usepackage{scalerel,stackengine}  % for really wide hat

\makeatletter

\newcommand{\listofappendices}{%
  \begingroup
  \renewcommand{\contentsname}{\appendixtocname}
  \let\@oldstarttoc\@starttoc
  \def\@starttoc##1{\@oldstarttoc{app}}
  \tableofcontents% Reusing the code for \tableofcontents with different \contentsname and different file handle app
  \endgroup
}

\makeatother

\usepackage{hyperref}
\usepackage[capitalise,nameinlink]{cleveref}

\usepackage{mleftright,xparse}%%%%%%%%%%for \xDeclarePairedDelimiter from https://tex.stackexchange.com/questions/94410/easily-change-behavior-of-declarepaireddelimiter/94426#94426

\pgfplotsset{compat=1.11}

\newlength\Origarrayrulewidth

\NewDocumentCommand\xDeclarePairedDelimiter{mmm}
{%
	\NewDocumentCommand#1{som}{%
		\IfNoValueTF{##2}
		{\IfBooleanTF{##1}{#2##3#3}{\mleft#2##3\mright#3}}
		{\mathopen{##2#2}##3\mathclose{##2#3}}%
	}%
}

\theoremstyle{plain}
\newtheorem{theorem}{Theorem}

\newtheorem{lemma}[theorem]{Lemma}
\newtheorem{proposition}[theorem]{Proposition}

\newtheorem{corollary}[theorem]{Corollary}

\theoremstyle{definition}
\newtheorem{defenv}[theorem]{Definition}

\newtheorem{remenv}[theorem]{Remark}
\newtheorem{exenv}[theorem]{Example}

\newtheorem*{assumptionenv}{Assumption}
\makeatletter
\newcommand{\asslabel}[2]{%
  \phantomsection%
  \def\@currentlabelname{#2}%
  \label{#1}%
}
\makeatother
\newcommand{\assref}[1]{Assumption~\nameref{#1}}
\newcommand{\assrefshort}[1]{\nameref{#1}}

\newenvironment{definition}
  {\pushQED{\qed}\defenv}
  {\popQED\endremenv}
\newenvironment{example}
  {\pushQED{\qed}\exenv}
  {\popQED\endremenv}
\newenvironment{assumption}
  {\pushQED{\qed}\assumptionenv}
 {\popQED\endremenv}

\newcommand{\calF}{\mathcal{F}}

\newcommand{\calL}{\mathcal{L}}

\newcommand{\calR}{\mathcal{R}}

\providecommand{\bfa}{\boldsymbol{a}}
\providecommand{\bfb}{\boldsymbol{b}}

\newcommand{\equalDef}{\coloneqq}

\DeclareMathOperator{\id}{id}

\DeclareMathOperator*{\argmin}{argmin}

\allowdisplaybreaks

\newcommand{\Eqref}[1]{Eq.~\eqref{#1}}

\stackMath
\newcommand\rwidehat[1]{%
\savestack{\tmpbox}{\stretchto{%
  \scaleto{%
    \scalerel*[\widthof{\ensuremath{#1}}]{\kern.1pt\mathchar"0362\kern.1pt}%
    {\rule{0ex}{\textheight}}%WIDTH-LIMITED CIRCUMFLEX
  }{\textheight}% 
}{2.4ex}}%
\stackon[-6.9pt]{#1}{\tmpbox}%
}
\renewcommand{\widehat}{\rwidehat}

\setlist[enumerate]{nosep}
\setlist[itemize]{nosep}

\makeatletter
\newcommand\ackname{Acknowledgements}
\if@titlepage
   \newenvironment{acknowledgements}{%
       \titlepage
       \null\vfil
       \@beginparpenalty\@lowpenalty
       \begin{center}%
         \bfseries \ackname
         \@endparpenalty\@M
       \end{center}}%
      {\par\vfil\null\endtitlepage}
\else
   
\fi
\makeatother

\newcommand{\scal}[2]{\left\langle #1,#2\right\rangle}			%scalar product
\renewcommand{\t}[1]{\widetilde{#1}}

\newcommand{\ca}[1]{\mathcal{#1}}

\newcommand{\R}{\mathbb{R}}
\newcommand{\Rd}{\R^d}

\newcommand{\E}{\mathbb E}

\newcommand{\N}{\mathbb{N}}

\newcommand{\norm}[1]{\|#1\|}
\newcommand{\abs}[1]{|#1|}

\newcommand{\loss}{L}
\newcommand{\risk}[3]{\mathcal{R}_{#1,#2}(#3)}

\definecolor{fhcolor}{rgb}{0.523, 0.235, 0.625}
\newcommand{\fh}[1]{\textcolor{fhcolor}{(Fanghui: #1)}}
\definecolor{mscolor}{rgb}{0.323, 0.6, 0.155}
\newcommand{\ms}[1]{\textcolor{mscolor}{(Max: #1)}}
\definecolor{iscolor}{rgb}{0.8, 0.2, 0.155}
\newcommand{\is}[1]{\textcolor{iscolor}{(Ingos: #1)}}
\newcommand{\clip}[1]{\widehat{#1}}  

\newcommand {\const}{C_{\mathrm{mul}}}
\newcommand {\constt}{C_{\mathrm{add}}}

\newcommand{\cone}{C_{\mathrm{reg}}}

\newcommand{\cnet}{C_{\mathrm{grid}}}

\newcommand{\shortrisk}[1]{\mathcal R(#1)}
\newcommand{\shortriskdash}[1]{\nabla\mathcal R(#1)}

\newcommand{\Tfrac}{\mathbb{T}_n}
\newcommand{\Tfracc}{\mathbb{T}}

\newcommand{\RDB}[1]{\mathcal R^*_{#1,D}}
\newcommand{\lambdaa}{\lambda}
\newcommand{\gsrisk}[1]{\nabla \mathcal R({#1})}
\newcommand{\srisk}[1]{\mathcal R({#1})}
\newcommand{\Fspace}{F}
\newcommand{\disjointcupintext}{\mathbin{\vcenter{\hbox{$\dot\cup$}}}}

\newcommand{\mdash}{M'}

\newcommand{\myqquad}{\qquad\qquad}
\newcommand{\mycdot}{\,\cdot\,}
\newcommand{\intd}{\, \mathrm{d}}

\renewcommand{\a}{\alpha}
\renewcommand{\b}{\beta}
\newcommand{\g}{\gamma}
\newcommand{\e}{\varepsilon}
\newcommand{\lb}{\lambda}
\newcommand{\p}{\varphi}
\newcommand{\vt}{\vartheta}
\newcommand{\Om}{\Omega}
\renewcommand{\t}{\tau}
\newcommand{\U}{\Upsilon}

\newcommand{\eul}{\mathrm{e}}

\newcommand{\D}{\mathrm{D}}
\renewcommand{\P}{\mathrm{P}}
\newcommand{\PX}{\P_X}
\newcommand{\sA}{\ca A}

\newcommand{\cmul}{C_{\mathrm{mul}}}
\newcommand{\cadd}{C_{\mathrm{add}}}

\newcommand{\frk}[1]{\mathfrak{#1}}

\newcommand{\sL}[2]{{\ca L_{#1}(#2)}}
\newcommand{\Lx}[2] {{L}_{#1}(#2)}
\newcommand{\sLx}[2]{{\ca L_{#1}(#2)}}

\newcommand{\inorm}[1]{\Vert #1 \Vert_\infty}
\newcommand{\snorm}[1]{\Vert #1 \Vert}

\newcommand{\radd}[1]{\mathrm{Rad}_{D}(#1,n)}

\newcommand{\denseblo}{\frk D}

\newcommand{\RP}[2]{{{\cal R}_{#1,\P}(#2)}}
\newcommand{\RPB}[1]{{{\cal R}_{#1,\P}^{*}}}
\newcommand{\RPxB}[2]{{{\cal R}_{#1,\P,#2}^*}}
\newcommand{\RDxB}[2]{{{\cal R}_{#1,D,#2}^*}}
\newcommand{\RD}[2]{{{\cal R}_{#1,\D}(#2)}}

\newcommand{\clippt} {{{}^{\textstyle \smallfrown}} \hspace*{-1.5ex} t\hspace*{0.5ex}}

\newcommand{\clippf} {{{}^{^{\textstyle \smallfrown}}} \hspace*{-2.1ex} f\hspace*{0.5ex}}
\newcommand{\clippfo} {{{}^{^{\textstyle \smallfrown}}} \hspace*{-2.1ex} f}
\newcommand{\clippgo} {{{}^{{\textstyle \smallfrown}}} \hspace*{-2ex} g}
\newcommand{\clippdot} {{{}^{{\textstyle \smallfrown}}} \,\hspace*{-2.2ex} \,\mycdot\,}

\newcommand{\hfc}[1]{{{h}_{{{{}^{^{\scriptstyle \smallfrown}}} \hspace*{-1.8ex} f_#1}}}}
\newcommand{\hfco}{{{h}_{{{{}^{^{\scriptstyle \smallfrown}}} \hspace*{-1.8ex} f}}}}

\newcommand{\fDc}{{\clippfo_{D}}}
\newcommand{\fO}{{f_0}}
\newcommand{\fOc}{{\clippfo_0}}
\newcommand{\fDl}{f_{\D,\lb}}
\newcommand{\gDl}{g_{\,\D,\lb}}
\newcommand{\gPl}{g_{\,\P,\lb}}
\newcommand{\fTc}{\clippfo_{\D,\lb}}

\newcommand{\fpb}{{f_{L,\P}^*}}

\renewcommand{\theenumi}{\emph{\alph{enumi})}}

\endlocaldefs

\begin{document}

\begin{frontmatter}
%%%%%%%%%%%%%%%%%%%%%%%%%%%%%%%%%%%%%%%%%%%%%%
%%                                          %%
%% Enter the title of your article here     %%
%%                                          %%
%%%%%%%%%%%%%%%%%%%%%%%%%%%%%%%%%%%%%%%%%%%%%%
\title{Self-regularized learning methods}
\runtitle{Self-regularized learning methods}
%\title{Self-Regularized Learning Algorithms}
%\runtitle{Self-Regularized Learning Algorithms}
% \title{Self-Regularization is all you need: A framework for simple and min-max optimal learning rates}
%\title{A sample article title with some additional note\thanksref{T1}}
% \runtitle{Self-Regularization is all you need}
%\thankstext{T1}{A sample of additional note to the title.}

\begin{aug}
%%%%%%%%%%%%%%%%%%%%%%%%%%%%%%%%%%%%%%%%%%%%%%%
%% Only one address is permitted per author. %%
%% Only division, organization and e-mail is %%
%% included in the address.                  %%
%% Additional information such as            %%
%% identifying the corresponding author must %%
%% be included in in the Acknowledgments     %%
%% section if necessary.                     %%
%% ORCID can be inserted by command:         %%
%% \orcid{0000-0000-0000-0000}               %%
%%%%%%%%%%%%%%%%%%%%%%%%%%%%%%%%%%%%%%%%%%%%%%%
\author[A]{\fnms{Max}~\snm{Sch\"olpple}\ead[label=e1]{max.schoelpple@mathematik.uni-stuttgart.de}},
\author[B]{\fnms{Liu}~\snm{Fanghui}\ead[label=e2]{fanghui.liu @sjtu.edu.cn}}
\and
\author[B]{\fnms{Ingo}~\snm{Steinwart}\ead[label=e3]{ingo.steinwart@mathematik.uni-stuttgart.de}}
%%%%%%%%%%%%%%%%%%%%%%%%%%%%%%%%%%%%%%%%%%%%%%
%% Addresses                                %%
%%%%%%%%%%%%%%%%%%%%%%%%%%%%%%%%%%%%%%%%%%%%%%
\address[A]{Institute for Stochastics and Applications, University of Stuttgart
\printead[presep={ ,\ }]{e1,e3}}

%\address[B]{Department of Computer Science, University of Warwick\printead[presep={ ,\ }]{e2}}
\address[B]{School of Mathematical Sciences, Institute of Natural Sciences and MOE-LSC, Shanghai Jiao Tong University \printead[presep={ ,\ }]{e2}}
\end{aug}

\begin{abstract}
%We suggest a new framework for the statistical analysis of learning algorithms for regression problems. 
We introduce a general framework for analyzing learning algorithms based on the notion of self-regularization, which captures implicit complexity control without requiring explicit regularization. 
This is motivated by previous observations that many algorithms, such as gradient-descent based learning, exhibit implicit regularization.
In a nutshell, for a self-regularized algorithm  the complexity of the predictor is inherently controlled by that of the simplest comparator achieving the same empirical risk.
This framework is sufficiently rich to cover both classical regularized empirical risk minimization and gradient descent. 
Building on self-regularization, we provide a thorough statistical analysis of such algorithms including minmax-optimal rates, where it suffices to show that the algorithm is self-regularized -- all further requirements stem from the learning problem itself. 
%\ms{I like the second part of this sentence -- intro material because of the length?}
Finally, we discuss the problem of data-dependent hyperparameter selection, providing a general result which yields minmax-optimal rates up to a double logarithmic factor and covers data-driven early stopping for RKHS-based gradient descent.
\end{abstract}

%%%%%%%%%%%%%%%%%%%%%%%%%%%%%%%%%%%%%%%%%%%%%%%%
%%%%%%%%%%%%%
%
%   USE THE KEYWORDS BELOW FOR THE PUBLICATION!
%
%%%%%%%%%%%%%%%%%%%%
%\begin{keyword}[class=MSC]
%\kwd[Primary ]{???}
%\kwd{???}
%\kwd[; secondary ]{???}
%\end{keyword}

%\begin{keyword}
%\kwd{Statistical learning}
%\kwd{???}
%\end{keyword}

\end{frontmatter}
%%%%%%%%%%%%%%%%%%%%%%%%%%%%%%%%%%%%%%%%%%%%%%
%% Please use \tableofcontents for articles %%
%% with 50 pages and more                   %%
%%%%%%%%%%%%%%%%%%%%%%%%%%%%%%%%%%%%%%%%%%%%%%
%\tableofcontents

%%%%%%%%%%%%%%%%%%%%%%%%%%%%%%%%%%%%%%%%%%%%%%
%%%% Main text entry area:

\section{Introduction}

% 
% \textbf{structural plan:}
% Move the definition idea much earlier and structure the introduction around:
% 
% Problem: many algorithms do not explicitly regularize.
% 
% Observation: nevertheless they seem to control complexity.
% 
% Proposal: define self-regularized learning as a property.
% 
% Applicability: broad class of algorithms satisfies it.
% 
% Main results: oracle inequalities + cross-validation.
% 
% Discuss theoretical methods?
% 
% 
% \vspace{20mm}
Many well-understood learning methods such as support vector machines (SVMs)  \cite{cortes1995support} and the LASSO \cite{tibshirani1996regression} can be viewed
as  \emph{(regularized) empirical risk minimizers} \cite{Vapnik1998}, where the learning method at hand
constructs its decision functions $f_{D,\lb}$ for a given data set 
$D=((x_1,y_1),\dots, (x_n,y_n))$ 
by solving an  optimization problem of the form 
\begin{align*}
f_{D,\lb} \in \argmin_{f\in F} \bigl( \lb \snorm f_F^p + \RD\loss f \bigr)\, , 
\end{align*}
where $\lb\geq 0$ is a hyperparameter, $p\geq 1$ is typically chosen by algorithmic considerations, $F$ is some Banach space consisting of functions, 
% with norm $\snorm\cdot_F$,
 $L:X\times Y\times \R\to [0,1\infty)$ is a (convex) loss, 
and $\RD\loss f$
denotes the empirical risk of the predictor $f:X\to \R$, that is 
\begin{align*}
\RD\loss f  := \frac 1n \sum_{i=1}^n L(x_i,y_i,f(x_i))\, .
\end{align*}
On the other hand, many learning algorithms used in practice do not explicitly solve such a regularized optimization problem, where the most prominent examples are (gradient) boosting and  neural networks. 
In particular, 
gradient descent, mirror descent, and related iterative methods \cite{ruder2016overview} generate a parameter path instead of performing explicit regularization, and  generalize nevertheless, even for over-parametrized models with more parameters than data points \cite{zhang2021understanding,hastie2019surprises,liang2020just}. 
The latter phenomenon is often intuitively  explained as \emph{implicit regularization}, or an \emph{implicit bias }towards simple predictors of these iterative methods \cite{neyshabur2017geometry,soudry2018implicit,chizat2020implicit,wu2025benefits}.
% Yet, a general and formal description of such a bias is lacking.

In this work, we extend this intuition by formally introducing and analyzing \emph{self-regularized} learning methods
 $f_{D,t}$, where $t\in T$ denotes hyperparameters such as the stopping time in gradient descent.
% 
% % , that is, methods which automatically generate predictors of sufficiently small norm.
% At the core of our results is the definition of \emph{self-regularized learning methods} $f_{D,t}$, where $t\in T$ denotes hyperparameters such as the stopping time in gradient descent.
In a nutshell, our notion of self-regularization ensures that, for all 
hyperparameters $t$ and all data sets $D$, the predictor $f_{D,t}$ satisfies  
\begin{align}
	\label{eq:simplified_selfreg}
	\norm{f_{D,t}}_F \le \cadd + \cmul \cdot \inf\bigl\{ \norm{f}_F  : f\in F \mbox{ with } \risk \loss D f = \risk \loss D {f_{D,t}}            \bigr\}~,
\end{align}
where  $\cmul\ge 1$ and $\cadd\ge 0$ are some constants.
In other words, the norm of the predictor $f_{D,t}$ is not arbitrarily   larger than that of any comparator $f$ achieving the same empirical risk in $F$, which is in the spirit of complexity control \cite{bartlett2002sample}. For example, when $F$ is a Sobolev space, \eqref{eq:simplified_selfreg} implies that the predictors $f_{D,t}$ are similarly smooth as the ``Sobolev-smoothest'' function achieving the same risk. 
In this sense, \eqref{eq:simplified_selfreg} indeed ensures a form of regularization 
\emph{without }specifying how this regularization is obtained. Now the key insight is that, 
modulo a richness assumption on the predictors and some minor  technical details, see 
\cref{def:self-regularization} for the complete definition of self-regularization,   \eqref{eq:simplified_selfreg} is enough analyze learning algorithms in a state-of-the-art fashion. 
In more detail, the results presented in this work are three-fold.

First, we show that 
the class of self-regularized learning algorithms is sufficiently rich. 
In particular, it encompasses both Banach space based regularized empirical risk minimization under minimalistic assumptions, where we obtain $\cmul=1,\cadd=0$ in \cref{ex:rerm-is--self-reg},  
as well as Hilbert space based gradient descent with convex and  $M$-smooth losses, 
where we show self-regularization for $\cmul =2$ and $\cadd =0$.
% \ms{Ingo: please read carefully and remove this comment}
%We refer to \cref{ex:rerm-is--self-reg}, which shows that regularized empirical risk minimization is self-regularized with , and \cref{thm:gradient_descent_in_RKHS}, which shows that gradient descent for details.
%\fh{I suggest you to clearly state that, ERM is $(1,0)$-self regularized, GD in Hilbert spaces is $(2,0)$-self regularized etc.}

Second, we establish cornerstones of a statistical analysis for 
self-regularized learning methods. Namely, we first establish
an oracle inequality for regularized learning methods under minimal
assumptions on $L$, $F$, and $\P$, see \cref{thm:simple-analysis} and the subsequent 
discussion on the assumptions. In a nutshell, it seems fair to say that 
this first oracle inequality holds for basically all convex loss functions of practical 
interest. Moreover, it is possible to derive both consistency and learning rates 
from this oracle inequality. 
In a second step, \cref{thm:learning_rates_tempered_C_minimal_general_form} presents  another oracle inequality 
under more restricted assumptions including a so-called variance bound. 
These additional assumptions make it possible to derive learning rates that are 
asymptotically min-max optimal in the Hilbert space case.
Moreover, in many other cases, the resulting learning rates are either equal or even better 
than the state-of-the-art. We refer to the discussion following \cref{thm:learning_rates_tempered_C_minimal_general_form}. 

Third, a crucial aspect of these oracle inequalities is that they require 
a suitable, and in general data-dependent choice of the hyperparameters. 
For regularized empirical risk minimization such a choice is usually done 
with the help of a validation data set, and the corresponding statistical analysis 
is well-established. 
Considering gradient descent with the number of iterations being the 
hyperparameter, however, it is a priori not clear how to choose this hyperparameter in a fully \emph{data-dependent} manner \emph{without} sacrificing the rates obtained 
by \cref{thm:learning_rates_tempered_C_minimal_general_form}. 
We address this question by providing a general framework for an analysis of learning with data-dependent hyperparameter selection under the assumption that the learning method fulfills a stronger variant of self-regularization, see \cref{def:explicit-risk-matching} for details. 
\Cref{lem:abstract_CV} shows that such learning methods achieve the same minmax-optimal learning rates as in \cref{thm:learning_rates_tempered_C_minimal_general_form} up to a double logarithmic factor, and \cref{prop:psi_gradient_descent} applies this framework to obtain such learning rates for gradient descent in RKHSs without further assumptions.

 The rest of this work is organized as follows: 
 Section \ref{sec:prelims} 
 collects all notation as well as some basic assumptions needed throughout this work. 
 In Section \ref{sec:self-reg} we then formally introduce self-regularized 
 learning algorithms and discuss this definition in some detail. In addition, 
 we provide important 
 examples of self-regularized learning algorithms.
\Cref{sec:statistical_analysis} contains our  statistical analysis of   self-regularized algorithms.
In \cref{sec:cross-val_for_gd} we analyze data-dependent parameter choices.
\Cref{sec:proofs} contains the proofs of central results, and the supplements contain further, mostly previously known, results concerning the methods used in the proof section.

\section{Preliminaries}\label{sec:prelims}

Given a measurable space $(\Om,\ca A)$, we write $\sLx 0{\Om}$ for the space of measurable functions $\Om\to \R$.
Analogously, $\sLx \infty{\Om}$ stands for the Banach space of all bounded, measurable functions $\Om\to \R$
 equipped with the supremum-norm $\inorm\cdot$.  Moreover, if $\nu$ is a measure on $(\Om,\ca A)$ and $q\in [1,\infty)$, then 
 $\sLx q \nu$ denotes the space of all $f\in \sLx 0 X$ that are $q$-integrable with respect to $\nu$.

%  As usual, $\Lx q \nu$ denotes the corresponding Banach space of $\nu$-equivalence classes.

%  ---------------- Losses -------------------------------
% \fh{for AoS submission, this part can be shorten.}
{\bf Loss function.} In the following, let $X$ be a set equipped with some $\sigma$-algebra $\ca A$ and $Y\subset \R$ be 
measurable.
Then a function $\loss:X\times Y\times \R\to [0,\infty)$ is called \emph{loss (function)}, if it is measurable. 
Moreover, a loss is said to be \emph{convex}, \emph{continuous}, or \emph{differentiable}, if for all 
$(x,y)\in X\times Y$ the function $t\mapsto \loss(x,y,t)$ has the corresponding property. In addition, following 
\cite[Definition]{StCh08} we say that $\loss$ is \emph{locally Lipschitz continuous}, if 
for all $a\geq 0$ there exists a $c_a\geq 0$ with 
\begin{align*}
\bigl|\loss(x,y,t)-\loss(x,y,t')   \bigr| \leq c_a |t-t'|\, 
\end{align*}
for all $(x,y)\in X\times Y$ and all $t,t'\in [-a,a]$. In this case $|\loss|_{a,1}$ denotes the smallest such constant $c_a$. 
We say that $\loss$ is $M$-smooth for $M>0$, 
if $\loss$  is differentiable and for all $s,t\in\R$ we have
\[\abs{\loss'(x,y,s)- \loss'(x,y,t)}\le M \abs{s-t}~.\]
Finally, we often need the following \emph{growth condition} 
\begin{align}\label{eq:loss-growth-order-q}
  \loss (x,y,t) &\le B (1+ \abs{t}^q)
\end{align}
for some fixed constants $B, q\geq 1$  and all 
$(x,y)\in X\times Y$ and $t\in \R$. Note that most commonly used loss functions 
satisfy these assumptions, see 
the discussion following 
\cref{thm:simple-analysis}.

%  ---------------- Risks -------------------------------

{\bf Risks.} Assume a loss $\loss:X\times Y\times \R\to [0,\infty)$ and a distribution $\P$ on 
$X\times Y$. 
We define the \emph{risk} of an $f\in \sLx 0X$ as 
\begin{align*}
\RP \loss f := \int_{X\times Y} \loss(x,y,f(x))\intd\P(x,y)\, ,
\end{align*}
with its infimum denoted as the \emph{Bayes risk} $\RPB \loss := \inf\{\RP \loss f: f\in \sLx 0 X\}$, and
$\fpb\in \sLx 0X$ as \emph{Bayes decision function} if $\RP\loss\fpb = \RPB\loss$.
In addition, for a non-empty $\ca F\subset \sLx 0 X$ we denote the best possible risk in $\ca F$ by 
\begin{align*}
\RPxB\loss{\ca F} := \inf\{\RP \loss f: f\in \ca F\}\, .
\end{align*}
As usual, the \emph{empirical risk} of an $f\in \sLx 0 X$ and a data set 
$D=((x_1,y_1),\dots,(x_n,y_n))\in (X\times Y)^n$ is the risk of the corresponding empirical measure, that is 
\begin{align*}
\RD \loss f := \frac 1 n \sum_{i=1}^n \loss(x_i,y_i,f(x_i))\, .
\end{align*}

%  \is{Also: it does not make sense to replace text such as \eqref{concentration:def-var-bound} by an older version.}
%  \is{Finally, if assumptions are split, then at the very same time all references to them need to be updated. otherwise we get into hell!!!}

%  --------------- BSFs etc -------------------------------------------------------

{\bf Function spaces.} Following usual conventions \cite{LiZhZh22a,BaDVRoVi23a} we say that a Banach space $F$ is 
a \emph{reproducing kernel Banach space (RKBS)} or \emph{Banach space of functions (BSF)} on $X$, if
$F$ consists of functions $f:X\to \R$ and for all $x\in X$ the
 point evaluation functional $\delta_x:F\to\R$ defined by 
 \begin{align*}
  \delta_x(f)\equalDef f(x)\, , \myqquad f\in F
 \end{align*}
 is continuous. Of course, if in this case $F$ is actually a Hilbert space, then $F$ is said to be
  \emph{reproducing kernel Hilbert space  (RKHS)}. 
  Note that if a sequence $(f_n)\subset F$ in an RKBS $F$ converges, then it also converges pointwise. Moreover,
  if we additionally have $F\subset \sLx \infty X$, then this convergence is even uniform, as the continuity of the 
  embedding map $F\hookrightarrow \sLx \infty X$ is automatically given by the closed graph theorem, see 
  \cite[Lemma 23]{ScSt25a}. 
  
{\bf Approximation.}  
Now we assume that the loss is continuous $\loss:X\times Y\times \R\to [0,\infty)$ and satisfies the growth condition \eqref{eq:loss-growth-order-q} for suitable constants $B, q\geq 1$.
  Moreover, let $\P$ be a distribution on $X\times Y$ and $F$ be an RKBS on $X$ with $F\subset \sLx q \PX$, where
  $\PX$ denotes the marginal distribution of $\P$ on $X$. Then we obviously have $\RP \loss f< \infty$ for all $f\in F$.
  Moreover, if $F\subset \sLx q \PX$ is even dense, then  \cite[Theorem 5.31]{StCh08} shows  
  \begin{align}\label{eq:risk-approx}
\RPxB L F  = \RPB L
\end{align}
since  \eqref{eq:loss-growth-order-q} guarantees that $\loss$ is a
so-called Nemitski loss of order $q$ in the sense of
\cite[Definition 2.16]{StCh08}. In other words, if we consider the entire space $F$ then no approximation error occurs. 
To describe the approximation error in a more quantitative form, we   consider 
the so-called 
\emph{approximation  error function} $A_p:[0,\infty) \to [0,\infty)$  defined by 
\begin{align*}
A_p (\lb) := \inf_{f\in F} \bigl( \lb \norm f_F^p + \RP L f \bigr) - \RPxB LF\, ,\myqquad  \lb\geq 0, 
\end{align*}
where $p\in [1,\infty)$ is a  constant.
An almost literal repetition of    \cite[Lemma 5.15]{StCh08} 
% \ms{I also use that in the proof of \cref{thm:cross_validation}! Should be put it in the appendix. I do not want to prove it, it is indeed a straightforward generalization.}
shows that $A_p$ is an increasing, concave, and 
continuous function with $A_p(0) = 0$. 
% 
% 
% 
% , and 
% Lemma \ref{lemma:eps_D_cr_erm} shows that self-regularized learning methods are closely related to regularized ERM. 
% In the statistical analysis, it will turn out that the term $\varepsilon_D = K\lambda(\norm {g}^p + 1)$ for a constant $K>0$ can be conveniently handled, as it is of the same order as the regularization term $\lambda \norm f^p$ in \cref{def:eps-cr-erm}.
% 
% continuous function with $A_p(0) = 0$. 
Moreover, modulo a constant, the functions $A_p$ are comparable for all $p\in [1,\infty)$, see 
Lemma \ref{lem:reg-trafo} for details. For this reason, we mostly consider the cases $p=1$ and $p=2$ although for some learning algorithms other choices of $p$ may look more natural.

  Recall from e.g.~\cite[Chapter 4.5]{CuZh07} and \cite[Chapters 6 \& 7]{StCh08}   that in the RKHS case 
  the behavior 
  of $A_2(\lb)$ for $\lb\to 0$ plays a central role for deriving learning rates. 
  The following assumption generalizes such conditions on $A_2$ to the RKBS case.

\begin{assumption}[\sc{Approximation error (AE)}]
  \label{ass:approx-error}
  \asslabel{approx-error}{AE}
The loss function $\loss:X\times Y\times \R \to [0,\infty)$ is continuous and satisfies \eqref{eq:loss-growth-order-q}.
Moreover, $\P$ is a distribution on $X\times Y$ and $F\subset \sLx \infty X$ is a RKBS that is dense in $\sLx q \PX$.
Finally, we have 
\begin{align}\label{eq:approx-error-behav}
A_2(\lb) \leq c_\b \lb^\b\, , \myqquad \lb \in [0,1]\,,
\end{align}
where $\b\in (0,1]$ and $c_\b>0$ are constants independent of $\lb$.
\end{assumption}
  
 The parameter $\beta$ measures how well the hypothesis space can approximate the optimal function. Larger $\beta$ means an easier problem.
  Note that  the case $\b=1$ is achieved 
  if there is an $f\in F$ with $\RP \loss f = \RPxB \loss F$, and in the RKHS case the latter condition is 
  also necessary for $\b=1$, see e.g.~\cite[Corollary 5.18]{StCh08}.
 Moreover, for the least squares loss, $\b\in (0,1)$, and dense $F\subset \sLx 2 \PX$, 
 there is direct characterization of \eqref{eq:approx-error-behav} in terms of so-called interpolation spaces,
 see \cite{SmZh03a} for the RKHS case, which however directly generalizes to RKBSs.
Finally, for RKHSs and the least squares loss, 
\cite[Corollary 4.7]{StSc12a} in combination with the already mentioned result of \cite{SmZh03a}
establishes a simple connection to the so-called source condition
used in e.g.~ \cite{CaDe07a,BlM2017,LiCe2018,LiRuRoCe2018} and many other papers
on kernel-based least squares regression.
  
{\bf Complexity measures.}    
Finally, we need another approximation theoretic concept, namely (dyadic) entropy numbers. 
To recall their definition, we fix two Banach spaces $E$ and $F$ and a bounded linear operator $A:E\to F$.
For $i\geq 1$, the $i$-th (dyadic)  \emph{entropy number} of $A$ is then defined by 
\begin{align*}
e_i(A) :=  \inf\biggl\{ \varepsilon > 0:  \exists y_1,\dots,y_{2^{i-1}} \in AB_E \mbox{ with } AB_E \subset \bigcup_{j=1}^{2^{i-1}} (y_j + \e B_F)  \biggr\} \, ,
\end{align*}
where $B_E$ and $B_F$ denote the closed unit balls of $E$, respectively $F$, and we use the convention
$\inf\emptyset := \infty$. Note that conceptually entropy numbers are the functional inverse of covering numbers,
see e.g.~\cite[Lemma 6.21 and Exercise 6.8]{StCh08} for the case of polynomial behavior. 

\section{Self-Regularization: Definition and Examples}\label{sec:self-reg}

% \subsection{Definition: Self-Regularized Learning Methods}

In this section we introduce the main concept of this paper, namely \emph{self-regularized learning}  algorithms. In addition, we present major examples of this class of algorithms. 

%In this section, we relax \cref{eq:risk_norm_connection1} to a weaker requirement which proves to be sufficient for a powerful theoretical analysis using ideas from regularized empirical risk minimization while being a flexible definition capable of capturing a broad class of algorithms. 
%We  introduce the main concept of this paper, namely {\bf self-regularized learning} algorithms. In addition we present major mexamples of this class of algorithms. 
%Here we consider two learning paradigms, one is \emph{self-regularized learning} (a geometric bias property) and another one is \emph{tempered learning} (risk-minimization property).

Recall that
a \emph{learning method} assigns to every data set $D \in (X\times Y)^n$
a function $f_D:X\to \R$. Following, \cite[Definition 6.2]{StCh08}
we say that such a learning method is \emph{measurable}, if the map $(D,x) \mapsto f_D(x)$ 
is measurable for all $n\geq 1$. In this case, maps such as $D\mapsto \RP L{f_D}$ are measurable, 
see e.g.~\cite[Lemma 6.3]{StCh08}.
Now, it seems fair to say that basically all learning methods of interest have some hyperparameters. 
In the following, we denote the set of such hyperparameters by $T$ and speak of a \emph{parametrized 
learning method} $(D,t)\mapsto f_{D,t}$ if for all $t\in T$ the maps $D\mapsto f_{D,t}$ give a  measurable
learning method.\footnote{To ensure the existence of certain  measurable learning methods  we assume throughout this paper 
that $(X\times Y)^n$ is equipped with a $\sigma$-algebra containing
 the universal completion of  $(\ca A\otimes \frk B)^n$, where $\frk B$ is the Borel $\sigma$-algebra on $Y$.
 For example, this is the case if we consider the $\P^n$-completion  of 
 $(\ca A\otimes \frk B)^n$ for a distribution $\P$ on $\ca A\otimes \frk B$.  }
% \is{the measurability stuff can be removed if we are short on space!}

Now we are ready to introduce self-regularized learning algorithms.

\begin{definition}[Self-regularized learning]\label{def:self-regularization}
Let $\loss:X\times Y\times \R \to [0,\infty)$  be a continuous loss,
 $F$ be a separable RKBS over $X$, and $T$ be a path-connected,  separable, and complete metric space.
%Consider the learning method $\bigcup_{n\ge 1}(X\times Y)^n \times T  \to \calF, (D,t)\mapsto f_{D,t}\in \calF$.
%   be a learning method in $F$.
Then we say that a parametrized learning method $(D,t)\mapsto f_{D,t}$ is 
\emph{$(\const, \constt)$-self-regularized in $F$}, for constants $\const\geq 1$, $\constt\geq 0$, 
if for all data sets $D$  the following properties hold:
\begin{enumerate}
    \item For all $t\in T$ we have $f_{D,t}\in F$ and all maps $(D,t)\mapsto f_{D,t}$ are measurable.
    \item  The function
    $t\mapsto \RD\loss{f_{D,t}}$ is continuous.
    \item There exists a $t_0\in T$ with $\risk \loss D {f_{D,t_0}} \ge \risk \loss D 0$ and $\snorm {f_{D,t_0}}\leq \constt$.
    \item We have 
    \begin{align}\label{eq:self-reg:richness}
        \inf_{t\in T} \risk \loss D{f_{D,t} }= \RDxB \loss F \, .
        \end{align}
        \item For all $t\in T$ and all $f\in F$ 
        with 
        $\risk\loss Df = \risk\loss D{f_{D,t}}$ 
        one has
        \begin{align} 
        \label{eq:norm_bound_selfreg}
        \norm{f_{D,t}}_F \le \const \cdot \norm f_F + \constt   ~.
        \end{align}
\end{enumerate}
%Moreover, if \eqref{eq:norm_bound_selfreg} even holds for all $f\in F$ with \(\risk\loss Df \le \risk\loss D{f_{D,t}}\), then we call the learning method \emph{strongly self regularized.}
\end{definition}

Before we consider two important example of self-regularized learning methods, let us briefly discuss the different 
assumptions in \cref{def:self-regularization}. Clearly, $f_{D,t}\in F$  is a necessary technical assumption as otherwise, for example, neither \emph{c)} nor \emph{e)}
would make sense. Moreover, the measurability of $(D,t)\mapsto f_{D,t}$  is of technical nature. 
For parametrized 
learning methods and separable $F$ one can show that it is satisfied as soon as $t\mapsto f_{D,t}$ is continuous. Obviously this continuity also implies \emph{b)}
by the assumed continuity of $L$. Finally, note that the metric only occurs indirectly via the  continuity in \emph{b)}. Consequently, it is actually sufficient to 
assume that $T$ is a Polish space, where we note that this assumption is only needed for measurability considerations, in e.g.~\cref{thm:self-reg-implies-crem}.

Assumption \emph{c)} ensures that the parametrized learning method can produce a ``sufficiently na\"ive'' decision function $f_{D,t_0}$. 
Clearly, this assumption is satisfied if for all data sets $D$ we have a potentially  data dependent, $t_0\in T$ with 
$f_{D,t_0} = 0$. 

In contrast, \emph{d)} guarantees that the  parametrized learning method can produce decision functions whose risks are arbitrarily close 
to the smallest possible empirical risk $ \RDxB \loss F$ in $F$.  In particular, if the parametrized learning method is able to produce 
\emph{interpolating} decisions functions, that is, 
\begin{align*}
\RD\loss {f_{D,t^*_D}} = \RDB \loss
\end{align*}
for suitable, data-dependent hyperparameters $t^*_D\in T$, then  \emph{d)} is satisfied. 
Ignoring the case $\RD\loss 0 =  \RDxB \loss F $, 
the conditions \emph{b)} to \emph{d)} together with the path-connectedness of $T$ guarantee
that the parametrized learning method is able to construct 
decision functions $f_{D,t}$ with $\RD\loss {f_{D,t}} = \a$ for all $\RDxB \loss F < \a \leq \RD\loss 0$.
In this sense, the parametrization of self-regularized learning algorithms is ``flexible''.

The key, and name-giving, aspect of self-regularized learning algorithms is condition \emph{e)}. Clearly, it can be reformulated as 
\begin{align*}
 \norm{f_{D,t}}_F \leq  \inf\bigl\{  \const \cdot \norm f_F + \constt : f\in F \mbox{ with } \risk\loss Df = \risk\loss D{f_{D,t}} \bigr\}\,.
\end{align*}
In particular, if there is a \emph{minimal norm predictor} $f_D^\dagger\in F$ for the risk $\risk\loss D{f_{D,t}}$, that is 
\begin{align*}
\RD\loss{f_D^\dagger} = \risk\loss D{f_{D,t}}
\end{align*}
and $\snorm{f_D^\dagger}_F \leq \snorm f$ for all $f\in F$ with $\RD\loss f = \risk\loss D{f_{D,t}}$, then \emph{e)} implies 
\begin{align*}
\norm{f_{D,t}}_F \le \const \cdot \norm {f_D^\dagger}_F + \constt \, .
\end{align*}
In this sense, the \emph{self-regularization inequality} \eqref{eq:norm_bound_selfreg} ensures that none of the decision functions $f_{D,t}$ produced by 
the self-regularized learning algorithm behaves ``too wildly''.

Our next goal is to present some examples of self-regularized learning algorithms. We begin 
by considering  regularized empirical risk minimization.
The proof if the following example can be found in Section \ref{sec:self-reg-proofs}.

\begin{example}\label{ex:rerm-is--self-reg}
 Let $\loss:X\times Y\times \R \to [0,\infty)$  be a convex and continuous loss, $p\in [1,\infty)$, and
 $F$ be a separable RKBS over $X$ such that for all $D$ and all
 $\lb>0$ there exists a $g_{D,\lb}\in F$ with 
 \begin{align*} 
 \lb \snorm{g_{D,\lb}}_F^p + \RD\loss {g_{D,\lb}} = \inf_{f\in F} \bigl(\lb \snorm f_F^p  + \RD\loss f   \bigr) \, .
 \end{align*}
 In addition, assume that $\lb\mapsto \RD\loss{g_{D,\lb}}$  is continuous for all $D$. 
 We define $T:=(0,\infty]$ and $g_{D,\infty} := 0$. Then $(D,\lb)\mapsto g_{D,\lb}$ is self-regularized with $\const = 1$
 and $\constt = 0$.
\end{example}

Recall that 
in the case of $F$ being an RKHS and $p=2$, the continuity assumed in \cref{ex:rerm-is--self-reg} is automatically 
satisfied, see e.g.~\cite[Corollary 5.19]{StCh08}. The same is true under mild technical assumptions for 
Lasso-type $\ell_1$-regularization with $p=1$ and the least squares loss, see e.g.~\cite[Lemma 2]{MaYu12a}.

% 
% 
% \subsubsection{Empirical risk minimization}
% We denote the empirical risk minimization in $F$ with regularization parameter $\lambda>0$ and power $p\ge 1$ by $(D,\lambda) \mapsto g_{D,\lambda}\in\calF$, that is, for any data set $D$
% we have
% \begin{align}
%     \label{eq:erm_power_p}
%     g_{D,\lambda}\in \argmin_{g\in\calF} \risk\loss D{g}+ \lambda \norm{\risk\loss D g}^p~.
% \end{align}
% If $\loss$ is convex and $p>1$ or if $\loss$ is strictly convex, then \cref{eq:erm_power_p} has a unique solution.
% It is straightforward to see that this learning method then is self-regularized with parameters $\const = 1$ and $\constt =0$.
% Note that the \emph{trajectories} $\{g_{D,\lambda} \mid \lambda>0\}$ do not depend on the choice of $p$. \todo{discuss $p$ somewhere, maybe prelims.}
% 

Another important example of a self-regularized learning method is gradient descent in an RKHS $H$, which we formally introduce here. 
Let
\(
\calR : H \to \mathbb{R}
\)
be Fréchet differentiable on $H$. 
For $f \in H$, the gradient
$\nabla \calR(f) \in H$ is the unique element of  $H$ satisfying 
\[
\langle \nabla R(f), g \rangle_H = \lim_{\varepsilon\searrow 0 }\frac{\calR (f+\varepsilon g)-\calR(f)}{\varepsilon}
\qquad
\text{for all } g \in H .
\]

Let $f_0 \in H$ be an initial element and let $(\eta_k)_{k \ge 0} \subset \R_{>0}$
be a sequence of step sizes. 
The \emph{gradient descent sequence $(f_k)_{k\in\N } \subset H$}
is recursively defined by
\begin{align}
\label{eq:def-gradient_descent}
f_{k+1} = f_k - \eta_k \nabla \calR(f_k)~,
\qquad k \ge 0 ~.
\end{align}
Note that for a differentiable loss function, the empirical loss $\calR _{\loss,D}$ is Fréchet differentiable for all data sets $D$.

\begin{definition}[$M$-smooth functional on RKHS]
  
Let \(H\) be a separable RKHS. A Fr\'{e}chet differentiable functional \(\calR : H \to \mathbb{R}\) is \(M\)-smooth if
\[
\|\nabla \calR(f) - \nabla \calR(g)\|_{H} \leq M \|f - g\|_{H}
\]
for all \(f, g \in H\).
\end{definition}
If $\loss$ is $M$-smooth and $\norm{H\hookrightarrow \calL_{\infty}(X)} < \infty$, then the empirical risk $\calR_{\loss,D}$ is $M\norm{H\hookrightarrow \calL_{\infty}(X)}^2$-smooth by \cref{prop:m-smoothness-of-risks}.

\begin{theorem}[Gradient descent in RKHS is self-regularized]\label{thm:gradient_descent_in_RKHS}
    Let $H$ be an RKHS and let the loss function $\loss$ be convex and $M$-smooth. Define $\mdash \equalDef M \norm{H\hookrightarrow \calL_{\infty}(X)}^2$. 
    Let the step sizes $(\eta_k)_{k\in\N_0}$ fulfill $\eta_k \le 1/\mdash $ for all $k\in\N_0$ and $\sum_{k=0}^{\infty} \eta_k = \infty$,
    and let $f_{D,k}$ be the corresponding gradient descent with initial value $f_{D,0} = 0$.
    Let $f_{D,t}$ be its time-continuous linear interpolation on the interval $T\equalDef [0,\infty)$.
    
    Then, $f_{D,t}$ is self-regularized with parameters $\const =2 ,\constt =0$. 
    
    % we have     \begin{align*}        \norm{f_{D,t}} \le          \inf_{\substackh\in H \\  \risk \loss D {h} = \risk \loss D {f_{D,t}}}}         2 norm{h}         +\norm{f_0}~.    \end{align*}    \ms{probably we can omit this equation, as we discuss self-regularization in detail earlier.}
    %Furthermore, for any $t_0\in\R_{\ge 0}$ and any comparator $h \in H $ satisfying     $\risk \loss D{h} = \risk \loss D {f_{D,t_0}}$    the function    \begin{align}         \label{eq:gradient_trajectory_decreasing}        k \in \{0,\dots,\lfloor t_0 \rfloor, _0 \} \mapsto \| f_{D,k} - h\|_2    \end{align}    is a decreasing function of $k$. 
\end{theorem}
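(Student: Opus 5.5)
The plan is to verify conditions \emph{a)}--\emph{e)} of \cref{def:self-regularization} for $f_{D,t}$, with $\const=2$ and $\constt=0$. Throughout I fix $D$ and write $\calR\equalDef\calR_{\loss,D}$, which is convex. By \cref{prop:m-smoothness-of-risks} it is $\mdash$-smooth. Everything rests on one \emph{one-step inequality}. Let $f\in H$, $0<s\le 1/\mdash$, $f^+\equalDef f-s\nabla\calR(f)$, and $h\in H$ arbitrary. Expanding the square gives
\begin{align*}
\norm{f^+-h}_H^2=\norm{f-h}_H^2-2s\scal{\nabla\calR(f)}{f-h}+s^2\norm{\nabla\calR(f)}_H^2 .
\end{align*}
Convexity gives $\scal{\nabla\calR(f)}{f-h}\ge\calR(f)-\calR(h)$. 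The descent lemma gives $\calR(f^+)\le\calR(f)-\tfrac s2\norm{\nabla\calR(f)}_H^2$. Combining the three,
\begin{align*}
\norm{f^+-h}_H^2\le\norm{f-h}_H^2+2s\bigl(\calR(h)-\calR(f^+)\bigr).
\end{align*}

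Next I show that $t\mapsto\calR(f_{D,t})$ is non-increasing. For $t\in[k,k+1]$ we have $f_{D,t}=f_{D,k}-(t-k)\eta_k\nabla\calR(f_{D,k})$. Put $g\equalDef\nabla\calR(f_{D,k})$ and $\varphi(s)\equalDef\calR(f_{D,k}-sg)$. Then
\begin{align*}
\varphi'(s)=-\scal{\nabla\calR(f_{D,k}-sg)}{g}\le-\norm g_H^2+s\mdash\norm g_H^2\le0
\qquad\text{for } s\le 1/\mdash,
\end{align*}
using $\mdash$-smoothness. Since $\eta_k\le1/\mdash$, the risk decreases on every segment, hence globally.

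\textbf{Condition e).} Fix $t\in[k,k+1)$ and $h\in H$ with $\calR(h)=\calR(f_{D,t})$. For each $j<k$, monotonicity gives $\calR(f_{D,j+1})\ge\calR(f_{D,t})=\calR(h)$. The one-step inequality with $s=\eta_j$ then gives $\norm{f_{D,j+1}-h}_H\le\norm{f_{D,j}-h}_H$. For the last partial step, I view $f_{D,t}$ as a single gradient step from $f_{D,k}$ with step size $s=(t-k)\eta_k\le1/\mdash$. Because $\calR(f_{D,t})=\calR(h)$, this gives $\norm{f_{D,t}-h}_H\le\norm{f_{D,k}-h}_H$. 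Chaining these back to $f_{D,0}=0$ yields $\norm{f_{D,t}-h}_H\le\norm h_H$, and the triangle inequality gives $\norm{f_{D,t}}_H\le2\norm h_H$.

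The interpolated non-integer times are the only delicate point. Treating them as shortened gradient steps, rather than as convex combinations of two iterates, is what makes the chain close.

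\textbf{Conditions a)--c).} For \emph{c)} take $t_0=0$: then $f_{D,0}=0$, so $\calR(f_{D,t_0})=\calR(0)$ and $\norm{f_{D,t_0}}=0=\constt$. For \emph{b)}, $t\mapsto f_{D,t}$ is continuous into $H$, hence uniformly (since $H\hookrightarrow\calL_\infty(X)$), so $t\mapsto\calR(f_{D,t})$ is continuous by continuity of $\loss$. For \emph{a)}, the gradient is $\nabla\calR(f)=\frac1n\sum_i\loss'(x_i,y_i,f(x_i))k(x_i,\cdot)$, so each iterate depends measurably on $D$. Together with continuity in $t$ and separability, the remark after \cref{def:self-regularization} yields joint measurability.

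\textbf{Condition d).} Fix $h\in H$ and sum the one-step inequality over $j=0,\dots,K-1$. Using monotonicity, $\calR(f_{D,j+1})\ge\calR(f_{D,K})$ for each such $j$, so
\begin{align*}
2\Bigl(\sum_{j<K}\eta_j\Bigr)\bigl(\calR(f_{D,K})-\calR(h)\bigr)\le\norm h_H^2 .
\end{align*}
Since $\sum_j\eta_j=\infty$, this gives $\inf_t\calR(f_{D,t})\le\calR(h)$ for every $h$, and therefore \eqref{eq:self-reg:richness}.
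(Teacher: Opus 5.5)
Your proof is correct and follows essentially the same route as the paper. The paper's Theorem on properties of gradient descent derives the same Fej\'er-type inequality from convexity plus the descent lemma. It then uses this inequality with monotonicity of the risk to get the norm bound $\norm{f_{D,m}}\le 2\norm{h}$, and sums it to obtain the richness condition d). It handles non-integer $t$ exactly as you do, by viewing $f_{D,t}$ as a gradient step with shortened step size $(t-\lfloor t\rfloor)\eta_{\lfloor t\rfloor}$. Your treatment of conditions a)--c) is slightly more explicit than the paper's, which simply asserts that they hold by construction.
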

%Also note that 
%if $f_{D,0} \ne 0$ then the gradient flow $f_{D,t}$ 
 %   fulfills all properties for strong self-regularization, \cref{def:self-regularization}, apart from \cref{def:self-regularization} c), where $\const =2,\constt = \norm{f_{D,0}}$, which can shown using the same arguments.
% \is{There is till the following remark:
% Note that the results for gradient can be extended to \emph{mirror descent} over RKBSs with $L_p$-type norms. Mirror descent in such spaces is a straightforward generalization of gradient descent in RKHSs, and the techniques used in the proof of \cref{thm:gradient_descent_in_RKHS} can be adapted to this more general setting. We omit the details for brevity.
% I guess we should remove it for the arxiv version, and come back to it the for AoS version if we are able to connect these examples to the theory. Otherwise, we should simply add a handwaiving remark mentioning them \dots}

\section{Statistical analysis: Oracle inequalities}\label{sec:statistical_analysis}

The goal of this section is to present aspects of the statistical analysis of self-regularized learning algorithms. 
Here, we first present an oracle inequality that  bounds the excess risk for a suitable, data-dependent parameter $t_D$ 
under somewhat minimal assumptions on $L$, $\P$, and $F$. We then refine the analysis under stronger 
assumptions including a variance bound  in a second oracle. In the Hilbert space case, this second oracle inequality
leads to known min-max optimal learning rates. Together, both oracle inequalities show that self-regularized learning algorithms 
can be successfully analyzed in a variety of different settings. 
We conclude this section by showing that in the case of gradient descent the data-dependent parameter can be found with the help
of a validation data set without sacrificing the rates.

\begin{theorem}\label{thm:simple-analysis}
Let  
$L:X\times Y\times \R\to [0,\infty)$ be a convex, locally Lipschitz continuous loss satisfying 
the growth condition \eqref{eq:loss-growth-order-q}
with constants $B, q\geq 1$.
Moreover, let $F\subset \sLx \infty X$ be separable RKBS satisfying both 
$\snorm{F\hookrightarrow \sL \infty X} \leq 1$
and 
\begin{align}\label{eq:sup-entropy-numbers}
e_{i} (\id:F\to \sL \infty X )
\leq a \, i^{-\frac 1{2\g}} \, , \myqquad i\geq 1,
\end{align}
% 
% 
% \eqref{eq:sup-entropy-numbers}
for some constants $a\geq 1$ and $\g> 0$ independent of $i$. 
Then, for all 
 probability measures $\P$ on $X\times Y$, all self-regularized learning methods $(D,t) \to f_{D,t}$ in $F$
 and  all $n\geq 1$, $\lb\in (0,1]$, and $\t\geq 1$ with 
 $\t^{1+\g}\leq n \lb^{2q(1+\g)}$
 there exists a measurable map $D\mapsto t_D$ such that    we have 
\begin{align}\label{eq:oracle-inequality-simple}
 \RP L{f_{D,t_D}}-\RPxB LF 
\leq  A_{1}(\lb) + K_{\mathrm{alg}}  \cdot \lb^{-q}  \cdot  n^{-\frac {1}{2+2\g}}  \sqrt{\t}
\end{align}
with probability $\P^n$ not less than $1-2\eul^{-\t}$, where 
\begin{align*}
 K_{\mathrm{alg}} := \Bigl( 8\const\bigl( (2B)^{q }  (a^\g B + 2^{2+q}) + \constt\bigr) \Bigr)^{q  }\, .
\end{align*} 
\end{theorem}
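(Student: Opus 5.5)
Our plan is to reduce the self-regularized method, for a suitably chosen data-dependent $t_D$, to an approximate regularized empirical risk minimizer with $p=1$. Then we run a standard oracle-inequality argument for such minimizers over a ball of $F$. The reduction step is presumably what the forward reference \cref{thm:self-reg-implies-crem} supplies, including measurability.

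\textbf{Step 1: choice of $t_D$.} Fix $\lb$, and let $g_D\in F$ be an (approximate) minimizer of $\lb\snorm f_F+\RD\loss f$. Set $\a:=\RD\loss{g_D}$. If $\a\le\RD\loss 0$ and $\a>\RDxB\loss F$, conditions b) and c), d) of \cref{def:self-regularization} together with path-connectedness of $T$ and the intermediate value theorem give a $t_D$ with $\RD\loss{f_{D,t_D}}=\a$. Condition e) then yields $\snorm{f_{D,t_D}}_F\le\const\snorm{g_D}_F+\constt$. The degenerate cases are handled separately:
\begin{itemize}
\item if $\a=\RDxB\loss F$, we take $t$ with risk within $\lb\snorm{g_D}$ of it;
\item if $\RD\loss 0$ is already minimal, we take $t_0$.
\end{itemize}
Consequently $f_{D,t_D}$ satisfies $\lb\snorm{f_{D,t_D}}\le\const(\lb\snorm{g_D}+\lb)+\lb\constt$ and $\RD\loss{f_{D,t_D}}\le\RD\loss{g_D}$ up to a small slack. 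Since $\lb\snorm{g_D}\le\RD\loss 0\le B$, the norm is bounded deterministically by $r:=\const B/\lb+\const+\constt\lesssim\lb^{-1}$. Measurability of $D\mapsto t_D$ is obtained by a measurable selection argument, using that $T$ is Polish and $(D,t)\mapsto\RD\loss{f_{D,t}}$ is Carath\'eodory.

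\textbf{Step 2: error decomposition.} Let $f_\lb$ (nearly) attain $A_1(\lb)$, so that $\lb\snorm{f_\lb}\le A_1(\lb)+\RPxB LF\le B$ and hence $\snorm{f_\lb}\le B/\lb$. We write
$$\RP L{f_{D,t_D}}-\RPxB LF\le A_1(\lb)+\bigl[\RP L{f_{D,t_D}}-\RD L{f_{D,t_D}}\bigr]+\bigl[\RD L{f_\lb}-\RP L{f_\lb}\bigr].$$
This uses $\RD L{f_{D,t_D}}\le\RD L{g_D}\le\lb\snorm{f_\lb}+\RD L{f_\lb}$, where the norm penalty of $f_{D,t_D}$ is simply dropped.

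\textbf{Step 3: concentration.} The last bracket is controlled by Hoeffding's inequality. Because $\snorm{F\hookrightarrow\sL\infty X}\le 1$, the loss is bounded by $B(1+(B/\lb)^q)\lesssim\lb^{-q}$, giving a term $\lesssim\lb^{-q}\sqrt{\t/n}$ with probability $1-\eul^{-\t}$. The middle bracket is bounded by a uniform deviation over the ball $rB_F$. On $[-r,r]$ the loss is locally Lipschitz, and by convexity together with the growth condition, $|L|_{r,1}\lesssim B(2r)^{q-1}$ up to constants. The supremum bound $\lesssim r^q$ combined with \eqref{eq:sup-entropy-numbers} gives covering numbers $\log\mathcal N(rB_F,\e)\lesssim(ar|L|_{r,1}/\e)^{2\g}$. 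A union bound over an $\e$-net plus Hoeffding then yields a deviation of order
$$r^q\Bigl(\e^{\,}+\bigl((a/\e)^{2\g}+\t\bigr)^{1/2}n^{-1/2}\Bigr)\quad\text{(after rescaling $\e$)}.$$
Choosing $\e\sim n^{-1/(2+2\g)}$ balances this to $r^q n^{-1/(2+2\g)}\sqrt\t$, which, inserting $r\lesssim\lb^{-1}$, is the claimed $\lb^{-q}$ dependence. The condition $\t^{1+\g}\le n\lb^{2q(1+\g)}$ ensures that the $\sqrt{\t/n}$ pieces are dominated by the stated term. Collecting constants, using $(\const(\dots)+\constt)^q$, produces $K_{\mathrm{alg}}$, and the two probability events give $1-2\eul^{-\t}$.

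\textbf{Main obstacle.} We expect Step 1 to be the hardest: obtaining exact risk matching via path-connectedness, treating the boundary cases, and especially proving measurability of $t_D$. Step 3 is routine but constant-heavy.
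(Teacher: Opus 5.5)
Your overall route is the paper's. Step~1 is the risk-matching argument of \cref{thm:self-reg-implies-crem}, and Steps~2--3 reproduce the proof of \cref{concentration-hoeffding-applicat-simple-thm}: a ball of radius of order $\lb^{-1}$, a sup-norm $\e$-net, a convexity-based local Lipschitz bound, Hoeffding plus a union bound, and $\e\sim n^{-1/(2+2\g)}$.

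\textbf{The gap.} The step that fails as written is the degenerate case $\RD\loss 0=\RDxB\loss F$, where you set $t_D:=t_0$. Condition c) of \cref{def:self-regularization} only guarantees $\RD\loss{f_{D,t_0}}\ge\RD\loss 0$, not equality, so $\RD\loss{f_{D,t_0}}-\RD\loss{g_D}$ may be of order one. Then the inequality $\RD\loss{f_{D,t_D}}\le\RD\loss{g_D}+(\text{small slack})$ on which your Step~2 rests is false, and you pick up an additive constant. Taking $t$ from d) with $\RD\loss{f_{D,t}}$ close to $\RD\loss 0$ does not help by itself, because you then have no comparator with exactly the risk $\RD\loss{f_{D,t}}$ to feed into e).

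The missing idea, used in the paper's proof of \cref{thm:self-reg-implies-crem}, is to build the comparator from $f_{D,t_0}$ itself. Pick $t_n$ with $\RD\loss{f_{D,t_n}}\to\RD\loss 0$.
\begin{itemize}
\item If $\RD\loss{f_{D,t_n}}=\RD\loss 0$ for $t_0$ or some $t_n$, the comparator $0$ gives $\snorm{f_{D,t_n}}_F\le\constt$.
\item Otherwise $\RD\loss 0<\RD\loss{f_{D,t_n}}<\RD\loss{f_{D,t_0}}$ for large $n$. The intermediate value theorem for the continuous map $\a\mapsto\RD\loss{\a f_{D,t_0}}$ on $[0,1]$ yields $\a_n\in(0,1)$ with $\RD\loss{\a_n f_{D,t_0}}=\RD\loss{f_{D,t_n}}$. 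Then e) gives $\snorm{f_{D,t_n}}_F\le(\const+1)\constt$, while $\RD\loss{f_{D,t_n}}\le\RD\loss 0+\lb\epsilon$ for $n$ large.
\end{itemize}
Your radius then becomes $\const B/\lb+\const+(\const+1)\constt$, still of order $\lb^{-1}$. Your other degenerate case, $\RD\loss{g_D}=\RDxB\loss F<\RD\loss 0$, likewise needs a comparator for e). There the segment $\{\beta g_D:\beta\in[0,1]\}$ supplies one, provided you choose the target risk in $(\RD\loss{g_D},\RD\loss 0]$.

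\textbf{Comparison and smaller points.} With this repair the argument goes through, and your packaging is slightly leaner than the paper's. Since $\lb\snorm{g_D}_F\le\RD\loss 0+\text{slack}\le B+\text{slack}$ holds for every $D$, you never need the paper's first high-probability event. In the paper, \cref{concentration-hoeffding-applicat-simple-thm} is first applied to $g_{D,\lb}$, and the hypothesis $\t^{1+\g}\le n\lb^{2q(1+\g)}$ turns that bound into the constant bound $\hat\epsilon$ on $\epsilon_D$. Consequently your stated reason for this hypothesis is off: in your route it is not used at all, since $\sqrt{\t/n}\le n^{-1/(2+2\g)}\sqrt{\t}$ trivially.
\begin{itemize}
\item Take $D\mapsto g_D$ from \cref{lem:nice-cr-erm}. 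It is measurable and already ensures $\RDxB\loss F<\RD\loss{g_D}\le\RD\loss 0$ in the nondegenerate case, which removes your unhandled case $\RD\loss{g_D}>\RD\loss 0$.
\item For measurability of $t_D$, a selection based only on the Carath\'eodory map $(D,t)\mapsto\RD\loss{f_{D,t}}$ does not obviously suffice in the degenerate case. There the admissible set must also encode the norm bound, and $t\mapsto\snorm{f_{D,t}}_F$ need not be continuous. Apply Aumann's principle to the jointly measurable set of $t$ satisfying the final inequality, as the paper does.
\item Your claim that collecting constants yields exactly $K_{\mathrm{alg}}$ is asserted rather than checked.
\end{itemize}
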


Before we interpret \cref{thm:simple-analysis} we quickly discuss the assumptions: First, the local Lipschitz continuity is 
satisfied for essentially all margin-based losses for binary classification including the hinge loss,  the logistic loss for 
classification, and the (truncated) least squares loss, see e.g.~\cite[Chapter 2.3]{StCh08}. In addition,  the first two examples satisfy  \eqref{eq:loss-growth-order-q} for $q=1$, 
while the (truncated) least squares loss satisfies  \eqref{eq:loss-growth-order-q} for $q=2$.
Moreover, if we consider distance-based losses for regression and assume that $Y$ is bounded, then all commonly used losses are also 
locally Lipschitz continuous and satisfy \eqref{eq:loss-growth-order-q} for either $q=1$ or $q=2$. For example, this is true for
the pinball losses used for quantile regression with $q=1$, the least squares loss and the asymmetric least squares losses used for expectile regression with $q=2$.
Moreover, it is true for Huber's loss, the logistic loss for regression, and the $\e$-insensitive loss used in SVMs with $q=1$. We refer to 
\cite{FaSt17a,FaSt19a} for the asymmetric least squares loss and to 
\cite[Chapter 2.4]{StCh08} for the other losses mentioned.

The separability as well as the 
assumption  $F\subset \sLx \infty X$ are technical assumptions satisfied for essentially all RKBS of algorithmic interest. Moreover, note that 
$F\subset \sLx \infty X$ automatically implies the continuity of the resulting  embedding operator, see \cite[Lemma 23]{ScSt25a}. Consequently, 
the assumption $\snorm{F\hookrightarrow \sL \infty X} \leq 1$ can always be achieved by rescaling the norm of $F$. In fact, we only added this assumption 
to avoid additional constants in the oracle inequality. 

The entropy number assumption \eqref{eq:sup-entropy-numbers} is somewhat standard
in the statistical analysis of learning methods as  it can be reformulated 
as a polynomial upper bound on the log-covering numbers, see \eqref{eq:log-cover-bound} for a quick reference and 
\cite[Lemma 6.21 and Exercise 6.8]{StCh08} for more details.
Furthermore, in many cases we may know a Banach space $E$ with $F\hookrightarrow E\subset \sL \infty X$ and 
 a ``substitute'' entropy bound
 \begin{align}\label{eq:sup-entropy-numbers-substit}
 e_{i} (\id:E\to \sL \infty X )
\leq a' \, i^{-\frac 1{2\g}} \, , \myqquad i\geq 1,
 \end{align}
 where again $a'>0$ and $\g\in (0,1)$ are constants independent of $i$. In this case, we obtain \eqref{eq:sup-entropy-numbers} with  
 $a :=a' \cdot \snorm{\id:F\to E}$.    In particular, such substitute bounds \eqref{eq:sup-entropy-numbers-substit}
 are possible if $X\subset \Rd$ is, for example, a ball and $F$ consists of smooth functions, where smoothness
 is measured in either a Sobolev sense or in a $C^k$-sense. Indeed, if we choose $E$ to be the corresponding Sobolev or 
 $C^k$-space, then \eqref{eq:sup-entropy-numbers-substit} is known from the literature, see e.g.~\cite{EdTr96} for a detailed 
 account and \cite[Appendix A.5.6]{StCh08} for a short overview with additional references. For example, in the case $E=C^k(X)$ and $X\subset \Rd$ being the unit ball, we have \eqref{eq:sup-entropy-numbers-substit} with $\g = \frac {d}{2k}$, see 
 e.g.~\cite[Equation (A.45)]{StCh08} and the references mentioned there. Similarly, in the case $E=W^m_p(X)$ 
 with $m > d/p$ we have  \eqref{eq:sup-entropy-numbers-substit} with $\g = \frac   {d}{2m}$, see 
 e.g.~\cite[Equation (A.50)]{StCh08} for a translation of the substantially more general results in  \cite{EdTr96}.

Let us now take a look at the consequences of \cref{thm:simple-analysis}. To this end, we assume that $F\subset \sLx q\PX$ is dense, 
so that we have $\RPxB L F  = \RPB L$ as discussed around 
\eqref{eq:risk-approx}. If we then choose $\lb = \lb_n = n^{-\rho}$ with $0<\rho < (2q+2q\g)^{-1}$
the right-hand side of \eqref{eq:oracle-inequality-simple}  converges to $0$. Consequently, the learning method $D\mapsto f_{D,t_D}$ is 
consistent for $\P$, and it is even universally consistent, if $F\subset \sLx q\nu$ is dense for all probability measures $\nu$ on $X$.
The latter denseness assumption can be ensured, for example, if $X$ is a compact metric space and $F$ is universal in the sense of $F\subset C(X)$ being dense, where $C(X)$ denotes 
 the space of continuous functions $X\to \R$ equipped with the supremum norm.
 Indeed, this claim can be verified by 
  a literal repetition of 
 the proof of \cite[Corollary 5.29]{StCh08}.
 
 Moreover, it is also possible to derive learning rates under the Approximation Error Assumption \assref{approx-error}
 %\ref{ass:approx-error} on the approximation error function $A_2$.
Indeed, if \eqref{eq:approx-error-behav} is satisfied, then \cref{lem:reg-trafo} shows $A_1(\lb) \leq c \lb^{\frac{2\b}{1+\b}}$ for some constant $c>0$ and all $\lb>0$.
If we  ``asymptotically optimize'' the right-hand side of \eqref{eq:oracle-inequality-simple}   by choosing   $\lb = \lb_n = n^{-\rho}$ with 
\begin{align*}
\rho := \frac {1+\b} {2(1+\g)(q+q\b + 2\b)}
\end{align*}
we then obtain a learning rate of $n^{-2\b \rho/(1+\b)}$. 
Finally, an interesting observation is that the self-regularization constants $\const$ and $\constt$ may moderately grow with $n$ without losing some statistical control.
For simplicity we only  illustrate this in the case $\const = n^{\a}$ and  $\constt = 0$ with $0<\a < (2q+2q\g)^{-1}$. If we  then choose
$0<\rho<  (2q+2q\g)^{-1} - \a$ and $\lb_n:= n^{-\rho}$ we still get consistency. Moreover, in this regime we can also obtain rates, but not surprisingly, these are worse than the ones found above.

Our next goal is to present a refined analysis, which in the best cases
can produce learning rates up to $n^{-1}$. To this end, we need
to introduce a few more concepts. 

To begin with, we say that a loss $\loss:X\times Y\times \R\to [0,\infty)$ is 
\emph{clippable }at $M>0$  if it satisfies 
\begin{align*}
\loss(x,y, \clippt) \leq \loss(x,y,t)\, , \myqquad x\in X, y\in Y, t\in \R,
\end{align*}
where $\clippt$ denotes the value of $t$ clipped at $\pm M$, that is
\begin{align*}
\clippt := \begin{cases}
            t \, , & \mbox{ if } t\in[-M,M]\\
            M\, , & \mbox{ if } t>M\\
            -M \, , & \mbox{ if } t < -M \, .
        \end{cases}
\end{align*}
Informally speaking, if $\loss$ is clippable at $M$, then any prediction $t$ can be potentially improved by clipping it at $M$. This notion 
is somewhat folklore and 
probably goes back to 
\cite{Bartlett98a} in the context  of neural networks, see also \cite{BoEl02a,Chen2004Support,WuYiZh05a} in the context for 
SVM-like regularized empirical risk minimization schemes. Since then, it has been 
frequently used in the literature as it helps to improve supremum bounds required in many concentration inequalities 
including Hoeffding's, Bernstein's, and Talagrand's inequality.
 Fortunately,  many 
loss functions are indeed clippable, see e.g.~\cite[Lemma 2.23]{StCh08} for a simple characterization of convex, clippable
losses. In particular, all losses mentioned  below \cref{thm:simple-analysis} are clippable with the exception
of the logistic loss of classification. 
In addition, two-sided versions of the latter loss are again clippable, see e.g.~\cite[Lemma 2.7]{Steinwart09a},
where we note that these two-sided versions are closely related to the well-known heuristic of randomly flipping
some labels.

The following assumption collects the properties of losses we need for our refined analysis.
\begin{assumption}[\sc{Loss and distribution (LP)}]
    \label{ass:loss}
  \asslabel{loss}{LP}
    The loss $\loss:X\times Y\times \R \to [0,\infty)$ is locally Lipschitz continuous, convex,  and clippable at $M>0$. In addition, there are constants $B,c> 0$,  and $q\geq 1$ such that for all $(x,y)\in X\times Y$ we have both the growth condition \eqref{eq:loss-growth-order-q} and 
    \begin{align} \label{concentration:def-sup-bound-neu} 
    \loss (x,y,t) &\le B\, , &&   t\in [-M,M]\,. 
%     \\  \label{eq:loss-growth-order-q}
%        \loss (x,y,t) &\le B + c \abs{t}^q\, ,   && t\in \R   \, .
    \end{align}
   Finally, $\P$ is a distribution on $X\times Y$ for which 
   there exist a Bayes decision function $\fpb:X\to [-M,M]$ and constants
   $\vt\in [0,1]$ and $V\geq B^{2-\vt}$ such that   
\begin{align}\label{concentration:def-var-bound}
\E _{\P} \bigl( L\circ \clippf - L\circ \fpb    \bigr)^{2} & \leq  V \cdot  \bigl(\E _{\P} (L\circ \clippf - L\circ \fpb)    \bigr)^{\vt} 
\end{align}
for all $f\in \sLx 0X$,
where $L\circ g$ denotes the function $(x,y)\mapsto L(x,y,g(x))$.
\end{assumption}

Before we proceed, let us quickly note that 
the growth condition \eqref{eq:loss-growth-order-q} essentially  implies \eqref{concentration:def-sup-bound-neu}, as we have 
$ \loss (x,y,t) \leq B(1+|t|^q) \leq B+ BM^q$ by \eqref{eq:loss-growth-order-q}. 
Consequently, we can always ensure 
\eqref{concentration:def-sup-bound-neu} by suitably increasing the value $B$ in the 
growth condition \eqref{eq:loss-growth-order-q}. 

Moreover, the \emph{variance bound} \eqref{concentration:def-var-bound} is always true for $\vt = 0$. 
In the more interesting case $\vt>0$, however, the situation is more
complex. For example, for the least squares loss and bounded $Y$, 
the variance bound \eqref{concentration:def-var-bound} holds for all $\P$ with the optimal value $\vt=1$, see e.g.~\cite[Example 7.3]{StCh08} and the same is true for its asymmetric versions used for expectile regression, see \cite[Lemma 4]{FaSt19a}.
In addition, under a polynomial behavior of the modulus of convexity of a general, (locally) Lipschitz continuous and  convex $L$, distribution independent variance bounds 
\eqref{concentration:def-var-bound} can be established, see \cite[Lemma 7]{BaJoMc06a}. In particular, this is true 
for e.g.~the two-sided versions of the logistic loss for classification with $\vt = 1$, see also \cite[Proposition 2.9]{Steinwart09a}.
For other loss functions, however, \eqref{concentration:def-var-bound} only holds for $\vt>0$ under additional assumptions on $\P$. Such cases include the hinge loss,
where Tsybakov's noise condition   ensures \eqref{concentration:def-var-bound}, see 
e.g.~\cite[Theorem 8.24]{StCh08}, and the pinball loss, where a certain 
concentration around the estimated quantile is needed, see \cite{StCh11a}.

%  \is{I strongly think that putting all conditions in one assumption is the better way. Reasons: a) we save some space, which we certainly need; b) we have only one paragraph in which all examples of losses we deal with are discussed. this is way easier for the reader since otherwise we would have a paragraph after the first assumptoipn, and then another one after the second assumption. at the end, with will be cumbersome. }

With the help of these preparations we can now present our refined analysis for self-regularized 
learning algorithms.

\begin{theorem}\label{thm:learning_rates_tempered_C_minimal_general_form}
Let 
 $L:X\times Y\times \R\to [0,\infty)$ be a loss, $\P$ be a distribution on $X\times Y$, and 
 $F\subset \sLx \infty X$ be a separable RKBS such that 
 Assumptions 
 %\ref{ass:loss} 
 \assrefshort{loss} and  \assrefshort{approx-error}
 %\ref{ass:approx-error} 
are satisfied. Moreover, assume that 
 there exist constants $\g\in (0,1)$ and $a\geq 1$ such that 
\begin{equation}\label{concentration:sva-analysis-entropy}
\E_{D_X\sim\P_X^n} e_{i} \bigl(\id:F\to \Lx 2 {\D_X} \bigr)
\leq a \, i^{-\frac 1{2\g}}\, , \qquad\qquad i\geq 1.
\end{equation}
for all $n\geq 1$ and $i\geq 1$, where $\D_X$ denotes the empirical measure of the data set $D_X$.
Then there exists a constant $K\geq 1$ such that 
for all   self-regularized learning methods  $(D,t)\mapsto f_{D,t}$   in $F$ and all
    $n\ge 1$ there exists a measurable map $D\mapsto t_D$ such that for all 
     $\tau\ge 1$
    the inequality
    \begin{align}
        \label{eq:main_result_learning_rate}
        %n^{-\alpha} \norm {f_{D,t_n}}_\calF^p + 
        \risk \loss P {\clippfo_{D,t_D}} - \RPB\loss  < K \const   \tau \cdot n^{-\alpha} + 6 \const \constt \cdot n^{-\a \cdot \frac {1+\b}{2\b}}
    \end{align}
    holds with probability at least $1 - 6\eul^{-\tau}$ over the sampling of $D\sim P^n$, where
    \begin{align*}
        \alpha \equalDef \min \left\{ \frac{2\beta}{\beta(2-q)+q }, \frac{\beta}{\beta(2-\gamma-\vartheta+\gamma\vartheta)+ \gamma} \right\}~.
    \end{align*}
\end{theorem}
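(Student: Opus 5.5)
The plan is to reduce the self-regularized method to a clipped regularized empirical risk minimizer (CR-ERM) with an additive error, and then run the standard SVM-type oracle inequality with a Talagrand/peeling argument.

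\emph{Step 1: choosing $t_D$.} Fix $\lb=\lb_n:=n^{-\rho}$ with $\rho$ chosen later, and let $f_{P,\lb}\in F$ (approximately) attain $A_2(\lb)$. The target empirical risk level is $\a_D:=\RD\loss{f_{P,\lb}}$, or a slightly perturbed, clipped version of it.
\begin{itemize}
\item If $\RDxB\loss F<\a_D\le \RD\loss 0$, the argument following \cref{def:self-regularization} applies: conditions b)–d) together with path-connectedness of $T$ give a $t$ with $\RD\loss{f_{D,t}}=\a_D$.
\item If $\a_D>\RD\loss 0$, I take $t_D:=t_0$ from c).
\item If $\a_D\le\RDxB\loss F$, I take $t$ with $\RD\loss{f_{D,t}}$ within $\lb$ of $\RDxB\loss F$.
\end{itemize}
Measurability of $D\mapsto t_D$ will come from a measurable selection argument. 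This is presumably \cref{thm:self-reg-implies-crem}, which I will invoke; this is where the Polish structure of $T$ enters.

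\emph{Step 2: the CR-ERM bound.} In each case, e) gives $\norm{f_{D,t_D}}\le \const\norm{f_{P,\lb}}+\constt$ (or $\constt$ directly, via c)), and the empirical risk of $f_{D,t_D}$ is at most $\RD\loss{f_{P,\lb}}+\lb$. Squaring yields
\[
\lb\norm{f_{D,t_D}}^2+\RD\loss{\clippfo_{D,t_D}}\le 2\const^2\bigl(\lb\norm{f_{P,\lb}}^2+\RD\loss{f_{P,\lb}}\bigr)+2\lb\constt^2+\lb .
\]
The constants need care. To obtain linear dependence on $\const$, I would instead use the weighting $\lb\const^{-1}$ and $p=1$, then translate between $A_1$ and $A_2$ via \cref{lem:reg-trafo}. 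In any case, $f_{D,t_D}$ is an $\varepsilon_D$-approximate clipped regularized ERM with $\varepsilon_D\lesssim \const\lb(\norm{f_{P,\lb}}^2+1)+\const\constt\lb^{1/2}$-type terms.

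\emph{Step 3: the oracle inequality.} I then apply the general oracle inequality for CR-ERMs (\cite[Theorem 7.20]{StCh08}, generalized to RKBS). Its ingredients are:
\begin{itemize}
\item the variance bound \eqref{concentration:def-var-bound} and the sup bound \eqref{concentration:def-sup-bound-neu};
\item the expected entropy bound \eqref{concentration:sva-analysis-entropy}, which controls empirical Rademacher averages of the clipped balls $r B_F$ via local Lipschitzness on $[-M,M]$;
\item Bernstein's inequality for the fixed function $f_{P,\lb}$.
\end{itemize}
This term is unclipped. Its sup norm is controlled through the growth condition \eqref{eq:loss-growth-order-q} and $\norm{f_{P,\lb}}_\infty\le\norm{f_{P,\lb}}\lesssim(A_2(\lb)/\lb)^{1/2}\lesssim \lb^{(\b-1)/2}$, giving a sup bound of order $\lb^{q(\b-1)/2}$. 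The resulting bound reads
\[
A_2(\lb)+\Bigl(\frac{a^{2}}{\lb^{\g}n}\Bigr)^{\frac1{2-\g-\vt+\g\vt}}+\Bigl(\frac{\lb^{q(\b-1)/2}\t}{n}\Bigr)^{1/2}\text{-type terms}+\varepsilon_D ,
\]
holding with probability $1-6\eul^{-\t}$ after union bounds.

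\emph{Step 4: balancing.} Plugging in $A_2(\lb)\le c_\b\lb^\b$ and optimizing $\lb=n^{-\rho}$ against the entropy term yields the second exponent in $\a$. Balancing against the Bernstein term of the unclipped comparator yields the first, $2\b/(\b(2-q)+q)$. The additive $\constt$ term scales like $\lb^{1/2}\cdot\lb^{\b/2}$-ish, giving $n^{-\a(1+\b)/(2\b)}$ with $\lb=n^{-\a/\b}$.

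\emph{Main obstacle.} The hardest part is Step 3 in the RKBS setting. I must control the unclipped comparator's concentration with exponent $q$, and keep the final dependence linear in $\const$ rather than quadratic. My first attempt would be to rescale $\lb\mapsto\lb/\const$ in the regularized functional.
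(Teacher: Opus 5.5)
Your Steps 1--2 do not give Step 3 what it needs. Matching the empirical risk of the fixed population approximator $f_{P,\lambda}$ (with $p=1$) yields, even in the good case $\RDxB{\loss}{F}<\RD{\loss}{f_{P,\lambda}}\le\RD{\loss}{0}$, only a comparison with that single function:
\[
\lambda\snorm{f_{D,t_D}}_F+\RD{\loss}{f_{D,t_D}}\le \lambda\snorm{f_{P,\lambda}}_F+\RD{\loss}{f_{P,\lambda}}+\lambda(\const-1)\snorm{f_{P,\lambda}}_F+\lambda\constt .
\]
This is not the $\varepsilon_D$-CR-ERM property relative to $\inf_{f\in F}(\lambda\snorm{f}_F+\RD{\loss}{f})$ that you claim. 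The gap $\lambda\snorm{f_{P,\lambda}}_F+\RD{\loss}{f_{P,\lambda}}-\inf_{f\in F}(\lambda\snorm{f}_F+\RD{\loss}{f})$ is random, and nothing in your plan controls it.

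Invoking \cref{thm:self-reg-implies-crem} does not repair this. That result matches the risk of the \emph{data-dependent} near-minimizer $g_{D,\lambda}$ of \cref{lem:nice-cr-erm}, and it returns the random accuracy $\epsilon_D=\lambda((\const-1)\snorm{g_{D,\lambda}}_F+(\const+1)\constt+\varepsilon)$. The missing step is how the paper controls this quantity. Take $p=1$ and $\lambda_n=n^{-\rho}$ with $\rho=\alpha(1+\beta)/(2\beta)$. First apply \cref{cor:rates-unified-new} to $g_{D,\lambda_n}$ itself, which is a $\lambda_n/n$-approximate RERM. This gives $\lambda_n\snorm{g_{D,\lambda_n}}_F\le(K+3)\tau n^{-\alpha}$, hence $\epsilon_D\le\const(K+4)\tau n^{-\alpha}+2\const\constt\lambda_n$. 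Then apply the same corollary a second time, to $f_{D,t_D}$. This yields:
\begin{itemize}
\item the probability $1-6e^{-\tau}$, from the two applications;
\item linear dependence on $\const$ automatically, so the rescaling $\lambda\mapsto\lambda/\const$ is unnecessary (it would only move $\const$, with non-linear exponents, into the entropy and comparator terms);
\item the second term of \eqref{eq:main_result_learning_rate} as $\lambda_n\constt$, consistent with your Step 4.
\end{itemize}

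Your comparator can also be made to work, but only if you say explicitly that the proof of \cref{concentration:rerm2} uses the CR-ERM property solely to compare the regularized empirical risk of $f_D$ with that of the fixed comparator $f_0$. Take $f_0:=f_{P,\lambda}$, chosen with an accuracy independent of $\tau$, since $t_D$ may not depend on $\tau$. The error is then deterministic, at most $(\const-1)(A_1(\lambda)+\lambda)+\lambda((\const+1)\constt+\varepsilon)$, and one application suffices. Applying the CR-ERM theorem as a black box, as written, is not justified.

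Several individual steps also fail as written.
\begin{itemize}
\item In the case $\RD{\loss}{f_{P,\lambda}}>\RD{\loss}{0}$ you set $t_D:=t_0$. But c) only gives $\RD{\loss}{f_{D,t_0}}\ge\RD{\loss}{0}$, with no upper bound, so your claim ``empirical risk at most $\RD{\loss}{f_{P,\lambda}}+\lambda$'' breaks. You must instead match the level $\RD{\loss}{0}$ and compare with $0$ in e). If $\RD{\loss}{0}=\RDxB{\loss}{F}$, use the scaled comparators $s f_{D,t_0}$ as in \cref{thm:self-reg-implies-crem}; this is where $(\const+1)\constt$ comes from.
\item In the case $\RD{\loss}{f_{P,\lambda}}=\RDxB{\loss}{F}$, e) needs an \emph{exact} risk match. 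You must produce a comparator $s f_{P,\lambda}$, $s\in[0,1]$, by the intermediate value theorem, which requires choosing $t$ with $\RD{\loss}{f_{D,t}}\le\RD{\loss}{0}$.
\item Your $p=2$ ``squaring'' puts the factor $2\const^2$ in front of $\RD{\loss}{f_{P,\lambda}}$. That is an error of order $\const^2\RPB{\loss}$, which does not vanish. The coefficient of the comparator's empirical risk must stay equal to one.
\item The unclipped comparator term must be linear in $\tau/n$. Since $0\le\loss\circ f_0-\loss\circ\bar f_0\le B_0$ for the clipped version $\bar f_0$, Bernstein's inequality applies with variance at most $B_0$ times the mean. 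After absorbing the mean, this leaves $B_0\tau/n$ with $B_0\asymp 1+(A_p(\lambda)/\lambda)^{q/p}$. The square-root term you display would balance to the exponent $2\beta/(4\beta+q(1-\beta))$, not the claimed $2\beta/(2\beta+q(1-\beta))$.
\end{itemize}
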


Note that the average entropy number bound 
\eqref{concentration:sva-analysis-entropy}
is satisfied whenever, the entropy number bound 
\eqref{eq:sup-entropy-numbers} of \cref{thm:simple-analysis} holds true. 
In particular, all remarks made for 
\eqref{eq:sup-entropy-numbers} in the discussion following  \cref{thm:simple-analysis} also hold true for \eqref{concentration:sva-analysis-entropy}.
Moreover, in the RKHS case,  \eqref{concentration:sva-analysis-entropy} can be linked to the 
 eigenvalue behavior of the associated integral operator, see \cite[Corollary 7.31]{StCh08} in combination with
 \cite[Section 3, in particular (16) and (17)]{Steinwart17a}.
 Namely, we have \eqref{concentration:sva-analysis-entropy} if and only if these eigenvalues decay with a polynomial rate $i^{-1/\g}$, that is, twice as fast.

Let us now compare the rates in \eqref{eq:main_result_learning_rate} 
with known rates. To this end, we restrict our considerations to the Hilbert space case. 
For the least squares loss, we then have $q=2$ and $\vt = 1$. Consequently,
the computation of $\a$ reduces to 
    \begin{align*}
        \alpha = \min \left\{ \b , \frac{\beta}{\beta +  \gamma} \right\}~.
    \end{align*}
Now, in the case $\b+\g\geq 1$ we have $\a =  \frac{\beta}{\beta +  \gamma}$
and therefore the rates in  \eqref{eq:main_result_learning_rate} equal 
the asymptotically minmax optimal rates derived in \cite{StHuSc09b}. 
In the remaining case, $\b+\g< 1$ it is known that better rates 
are possible under an additional, so-called embedding assumption, see 
again \cite{StHuSc09b} as well as \cite{FiSt20a} for a more detailed 
discussion including references to other results.
Moreover, for the pinball loss, we have $q=1$ and
the rates \eqref{eq:main_result_learning_rate} 
equal those obtained in \cite{StCh11a}, where we note that  
$\vt$ is distribution dependent and the rates are again in some cases known to be 
asymptotically minmax optimal.

Furthermore, in view of Theorem \ref{thm:gradient_descent_in_RKHS}
it also makes sense to compare to known rates for gradient descent
in RKHSs. Here, the most general rates we obtained in \cite[Theorem 8]{LiRoZh16a}.
Namely, for convex, and smooth  loss functions $L$ satisfying the growth condition \eqref{eq:loss-growth-order-q}
and the variance bound \eqref{concentration:def-var-bound} and RKHSs satisfying %Assumption \ref{ass:approx-error} 
\assref{approx-error}
 and the entropy number bound \eqref{concentration:sva-analysis-entropy} the authors obtain the rate
 $n^{-\a}$, where
\begin{align*}
\a = \frac {2\b}{(2\b+1)(2-\vt + \g \vt) + (q-1)(1+\g) }
\end{align*}
for gradient descent with  polynomially decreasing step size $\eta_k$
and \emph{distribution dependent} early stopping.
It is not hard to verify that in the most important cases
$q=1$ and $q=2$ our rates are \emph{always} better than those of \cite[Theorem 8]{LiRoZh16a}.
However, it seems fair to say, that \cite[Theorem 8]{LiRoZh16a} does not involve the clipping operation.
More recently,
\cite[Theorem 2]{StMuRo23a} improved the rates of \cite[Theorem 8]{LiRoZh16a} to
\begin{align*}
\frac{\ln n}{\sqrt n}
\end{align*}
in the case of $\b=1$, $\vt= 0$, and $L$ being (locally) Lipschitz continuous. Modulo the additional log term,
this rate equals ours, but again, we use the clipping operation, which potentially improves the rates.
On the other hand, our rates hold under significantly more general conditions. Finally,
\cite[Table 1]{LiRoZh16a} contains a comparison to additional results for related methods
in the case
$\b=1$, $\vt= 0$, and in the special case of $L$ being the least squares loss more results
such as \cite{YaRoCa07a,RaWaYu14a} are known. However, neither of these results obtain rates better than ours. 
In summary, it therefore seems fair to say that for RKHS-based gradient descent
our analysis  generalizes and improves existing results.

% 
% \textbf{
%     TODO:
%     }
%     \begin{itemize}
%  
%         \item Compare to existing results, for example to  the SVM result in \cite{StCh08}
%         
%         Consequently, the best learning rates we can achieve with our analysis
% are indeed independent of the chosen 
% exponent $p$ in the regularization term. However, the regularization sequences $(\lb_n)$
% needed to guarantee these best learning rates do depend on $p$.
% \fh{I also mention this previously in Theorem 5. It's still unclear to me the motivation behind different $p$.}
% These result match the observations made in \cite{MeNe10a,StHuSc09b}. Finally, 
% in the case in which $F$ is an RKHS and $L$ is the least squares loss,
% the rates of Corollary \ref{cor:rates-unified-new} matches those of \cite{StHuSc09b} in the case 
% \begin{align*}
% \frac{p+\b(2-p)}{2\b + q(1-\b)} \geq  \frac{p+\b(2-p)}{2\g    - 2\g \b    +   4\b   - 2\vt  \b + 2\vt \g \b    }\, , 
% \end{align*}
% that is, in the case $  q(1-\b)   \leq 2\g    - 2\g \b    +   2\b   - 2\vt  \b + 2\vt \g \b $.
% In the opposite case, \cite{StHuSc09b}  obtained better results under an additional assumption,
% called the ``the embedding property'', see also \cite{FiSt20a} for similar results that hold \emph{without} 
% clipping the predictor. \is{I am somewhat hesitating to investigate corresponding results with an embedding property. Any opinions?}\ms{I am unfamiliar with that and hence have no opinion}
% \fh{I suggest not because it may be a bit far away from the main story?}
%         
%     \end{itemize}

\section{Statistical analysis: Data-dependent parameter choice}\label{sec:cross-val_for_gd}
In \cref{sec:statistical_analysis} we obtained learning rates for a self-regularized learning algorithm $f_{D,t}\in F$ in an RKBS $F$ with \emph{oracle-chosen} hyperparameter $t \in T$.
In practice, such an oracle does not exist, and hence a theoretical analysis of  \emph{data-dependent} hyperparameter choice is pressing. 
In a nutshell, \cref{lem:abstract_CV} provides a general positive answer
to this question %under the requirements of the statistical analysis in \cref{sec:statistical_analysis},
under the additional requirement of being able to perform \emph{explicit risk matching}, which we introduce in \cref{def:explicit-risk-matching}.

While the main result presented here, \cref{lem:abstract_CV},
is built on the statistical foundation of \cref{thm:learning_rates_tempered_C_minimal_general_form},
it is possible to obtain analogous results in the setting of \cref{thm:simple-analysis} with the help of the same techniques
used in the proof of \cref{lem:abstract_CV}.
% using
% the same techniques as in the proof of \cref{lem:abstract_CV}, and to generalize the argumentation to other analysis techniques.
In both cases the learning rates are preserved  modulo a double logarithmic factor. % if we can do explicit risk matching.

In the following, we consider a  simple variant of a training-validation approach for a data-dependent parameter choice.
Given a training data set $D$ with $n$ elements, we split $D$ into $D=D_1\disjointcupintext D_2$ and choose a finite candidate set $\Tfrac \subset T$ of hyperparameters, where $\Tfrac$ depends only on $n$, and not on the samples themselves.
The data set  $D_1$ is used to train the learning method $f_{D_1,t}$ for all candidate hyperparameters $ t\in \Tfrac$.
The second data set $D_2$ is then used to determine the hyperparameter $t_{D_2}\in \Tfrac$ as
\begin{align}
    \label{eq:hyperparameter-cv-choice}
    t_{D_2}\in \argmin_{t\in\Tfrac} \risk \loss {D_2} {\clippfo_{t,D_1}}~,
\end{align}
which corresponds to common practice.
%The intuition behind this choice of hyperparameter is that for fix $t\in \Tfrac$, one has $\risk \loss {D_2} {f_{D_1,t}} \approx \risk \loss P {f_{D_1,t}}$, where the approximation could be quantified by a Bernstein-type inequality.
At the heart of our analysis of data-dependent hyperparameter selection is the following definition.

\begin{definition}\label{def:explicit-risk-matching}
    Let $F$ be an RKBS, $T$ be a set of hyperparameters, $(D,t)\mapsto f_{D,t}$ be a parameterized learning method
    with $f_{D,t}\in F$ for all $D$ and $t$, and
     $\Tfracc\subset T$.
    We call a map $\Psi: \Tfracc \to (0,\infty)$  \emph{risk matching} on $\Tfracc$
    if there is a constant $\cone>0$ for which
\begin{align}
        \label{eq:cv-cr-erm}
        \Psi(t) \norm {f_{D,t}}_F^p + \risk \loss D{{f_{D,t}}}
        &\leq 
        \inf_{f\in F} \left( \Psi(t) \norm f_F^p + \RD L f \right) + \epsilon_D~,
        \\ \nonumber \epsilon_D &
        %\equalDef \epsilon_{D_1,\Psi(t)}
        \equalDef \cone \Psi(t)(\norm{g_{D,\Psi(t)}}_F^p+1)
\end{align}
is fulfilled for all $t\in \Tfracc$ and all data sets $D$, where $g_{D,\Psi(t)}\in F$ is the decision function of a  measurable learning methods fulfilling
\begin{align*}
    %\label{eq:cv-comparator}
\Psi(t) \snorm {g_{D,\Psi(t)}}_F + \RD\loss {g_{D,\Psi(t)}}\leq \inf_{f\in F} \bigl( \Psi(t) \snorm f_F + \RD\loss f \bigr) + \Psi(t)~.
\end{align*}
\end{definition}
In simple words, a risk matching $\Psi$ maps $t\in\Tfracc$ to  a  regularization parameter $\Psi(t)>0$ such that
the predictor $f_{D,t}$ behaves like an approximate regularized ERM. Candidate sets of hyperparameters should then be chosen as $\Tfrac\subset\Tfracc$.

Note that the existence of a risk matching $\Psi$ is a stronger requirement than self-regularization: 
While any self-regularized learning method $f_{D,t}$ fulfills \eqref{eq:cv-cr-erm} for \emph{data-dependent} and only implicitly known $\Psi_D(t)$  by  \cref{thm:self-reg-implies-crem},
risk matching requires finding a data-independent and explicitly known $\Psi(t)$.

Given a risk matching $\Psi:\Tfracc\to (0,\infty)$ and a finite subset $\Tfrac\subset \Tfracc$ of hyperparameter candidates, 
the set of comparator regularization parameters $\Lambda \equalDef \{\lambda_1 < \dots <\lambda_m\} \equalDef \Psi(\Tfrac)$ is a central element of the analysis of the training-validation split. 
We call $\Lambda$ a \emph{geometric discretization } of $(0,1]$ if $\lambda_m\ge 1$, and denote the \emph{expansion factor} as
\begin{align*}
    \cnet \equalDef \max_{1\le i \le m-1} \frac{\lambda_{i+1}}{\lambda_i}~.
\end{align*}
The requirement $\lambda_m\ge 1$ ensures richness of the geometric discretization, and the smallest element $\lambda_1$ is crucial to the analysis as it describes the smallest regularization parameter $\lambda$ that can be imitated in \Eqref{eq:cv-cr-erm}.

Note that it is possible to obtain a geometric discretization $\Lambda$ with smallest element $\lambda_1$ of expansion factor  $\cnet$ by using $m =\lceil - \ln(\lambda_1) /\ln(\cnet)\rceil+1$ elements, choosing
\begin{align}\label{eq:geometric-cover}
    \Lambda =\lambda_1\{  \cnet^0, \cnet^1, \dots,  \cnet^{m-1}\}~.
\end{align}
The geometric discretization is used in \Cref{lem:finite_infimum_approximation},  which allows extending the $\lambda$-wise excess-risk estimates from $\lambda\in \Lambda=\Psi(\Tfrac)$ to arbitrarily chosen $\lambda\in (0,1]$.
Note that the geometric structure considered is an improvement over typically used uniform grids of $(0,1]$, as the latter
grow polynomially in $n$, while geometric discretizations may grow logarithmically as indicated above.

\begin{theorem}[Abstract cross-validation]\label{lem:abstract_CV}
    Let $F$ be an RKBS over $X$, $L:X\times Y\times \R\to [0,\infty)$ be a loss, and $\P$ be a distribution on $X\times Y$
    such that all assumptions of \cref{thm:learning_rates_tempered_C_minimal_general_form} are satisfied for $F$, $L$, and $\P$.
%
%     and  $p\ge 1$.
%     Let the assumptions of \cref{thm:learning_rates_tempered_C_minimal_general_form}
%     hold for the measure $P$ over $X\times Y$ and the loss $\loss$,  and recall the parameters used therein. \ms{more precisely, we use all assumptions and parameters from \cref{thm:learning_rates_tempered_C_minimal_general_form} other than self-regularization! How to phrase that nicely?}
    Moreover, let $(D,t)\mapsto f_{D,t}$ be a learning method with risk matching $\Psi:\Tfracc\to (0,\infty)$ and
    $\Tfrac\subset \Tfracc$ be a finite set  such that
    \( \Lambda\equalDef\{\lambda_1<\dots<\lambda_m\} \equalDef \Psi(\Tfrac)  \)
    is a geometric cover with expansion factor $\cnet\ge 1$. % as in \eqref{eq:geometric-cover}.
    Finally, for fixed $n_1,n_2 < n$ let $D_2\mapsto t_{D_2}$ be a data-dependent, measurable
    hyperparameter choice fulfilling \eqref{eq:hyperparameter-cv-choice}.
    Then,
    \begin{align*}
        &\risk \loss P {\clippfo_{D_1,t_{D_2}}} \! -\! \RPB L
        \\  \leq&
        \cone
        \cnet
        K 
        \bigg(
        A_p(\lambda_1) 
        + \lambda_1
        + ( \t + \ln(1+\abs{\Tfrac})) n_1^{-\alpha}
        + \Big(\frac{\t +\ln (1 + \abs {\Tfrac})}{n_2}\Big)^{\!\frac 1{2-\vt}}
        \bigg)
    \end{align*}
    holds with probability at least $1- 2 e^{-\tau}$ for a constant $K$ depending neither  on $n_1,n_2, \tau,\Tfrac$ nor $ \Psi$,
    and
    \begin{align*}
        \a :=  \min\biggl\{\frac{2\b}{\b(2-q) + q} ,  \frac{\b}{\g +   \b(2 - \g -\vt + \vt \g)   }  \biggr\} \, .
    \end{align*}
%     \ms{attmepting a general formulation of an efficient  grid. I experimented with just stating the gd-grid, but then it feels oddly restrictive and possibly too mysterious to the reader. Now, the $2$ in $\abs \Tfrac \le 2  \lceil \ln(n)/\ln(\cnet) \rceil$ is weird, it is essentially there to prevent a new parameter, which would slightly influence $\tilde K$ and make everything look even more complicated. We should please discuss presentation.}
    In particular, for  $n_1= \lceil n/2 \rceil$, $n_2 = \lfloor n/2 \rfloor$ with $n\geq 2$  and $\Tfrac$
    with $\abs{\Tfrac} \le 2\lceil \ln(n)/\ln(\cnet) \rceil$   and $\lambda_1 \le n^{-1}$,
    we obtain
    \begin{align}
        \label{eq:cv-rates-logarithmically-optimal}
        \risk \loss P {\clippfo_{D_1,t_{D_2}}} \! -\! \RPB L
        \le
        \cone
        \cnet \tilde K (\t +\ln \ln n) n^{-\a}~,
    \end{align}
    where $\tilde K$ again is a constant depending neither  on $n_1,n_2, \tau,\Tfrac$ nor $ \Psi$.
    This is especially the case if $\Lambda = \Psi(\Tfrac)$ fulfills \eqref{eq:geometric-cover} for
    %\begin{align}
    %    \label{eq:simple_geometric_discretization}
    %    \Psi(\Tfrac)=\Lambda_n =\{n^{-1} \cnet^0, n^{-1}\cnet^1, \dots, n^{-1} \cnet^m\}
    %\end{align}
    $m = \lceil \ln(n)/\ln(\cnet) \rceil+1$ and $\lambda_1 \le n^{-1}$.
\end{theorem}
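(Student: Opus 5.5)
The plan is to split the excess risk of $\clippfo_{D_1,t_{D_2}}$ into a validation error and an oracle error over the candidate set $\Tfrac$, and to control the two pieces separately, conditioning on $D_1$ where needed.

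\textbf{Step 1 (oracle bound for each candidate).} Fix $t\in\Tfrac$ and set $\lambda:=\Psi(t)$. By \eqref{eq:cv-cr-erm}, $f_{D_1,t}$ is an $\epsilon_{D_1}$-approximate regularized ERM with regularization parameter $\lambda$, where $\epsilon_{D_1}=\cone\lambda(\norm{g_{D_1,\lambda}}_F^p+1)$. This is exactly the structure that drives the proof of \cref{thm:learning_rates_tempered_C_minimal_general_form}, which goes through the approximate-ERM reformulation of \cref{thm:self-reg-implies-crem}. I will therefore rerun that proof with $\lambda$ now deterministic rather than data-dependent. It combines the peeling/Talagrand bound with the entropy assumption \eqref{concentration:sva-analysis-entropy} and the variance bound \eqref{concentration:def-var-bound}. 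The slack $\epsilon_{D_1}$ is absorbed because the norm $\norm{g_{D_1,\lambda}}_F^p$ is itself controlled by $\lambda^{-1}(A_p(\lambda)+\text{stat. error})$. The result should be that, with probability $\geq 1-e^{-\tau'}$,
\begin{align*}
\risk\loss P{\clippfo_{D_1,t}}-\RPB L\le \cone K\bigl(A_p(\lambda)+\lambda+\tau' n_1^{-\a}\bigr)\,.
\end{align*}
Here the $n_1^{-\a}$ rate comes from balancing the norm-ball radius against the entropy at level $\lambda$. It may be cleaner to state this as a bound of the form $A_p(\lambda)+\lambda+\tau'(\lambda^{-\,\cdot}n_1^{-1})^{\cdot}$ and then use that this stochastic term is at most $\tau' n_1^{-\a}$ once $\lambda$ is optimal. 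I will choose $\tau'=\tau+\ln(1+\abs{\Tfrac})$ and take a union bound over $\Tfrac$.

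\textbf{Step 2 (grid approximation).} Let $\lambda^*\in(0,1]$ be the oracle choice. Using that $A_p$ is increasing and concave, \cref{lem:finite_infimum_approximation} gives some $\lambda_i\in\Lambda$ with
\begin{align*}
A_p(\lambda_i)+\lambda_i+\text{stat}(\lambda_i)\le \cnet\bigl(A_p(\lambda^*)+\lambda^*+\text{stat}(\lambda^*)\bigr)+A_p(\lambda_1)+\lambda_1\,.
\end{align*}
The additive terms cover the case $\lambda^*<\lambda_1$, and the requirement $\lambda_m\ge1$ covers the upper end. This step explains the factor $\cnet$ and the terms $A_p(\lambda_1)+\lambda_1$ in the bound.

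\textbf{Step 3 (validation).} Conditionally on $D_1$, the functions $\clippfo_{D_1,t}$, $t\in\Tfrac$, form a finite class of functions bounded by $M$. Bernstein's inequality together with the variance bound \eqref{concentration:def-var-bound} and a union bound then give the standard oracle inequality for empirical risk minimization over a finite class:
\begin{align*}
\risk\loss P{\clippfo_{D_1,t_{D_2}}}-\RPB L\le 6\min_{t\in\Tfrac}\bigl(\risk\loss P{\clippfo_{D_1,t}}-\RPB L\bigr)+c\Bigl(\frac{\tau+\ln(1+\abs{\Tfrac})}{n_2}\Bigr)^{\frac1{2-\vt}}\,,
\end{align*}
which is, for example, \cite[Theorem 7.2]{StCh08}. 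Combining Steps 1–3, integrating over $D_1$, and merging the two probability events yields the first claim.

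For the particular case, I will substitute $\lambda_1\le n^{-1}$, so that $A_p(\lambda_1)\lesssim\lambda_1^{2\b/(1+\b)}\le n^{-\a}$ by \cref{lem:reg-trafo}. The bound on $\abs{\Tfrac}$ gives $\ln(1+\abs{\Tfrac})\lesssim\ln\ln n$. Finally, $(\cdot/n_2)^{1/(2-\vt)}\le \cdot\, n^{-\a}$, because a case check shows $\a\le 1/(2-\vt)$. The main obstacle is Step 1: the proof of \cref{thm:learning_rates_tempered_C_minimal_general_form} has to be re-extracted in a form that is uniform in a deterministic $\lambda$ and that handles the error term $\epsilon_{D_1}$, which depends on $\norm{g_{D_1,\lambda}}$.
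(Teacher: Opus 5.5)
Your plan follows the paper's proof step for step. The paper applies \cref{concentration:bsf-svm-analysis} to each $f_{D_1,t}$ with $\lambda=\Psi(t)$, and controls the slack $\epsilon_{D_1}$ by applying the same theorem a second time to $g_{D_1,\Psi(t)}$, which is an approximate RERM with accuracy $\Psi(t)$. It then takes a union bound with $\tau+\ln(1+|\Tfrac|)$, uses \cite[Theorem 7.2]{StCh08} on $D_2$, passes from $\Lambda$ to $(0,1]$ via \cref{lem:finite_infimum_approximation}, and gets the rate from \cref{lem:hyper-param-opt}.

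The ``main obstacle'' you name is already settled in the paper. \cref{concentration:bsf-svm-analysis} holds for every fixed $\lambda>0$ and $p$, and for measurable data-dependent $\epsilon_D$, so nothing needs to be re-extracted from the proof of \cref{thm:learning_rates_tempered_C_minimal_general_form}.

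Your first display in Step 1 is, however, false as written. For $\lambda$ far from the optimal scale, e.g.\ $\lambda=\lambda_1\le n^{-1}$, the term $(\lambda^{2\gamma}n_1^p)^{-1/(2p-2\gamma-\vartheta p(1-\gamma))}$ is not $O(n_1^{-\alpha})$. Only your second variant works. Carry the $\lambda$-dependent stochastic terms through Step 2, and note that they are non-increasing in $\lambda$, which uses that $A_p(\lambda)/\lambda$ is non-increasing. This is what \cref{lem:finite_infimum_approximation} requires. Then let $n_1^{-\alpha}$ appear only at the very end via \cref{lem:hyper-param-opt}.

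The genuine slip is in the special case. The exponent in $A_p(\lambda_1)\lesssim\lambda_1^{2\beta/(1+\beta)}$ is what \cref{lem:reg-trafo} gives for $p=1$. In general it is $2\beta/(p+\beta(2-p))$, e.g.\ $\beta$ for $p=2$, which is the risk matching used for gradient descent.

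For $p=1$ your argument is complete: $q\ge 1$ gives $\beta(2-q)+q\ge 1+\beta$, hence $\alpha\le 2\beta/(1+\beta)$.

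For $p=2$ you would need $\alpha\le\beta$. This holds for $q=2$, where the first entry of the minimum defining $\alpha$ equals $\beta$. It fails, for instance, for $q=1$, $\vartheta=1$, $\beta+\gamma<1$: there $\alpha>\beta$, while \cref{lem:reg-trafo} only yields $A_2(n^{-1})\lesssim n^{-\beta}$. Indeed, the choice $\lambda_n=n^{-\rho}$ of \cref{lem:hyper-param-opt} then has $\rho>1$ and can lie below $\lambda_1$.

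The paper's proof has the same restriction, since it bounds this term by asserting $\alpha\le\beta$. Taking $\lambda_1\le n^{-2}$ for $p=2$ repairs it while keeping $|\Tfrac|=O(\ln n)$, and hence the $\ln\ln n$ factor. Your check $\alpha\le 1/(2-\vartheta)$ for the validation term is correct and more careful than the paper's.
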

Note that we can straightforwardly apply this theorem to regularized empirical risk minimization $g_{D,\lambda}$ with regularization parameter $\lambda\in (0,\infty)$,  cf.~\cref{ex:rerm-is--self-reg}, by choosing the risk matching $\Psi = \id :(0,\infty) \to (0,\infty)$.

For a discussion of the required setup, the various parameters, and the obtained learning rate $n^{-\alpha}$, refer to the discussion for \cref{thm:learning_rates_tempered_C_minimal_general_form}.
The following theorem yields the learning rates in \Eqref{eq:cv-rates-logarithmically-optimal} for gradient descent in Hilbert spaces.

\begin{theorem}\label{prop:psi_gradient_descent}
    Let the loss $\loss$ be convex and $M$-smooth, let $H$ be an RKHS and define $\mdash \equalDef M \norm{H\hookrightarrow \calL_\infty(X)}^2$. 
    Then, for gradient descent $f_{D,k}$ in $H$ with step sizes $(\eta_k)\subset (0,1 \wedge 1/\mdash)$  and initial value $f_{D,0}=0$, a  risk matching  $\Psi:\N\to (0,\infty)$ for $p=2$ is given by
    \begin{align*}
        \Psi(k) = \big( \sum_{k=0}^{m-1} \eta_k \big)^{-1}~,
    \end{align*}
    where $\cone = 16$.
    In particular, for constant stepsizes $\eta_k \equiv \eta \in(0,1\wedge 1/\mdash)$,
    the set of candidate stopping times $\Tfrac = \{2^0,2^1,\dots ,2^m\}$ with $m \equalDef \lceil \log_2(n) - \log_2(\eta) \rceil $
    yields a comparator set $\Lambda\equalDef \Psi(\Tfrac)$ 
    %for $m = \cleil \ln(n) / \ln(\Cgrid )$ 
    fulfilling \Eqref{eq:geometric-cover} with $\cnet =2$ and smallest element $\lambda_1 = \Psi(2^m) \le n^{-1}$.
    If $ n\ge \eta^{-1}+1$ we hence obtain the learning rate 
    \eqref{eq:cv-rates-logarithmically-optimal} for the corresponding gradient descent with the training-validation approach with $\Tfrac$
    provided that $H$, $L$, and $\P$ satisfy the assumptions of \cref{thm:learning_rates_tempered_C_minimal_general_form}.
\end{theorem}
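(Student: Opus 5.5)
The plan is to verify the risk matching inequality \eqref{eq:cv-cr-erm} for $p=2$ directly from the classical convergence analysis of gradient descent for convex, smooth objectives on $H$. Here $\Psi(k)=\Lambda_k^{-1}$ with $\Lambda_k:=\sum_{j=0}^{k-1}\eta_j$ (reading the displayed sum with upper index $k-1$). Once \eqref{eq:cv-cr-erm} is established, the statements about $\Tfrac$ reduce to bookkeeping, and the rate follows by invoking \cref{lem:abstract_CV}.

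\textbf{Step 1: the comparator inequality.} Fix $D$ and put $\calR:=\calR_{\loss,D}$, which is convex and $\mdash$-smooth by \cref{prop:m-smoothness-of-risks}. Since $\eta_j\le 1/\mdash$, the descent lemma gives $\calR(f_{j+1})\le\calR(f_j)-\frac{\eta_j}{2}\norm{\nabla\calR(f_j)}^2$. Combining this with convexity, the standard telescoping argument gives, for every $h\in H$ and every $j$,
\[
2\eta_j\bigl(\calR(f_{j+1})-\calR(h)\bigr)\le\norm{f_j-h}^2-\norm{f_{j+1}-h}^2 .
\]
Summing over $j<k$, using the monotonicity of $\calR(f_j)$, and recalling $f_0=0$, I expect to obtain
\[
\calR(f_k)+\frac{1}{2\Lambda_k}\norm{f_k-h}^2\le\calR(h)+\frac{1}{2\Lambda_k}\norm h^2 .
\]

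\textbf{Step 2: bounding $\norm{f_k}$.} From Step 1 we get $\norm{f_k-h}\le\norm h$ whenever $\calR(h)\le\calR(f_k)$. The comparator $h$ need not satisfy this, however, so I will instead derive the bound directly:
\[
\Psi(k)\norm{f_k}^2\le 2\Psi(k)\norm{f_k-h}^2+2\Psi(k)\norm h^2\le 4\bigl(\calR(h)-\calR(f_k)\bigr)+6\Psi(k)\norm h^2 .
\]
Adding $\calR(f_k)$ to both sides and choosing $h=g:=g_{D,\Psi(k)}$ should yield
\[
\Psi(k)\norm{f_k}^2+\calR(f_k)\le 4\bigl(\Psi(k)\norm g^2+\calR(g)\bigr)+\dots
\]
This still carries a multiplicative factor in front of $\calR(g)$, which is not acceptable. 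To remove it I must use that the regularized infimum for $p=2$ is at least $\calR(g)$ minus something controlled. The cleaner route is to keep $\calR(f_k)\le\calR(h)+\frac{1}{2}\Psi(k)\norm h^2$ from Step 1, add $\Psi(k)\norm{f_k}^2\le 4\Psi(k)\norm h^2+4(\calR(h)-\calR(f_k))$, and handle the excess term $\calR(h)-\calR(f_k)$ separately.

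\textbf{Main obstacle.} The excess term $\calR(h)-\calR(f_k)$ can only be bounded by something like $\calR(h)-\inf\calR$, which is not small. I therefore expect the hardest part to be relating $\inf_f\bigl(\Psi\norm f^2+\calR(f)\bigr)$ to the $p=1$ comparator $g$ of \cref{def:explicit-risk-matching}, whose definition involves $\Psi\norm g+\calR(g)$. My attempt will use the fact that a $p=1$ near-minimizer satisfies $\calR(g)\le\calR(f)+\Psi(\norm f-\norm g)+\Psi$ for all $f$. Taking $f$ to be the $p=2$ minimizer $f^*$ and using $\norm{f^*}\le\norm{f^*}^2+1$, I expect to show that $\calR(g)\le\inf_f(\Psi\norm f^2+\calR(f))+\Psi(2-\norm g)$. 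This absorbs all the extra terms into $\cone\Psi(\norm g^2+1)$ with $\cone=16$ after collecting constants.

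\textbf{Step 3: the discretization.} For constant $\eta$ we have $\Psi(2^i)=(2^i\eta)^{-1}$, so $\Lambda$ is a geometric grid with $\cnet=2$ and $\lambda_1=(2^m\eta)^{-1}\le n^{-1}$ by the choice of $m$. The largest element is $\eta^{-1}\ge1$, which holds since $\eta<1$. Moreover $\abs{\Tfrac}=m+1\le2\lceil\log_2 n\rceil$, which is where the condition $n\ge\eta^{-1}+1$ is used. With these facts, \cref{lem:abstract_CV} gives \eqref{eq:cv-rates-logarithmically-optimal}.
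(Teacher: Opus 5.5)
There is a genuine gap in Step~2: you never prove a norm bound of the form $\|f_k\|^2\le C(\|g\|^2+1)$, and for $p=2$ inequality \eqref{eq:cv-cr-erm} cannot hold without one. Your Step~1 is correct; it is the risk bound \eqref{eq:rkhs_gd_risk_bound} with $x^\dagger=g$, giving $\mathcal R(f_k)\le\mathcal R(g)+\tfrac{\Psi}{2}\|g\|^2$ with $\Psi:=\Psi(k)$, and it is also the second half of the paper's argument. But the inequality of Step~1 only yields $\|f_k-h\|\le\|h\|$ for comparators with $\mathcal R(h)\le\mathcal R(f_k)$, and the regularized solution $g$ need not be one. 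The leftover term $\mathcal R(g)-\mathcal R(f_k)$ needs a \emph{lower} bound on $\mathcal R(f_k)$. The only one that near-minimality of $g$ provides is $\mathcal R(f_k)\ge\mathcal R(g)+\Psi\|g\|^2-\Psi\|f_k\|^2-\Psi$, and inserting it turns your Step~2 estimate into a bound of the form $\Psi\|f_k\|^2\le4\Psi\|f_k\|^2+\dots$, which is vacuous. Your proposed remedy, comparing the power-one comparator with the $p=2$ infimum, only bounds $\mathcal R(g)$ from above and never touches this term. So no collection of constants yields $C_{\mathrm{reg}}=16$. The power-one norm in the definition of $g$ is evidently meant to be $\|\cdot\|_F^p$, since the proof of \cref{lem:abstract_CV} uses that $g$ satisfies \eqref{eq:cr-crm-4-bsf} with accuracy $\Psi(t)$. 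You may therefore take $g$ to be the exact (unique) minimizer of $\Psi\|\cdot\|^2+\mathcal R$. With that choice the step you call the main obstacle is trivial, but the norm bound is still missing.

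The paper closes the gap by citing \cite[Theorem 5.1]{wu2025benefits}, which gives $\|f_{D,k}\|\le4\|g\|$. Hence $\Psi\|f_k\|^2+\mathcal R(f_k)\le16\Psi\|g\|^2+\mathcal R(g)+\tfrac{\Psi}{2}\|g\|^2\le\inf_{f}\bigl(\Psi\|f\|^2+\mathcal R(f)\bigr)+16\Psi(\|g\|^2+1)$. You can also prove the factor $4$ directly. Since $\eta_j\le1/M'$, co-coercivity of $\nabla\mathcal R$ makes $x\mapsto x-\eta_j\nabla\mathcal R(x)$ nonexpansive. The optimality condition $\nabla\mathcal R(g)=-2\Psi g$ then gives $\|f_{j+1}-g\|\le\|f_j-g\|+2\eta_j\Psi\|g\|$. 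Summing over $j<k$, with $f_0=0$ and, in your notation, $\Psi\Lambda_k=1$, yields $\|f_k-g\|\le3\|g\|$. Alternatively, keep your Step~1 at $h=g$ and add the $2\Psi$-strong-convexity estimate $\mathcal R(g)-\mathcal R(f_k)\le\Psi\bigl(\|f_k\|^2-\|g\|^2-\|f_k-g\|^2\bigr)$. This gives $\|f_k-g\|^2\le4\langle f_k-g,g\rangle+\|g\|^2$, hence $\|f_k\|\le(3+\sqrt5)\|g\|$, at the price of a larger $C_{\mathrm{reg}}$. Step~3 is fine, except that $m+1\le2\lceil\log_2 n\rceil$ can fail by one (e.g.\ $n=4$, $\eta=1/3$). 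This is harmless, because only $\ln(1+|\mathbb{T}_n|)=O(\ln\ln n)$ enters.
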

Note that for $m \equalDef \lceil \log_2(n) - \log_2(\eta) \rceil $ we have $2^m\le 2 n\eta^{-1}$, that is, the required number of
gradient steps  is proportional to the data set size and antiproportional to the step size $\eta$. Also, the step size does not need to change with $n$. Finally, we obtain learning rates that are in many scenarios optimal up to a double logarithmic factor regardless of $\eta$, see the discussion following \cref{thm:learning_rates_tempered_C_minimal_general_form}.

\section{Proofs}\label{sec:proofs}

\subsection{Proofs for the Section \ref{sec:self-reg} --  Self-Regularization}\label{sec:self-reg-proofs}

\begin{proof}[Proof of \cref{ex:rerm-is--self-reg}]
We first note that for $\lb\in (0,\infty)$ the map $D\mapsto g_{D,\lb}$ is measurable by a simple 
adaptation of \cite[Lemma 6.23]{StCh08} and the assumed separability of $F$. Moreover, for $\lb = \infty$,
the measurability is obvious, and we also have both 
$\RD\loss {g_{D,\infty}} = \RD\loss 0$ and  $\snorm{g_{D,\infty}} = 0 = \constt$.

In addition, the  continuity at every $\lb \in (0,\infty)$ is satisfied by assumption.
Moreover, the continuity at $\infty$ follows from 
\begin{align*}
 \lb \snorm{g_{D,\lb}}_F^p  \leq \inf_{f\in F} \bigl(\lb \snorm f_F^p  + \RD\loss f \bigr)  \leq \RD\loss 0\, .
\end{align*}
In addition, $g_{D,\infty} = 0$ obviously satisfies $\snorm{g_{D,\infty}} = 0 = \constt$. 

To establish 
\eqref{eq:self-reg:richness} we first note that there is nothing to prove in the case $\RDxB \loss F = \RD\loss 0$. 
In the remaining case $\RDxB \loss F < \RD\loss 0$
we assume the converse that \eqref{eq:self-reg:richness} is false. Then 
 there exists an $\varepsilon>0$ with 
\begin{align*}
\RD\loss {g_{D,\lb}} \geq \RDxB \loss F + 2\e 
\end{align*}
for all $\lb>0$. In addition, there exists an 
 $f\in F$ with $f\neq 0$ and $\RD\loss f < \RDxB \loss F + \varepsilon$.  
For $\lb := \e \lb \snorm f_F^{-p}$ we then find 
\begin{align*}
\lb \snorm{f}_F^p + \RD\loss f 
= \e + \RD\loss f 
< \RDxB \loss F  + 2\e 
&\leq \RD\loss {g_{D,\lb}} \\
&\leq  \lb \snorm{g_{D,\lb}}_F^p + \RD\loss {g_{D,\lb}} \, .
\end{align*}
However, this inequality contradicts the definition of $g_{D,\lb}$.

Let us finally verify the self-regularization  inequality \eqref{eq:norm_bound_selfreg}. Clearly, for $\lb=\infty$ there is 
nothing to prove. For $\lb\in (0,\infty)$ we fix an $f\in F$ with $\RD\loss f = \RD\loss {g_{D,\lb}}$.
Now, if \eqref{eq:norm_bound_selfreg} was wrong for $\const = 1$
 and $\constt = 0$ we would have  $\snorm{g_{D,\lb}}_F > \snorm f_F$. This in turn would give 
 \begin{align*}
 \lb \snorm{f}_F^p + \RD\loss f <  \lb \snorm{g_{D,\lb}}_F^p + \RD\loss {g_{D,\lb}}\, , 
 \end{align*}
 which again contradicts the definition of $g_{D,\lb}$.
\end{proof}
The following well-known result efficiently estimates a gradient descent step for $M$-smooth risks. Note that by \cref{prop:m-smoothness-of-risks}, an $\mdash$-smooth loss $\loss$ leads to empirical risks which are are $\mdash\norm{H\hookrightarrow \calL_\infty(X)}^2$-smooth.
%A well known consequence of the $M$-smoothness of $\mathcal R$ is the following lemma.
\begin{lemma}\label{lemma:beta_smooth_well_known}
    %%%%%%%%%%%%Lemma 3.3.15 SLT II
    Let $H$ be an RKHS and $\mathcal R : H\to \R$ be $M$-smooth for some $M>0$.
    Then, for all $x_0,x\in H$ we have
    \begin{align*}
        \shortrisk{x_0-x} \le {g(x_0)}- \scal  {g'(x_0)}{x} + \frac{M}{2} \norm{x}^2~.
    \end{align*}
\end{lemma}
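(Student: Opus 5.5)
The inequality is the standard descent lemma, and I read $g$ in the statement as $\mathcal R$ (so $g(x_0)=\mathcal R(x_0)$ and $g'(x_0)=\nabla\mathcal R(x_0)$). I would prove it by reducing to a one-dimensional function along the segment from $x_0$ to $x_0-x$ and applying the fundamental theorem of calculus. Concretely, fix $x_0,x\in H$ and define $\varphi:[0,1]\to\R$ by $\varphi(s):=\mathcal R(x_0-sx)$.

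\textbf{Step 1: differentiating $\varphi$.} Since $\mathcal R$ is Fr\'echet differentiable and $s\mapsto x_0-sx$ is affine, the chain rule gives that $\varphi$ is differentiable with
\begin{align*}
\varphi'(s)=-\scal{\nabla\mathcal R(x_0-sx)}{x}\,.
\end{align*}
The paper defines the gradient through a one-sided limit $\varepsilon\searrow 0$. However, for a Fr\'echet differentiable functional this one-sided limit coincides with the two-sided derivative, so the formula above holds on all of $[0,1]$, with one-sided derivatives at the endpoints.

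\textbf{Step 2: continuity of $\varphi'$.} By $M$-smoothness, $s\mapsto\nabla\mathcal R(x_0-sx)$ is Lipschitz in $H$. Hence $\varphi'$ is Lipschitz, and in particular continuous, so
\begin{align*}
\varphi(1)-\varphi(0)=\int_0^1\varphi'(s)\intd s\,.
\end{align*}
This step is the only one needing any care; once the chain rule and the continuity of $\varphi'$ are in place, the rest is direct.

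\textbf{Step 3: the estimate.} Using Step 2, I would write
\begin{align*}
\mathcal R(x_0-x)-\mathcal R(x_0)+\scal{\nabla\mathcal R(x_0)}{x}
= -\int_0^1\scal{\nabla\mathcal R(x_0-sx)-\nabla\mathcal R(x_0)}{x}\intd s\,.
\end{align*}
By Cauchy--Schwarz and $M$-smoothness, the integrand is bounded in absolute value by
\begin{align*}
\norm{\nabla\mathcal R(x_0-sx)-\nabla\mathcal R(x_0)}_H\,\norm{x}_H\le Ms\norm{x}_H^2\,.
\end{align*}
Integrating over $s\in[0,1]$ gives $\tfrac M2\norm{x}_H^2$. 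Rearranging yields the claimed inequality
\begin{align*}
\mathcal R(x_0-x)\le\mathcal R(x_0)-\scal{\nabla\mathcal R(x_0)}{x}+\tfrac M2\norm{x}_H^2\,.
\end{align*}
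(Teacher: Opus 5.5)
Your proof is correct. You rightly read $g$ as $\mathcal R$ and $g'$ as $\nabla\mathcal R$, since the statement contains typos. The fundamental-theorem-of-calculus argument along $s\mapsto x_0-sx$, with the Cauchy--Schwarz/Lipschitz bound $Ms\norm{x}^2$ on the integrand, is exactly the standard descent-lemma proof. The paper gives no proof of its own and simply cites the inequality as well known, so your argument supplies that omitted reasoning. It uses only the Hilbert space structure, not convexity or the reproducing property.
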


\begin{theorem}[Properties of gradient descent]\label{thm:gd-is-fejer-monotone}
Let $H$ be a Hilbert  space and $\mathcal R:H\to [0,\infty)$ be convex and $M$-smooth for some $M>0$.
Let $(x_k)\in H$ be given by gradient descent with step size $(\eta_k) \subset (0, 1/M]$, cf.~\Eqref{eq:def-gradient_descent}.
Let $x^\dagger \in H$ and $m\ge 1$.
Then for all $k=0,\dots, m-1$ we have 
\begin{align}\label{thm:gd-is-fejer-monotone-h1}
\norm{x_{k+1} - x^\dagger}^2  
\leq 
 \norm{x_{k} - x^\dagger}^2   + 2\eta_k \bigl(\shortrisk{x^\dagger} -\shortrisk{x_m} \bigr)  \, .
\end{align}
%If there exists a $c>0$ such that
%In addition, if there exists a $c>0$ with $\eta_k \geq c$ for all $k\geq 1$, then  the following statements hold true:
In addition, the following statements hold true:
\begin{enumerate}
\item %If there exists an $x^*\in H$ with $g(x^*) = g^*:= \inf_{x\in H}g(x)$, then 
We have
\begin{align}
    \label{eq:rkhs_gd_risk_bound}
\Big(\sum_{k=0}^{m-1}\eta_k \Big) \Big(\shortrisk{x_m} - \shortrisk{x^\dagger} \Big) \leq  \sum_{k=1}^m \eta_k \bigl(\shortrisk{x_k} - \shortrisk{x^\dagger}\bigr) \leq \frac{\norm {x_0 - x^\dagger}^2}{2}\, .
\end{align}
If there exists  a minimizer $x^*\in H$ of $\mathcal R$, then consequently
$\shortrisk{x_m} \to \shortrisk{x^*}$ with $(\shortrisk{x_k}- \shortrisk{x^*})\in \ell_1$.
\item If $x^\dagger \in H$ fulfills $\shortrisk{x^\dagger} \le \shortrisk{x_m}$,
then 
\begin{align*}
\norm {x_m} \leq 2 \norm{x^\dagger} + \norm{x_0}\, .
\end{align*}
\end{enumerate}
\end{theorem}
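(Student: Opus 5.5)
The plan is the classical convergence argument for gradient descent on smooth convex functions, combined with a Fejér-type monotonicity estimate.

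\emph{Descent property.} Applying \cref{lemma:beta_smooth_well_known} with $x_0=x_k$ and $x=\eta_k\nabla\mathcal R(x_k)$, together with $\eta_k\le 1/M$, gives
\begin{align*}
\mathcal R(x_{k+1})\le \mathcal R(x_k)-\tfrac{\eta_k}{2}\norm{\nabla\mathcal R(x_k)}^2 .
\end{align*}
In particular, $k\mapsto\mathcal R(x_k)$ is nonincreasing.

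\emph{Proof of \eqref{thm:gd-is-fejer-monotone-h1}.} First expand
\begin{align*}
\norm{x_{k+1}-x^\dagger}^2=\norm{x_k-x^\dagger}^2-2\eta_k\scal{\nabla\mathcal R(x_k)}{x_k-x^\dagger}+\eta_k^2\norm{\nabla\mathcal R(x_k)}^2 .
\end{align*}
Convexity gives $\scal{\nabla\mathcal R(x_k)}{x_k-x^\dagger}\ge \mathcal R(x_k)-\mathcal R(x^\dagger)$. The descent property gives $\eta_k^2\norm{\nabla\mathcal R(x_k)}^2\le 2\eta_k(\mathcal R(x_k)-\mathcal R(x_{k+1}))$. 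Combining the two,
\begin{align*}
\norm{x_{k+1}-x^\dagger}^2\le \norm{x_k-x^\dagger}^2+2\eta_k\bigl(\mathcal R(x^\dagger)-\mathcal R(x_{k+1})\bigr).
\end{align*}
Since $k+1\le m$ and the risks are nonincreasing, $\mathcal R(x_{k+1})\ge\mathcal R(x_m)$, which yields \eqref{thm:gd-is-fejer-monotone-h1}.

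\emph{Statement a).} Summing the sharper inequality above over $k=0,\dots,m-1$ telescopes to
\begin{align*}
2\sum_{k=0}^{m-1}\eta_k\bigl(\mathcal R(x_{k+1})-\mathcal R(x^\dagger)\bigr)\le\norm{x_0-x^\dagger}^2 .
\end{align*}
Monotonicity of the risks then gives the first inequality in \eqref{eq:rkhs_gd_risk_bound}. There is an index mismatch here: the middle sum is written with weights $\eta_k$ attached to $x_k$, for $k=1,\dots,m$. What we actually obtain pairs $\eta_{k-1}$ with $x_k$. I will state and prove the bound with the weights as they come out of the telescoping, or, if the step sizes are monotone, absorb the mismatch accordingly.

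For the consequence, take $x^\dagger=x^*$ and note that $\sum_k\eta_k(\mathcal R(x_{k+1})-\mathcal R(x^*))<\infty$. Convergence $\mathcal R(x_m)\to\mathcal R(x^*)$ follows once $\sum\eta_k=\infty$. The $\ell_1$ claim needs $\eta_k$ bounded below, which is the situation implicitly intended (for example constant step sizes).

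\emph{Statement b).} This is where care is needed. If $\mathcal R(x^\dagger)\le\mathcal R(x_m)$, then every term $2\eta_k(\mathcal R(x^\dagger)-\mathcal R(x_m))$ is $\le0$. Iterating \eqref{thm:gd-is-fejer-monotone-h1} therefore gives $\norm{x_m-x^\dagger}\le\norm{x_0-x^\dagger}$. The triangle inequality then finishes the argument:
\begin{align*}
\norm{x_m}\le\norm{x^\dagger}+\norm{x_m-x^\dagger}\le\norm{x^\dagger}+\norm{x_0-x^\dagger}\le 2\norm{x^\dagger}+\norm{x_0}.
\end{align*}

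The main obstacle is bookkeeping rather than ideas. The key step is to use monotonicity of the iterates' risks to replace $\mathcal R(x_{k+1})$ by the final value $\mathcal R(x_m)$; this is what makes a single comparator with $\mathcal R(x^\dagger)\le\mathcal R(x_m)$ work for all $k<m$ simultaneously.
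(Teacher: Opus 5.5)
Your proposal is correct and follows the paper's proof essentially step by step. The steps match: expansion plus convexity, the descent estimate $\mathcal R(x_{k+1})\le\mathcal R(x_k)-\frac{\eta_k}{2}\norm{\nabla\mathcal R(x_k)}^2$ combined with monotonicity to pass from $\mathcal R(x_{k+1})$ to $\mathcal R(x_m)$, telescoping for a), and Fej\'er monotonicity plus the triangle inequality for b).

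Your bookkeeping remarks are also justified. The paper's own telescoping likewise yields the weights $\eta_{k-1}$ rather than $\eta_k$, and the consequence in a) needs $\sum_k\eta_k=\infty$ and $\inf_k\eta_k>0$. However, drop the fallback to monotone step sizes: the first inequality in \eqref{eq:rkhs_gd_risk_bound}, as literally written, already fails for $m=1$ under either ordering of $\eta_0\neq\eta_1$. So the $\eta_{k-1}$ version is the right statement, and it agrees with the printed one only for constant step sizes.
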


% --------------------------------------------------------------------------------------

\begin{proof}
%%%%%%%%lecture note artifact
    %We first note that the update \eqref{eq:classical-gd-update} generates the same sequence as 
%Algorithm 1 run with  $(\eta_k)$ and 
% $\a_{1,k} := \a_{2,k} := 0$ for all $k\geq 0$. 
For the proof of \eqref{thm:gd-is-fejer-monotone-h1} we first observe that \eqref{eq:def-gradient_descent} yields 
% a simple calculation shows   
\begin{align}
\nonumber&\norm{x_{k+1} - x^\dagger}^2
\\&= 
\nonumber
\norm{x_{k} - x^\dagger - \eta_k \shortriskdash{x_k}}^2 \\
&=  \nonumber
\norm{x_{k} - x^\dagger}^2   - 2\eta_k\langle \shortriskdash{x_k}, x_k-x^\dagger\rangle      +  \eta_k^2\norm{\shortriskdash{x_k}}^2 \\
&= \nonumber \norm{x_{k} - x^\dagger}^2   + 2\eta_k\langle \shortriskdash{x_k}, x^\dagger- x_k\rangle      +  \eta_k^2\norm{\shortriskdash{x_k}}^2 \\
&\leq \label{eq:norm_contraction_gd}\norm{x_{k} - x^\dagger}^2   + 2\eta_k \bigl(\shortrisk{x^\dagger} - \shortrisk{x_k} \bigr)  +  \eta_k^2\norm{\shortriskdash{x_k}}^2\, , 
\end{align}
where in the last step we used the convexity of $\mathcal R$, cf.~\cite[Theorem 2.1.11]{Zalinescu02}.

Combining \cref{lemma:beta_smooth_well_known} and $\eta_k \le 1/M$ yields for $k\ge 0$
\begin{align}
    \nonumber
    \srisk{x_{k+1}} &= \srisk{x_k -\eta_k \gsrisk{x_k}} \le \srisk{x_k} -\scal{\gsrisk{x_k}}{\eta_k \gsrisk{x_k}} + \frac{M}{2} \norm{\eta_k\gsrisk{x_k}}^2 
    \\    &\le 
    \label{eq:risk_falls}
    \srisk{x_{k}} - \frac{\eta_k}{2} \norm{\gsrisk{x_k}}^2~,
\end{align}
or equivalently $\eta_k\norm{\gsrisk{x_k}}^2\le 2(\srisk{x_k}-\srisk{x_{k+1}})$.
We straightforwardly obtain for any $k< m$
\begin{align*}
    \eta_k \norm{\gsrisk{x_k}}^2 \le \sum_{i=k}^{m-1} \eta_k \norm{\gsrisk{x_i}}^2 \le 2 (\srisk{x_k}-\srisk{x_m})~.
\end{align*}
Leveraging \eqref{eq:norm_contraction_gd}, this estimate
 yields
\begin{align*}
    \norm{x_{k+1}-x^\dagger}^2 &\le \norm{x_k -x^\dagger}^2 +2\eta_k (\srisk{x^\dagger} -\srisk{x_k}) +2 \eta_k (\srisk{x_k}-\srisk{x_m})\\
    &= \norm{x_k-x^\dagger}^2 + 2\eta_k (\srisk{x^\dagger} - \srisk{x_m})~.
\end{align*}

\begin{enumerate}
    \item The first inequality simply follows from the monotonicity of the sequence $(\srisk{x_k})$ established in \eqref{eq:risk_falls}. 
    Moreover, if we fix a $k\ge 0$ and consider \eqref{thm:gd-is-fejer-monotone-h1} with $m\equalDef k+1$, we obtain
    \begin{align*}
        2 \eta_k (\srisk{x_{k+1} - \srisk{x^\dagger}}) \le \norm{x_k -x^\dagger}^2 - \norm{x_{k+1} -x^{\dagger}}^2~.
    \end{align*}
    For arbitrary $m\ge 1$ we thus obtain
    \begin{align*}
        \sum_{k=1}^m \eta_k(\srisk{x_k} -\srisk{x^{\dagger}}) 
        &\le \frac 12 \sum_{k=1}^m (\norm{x_{k-1} -x^\dagger}^2 - \norm{x_k-x^\dagger}^2 ) 
        \\& = \frac 12 (\norm{x_0 -x^\dagger }^2 - \norm{x_m -x^\dagger}^2)
        \\ & \le \frac {\norm{x_0 -x^\dagger}^2}2 ~.
    \end{align*}
    \item For all $k=0,\dots,m-1$, the already established \eqref{thm:gd-is-fejer-monotone-h1} shows
    \begin{align*}
        \norm{x_{k+1}-x^\dagger}^2 &\le \norm{x_k-x^\dagger}^2 + 2\eta_k (\srisk{x^\dagger} - \srisk{x_m}) 
        \le
        \norm{x_k-x^\dagger}^2~.
    \end{align*}
    In other words, the finite sequence $\norm{x_0-x^\dagger},\dots,\norm{x_m-x^\dagger}$ is monotonously decreasing. This in turn yields 
    \begin{align*}
        \norm{x_m} &\le \norm{x_m-x^\dagger}+\norm{x^\dagger} \le \norm{x_0-x^\dagger} +\norm{x^\dagger}~,
    \end{align*}
    and another application of the triangle inequality gives the assertion.
\end{enumerate}

\end{proof}

% ---------------
\begin{proof}[Proof of \cref{thm:gradient_descent_in_RKHS}]
Formally, the time-continuous linear interpolation $f_{D,t}$ is given by
    \begin{align*}
        f_{D,t} \equalDef f_{\lfloor t \rfloor} -   (t - \lfloor t \rfloor) \eta_{\lfloor t\rfloor}
    \end{align*}
    for any $t\not\in\N_0$. That is, for such $t$ we virtually perform a shorter step of size $(t-\lfloor t \rfloor t ) \eta_{\lfloor t \rfloor}$ in comparison to the $\lfloor t\rfloor$-th gradient descent step, which is of size $\eta_{\lfloor t \rfloor}$.
    However, these intermediate steps are simply a uniform motion from $f_{\lfloor t \rfloor } $ to $f_{\lceil t \rceil}$. 

    By construction, properties a), b) and c)  of \cref{def:self-regularization} are fulfilled.
    \Cref{eq:rkhs_gd_risk_bound} of \cref{thm:gd-is-fejer-monotone} yields
    \begin{align*}
        \inf_{t\in T} \risk\loss D{f_{D,t}} \le \inf_{m \in\N_0} \risk\loss D{f_{D,m}} \le \inf_{m\in\N_0}\inf_{g\in H} \Big( \risk\loss D g +\frac{\norm{f_{D,0}-g}^2}{2 \sum_{k=0}^{m-1}\eta_k} \Big)~,
    \end{align*}
    and since $\sum_{k=0}^{\infty} \eta_k =\infty$ holds,
    \cref{eq:self-reg:richness} follows.

    \Cref{eq:norm_bound_selfreg} equals statement b) of \cref{thm:gd-is-fejer-monotone} for $t\in \N_0$. If $t\not \in \N_0$, we can use the same argument for gradient descent $\tilde f_{D,k}$ of modified step size $\eta_{\lfloor t \rfloor} \equalDef (t-\lfloor t \rfloor) \eta_{\lfloor t \rfloor}$ in the $\lfloor t \rfloor$-th step, which fulfills
    $\tilde f_{D,\lfloor t \rfloor+1} = f_{D,t}$.
\end{proof}

% --------------------------------------------------------------------------------------

\subsection{Proofs for Section \ref{sec:statistical_analysis} -- Preparations}\label{sec:stat-ana-preps}

The   goal if this subsection is to show that for suitable parameters, the 
output of self-regularized learning methods are approximate solutions of regularized
empirical risk functionals. To this end, our first lemma establishes 
the existence of regularized empirical risk minimizers satisfying some additional properties.

\begin{lemma}\label{lem:nice-cr-erm}
Let $\loss:X\times Y\times \R \to [0,\infty)$  be a continuous and  convex loss,
 $F$ be a separable RKBS over $X$  and $\lb>0$. Then for all $\epsilon>0$
 there exists a measurable learning method $D\mapsto g_{D,\lb}$ satisfying
\begin{align}\label{lem:nice-cr-erm-h1}
\lb \snorm {g_{D,\lb}}_F + \RD\loss {g_{D,\lb}}\leq \inf_{f\in F} \bigl( \lb \snorm f_F + \RD\loss f \bigr) + \lb \epsilon
\end{align}
as well as $\RDxB\loss F< \RD\loss{g_{D,\lb}} \leq \RD \loss 0$ if $\RDxB\loss F < \RD \loss 0$
and $g_{D,\lb} = 0$ else. %In addition, the maps $D\mapsto \snorm {g_{D,\lb}}_F$ are also measurable.
\end{lemma}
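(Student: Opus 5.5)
We split into the cases $\RDxB\loss F = \RD\loss 0$ and $\RDxB\loss F < \RD\loss 0$. Both conditions are measurable in $D$: $D\mapsto \RD\loss 0$ is measurable, and by separability of $F$ and continuity of $\loss$ the infimum $\RDxB\loss F$ can be taken over a countable dense subset $\{f_j\}\subset F$. In the first case we set $g_{D,\lb}:=0$. Then \eqref{lem:nice-cr-erm-h1} holds trivially, because $\lb\snorm f_F + \RD\loss f\geq \RDxB\loss F = \RD\loss 0$ for all $f\in F$.

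In the second case we first build an approximate minimizer measurably. Convexity of $\loss$ makes $f\mapsto \lb\snorm f_F+\RD\loss f$ continuous on $F$: point evaluations are continuous and $\loss$ is continuous, so each $f\mapsto \RD\loss f$ is continuous. Hence the infimum over $F$ equals the infimum over the dense sequence $(f_j)$. For each $D$ we take the smallest index $j$ for which
\begin{align*}
\lb\snorm{f_j}_F+\RD\loss{f_j} < \inf_{f\in F}\bigl(\lb\snorm f_F+\RD\loss f\bigr)+\lb\epsilon .
\end{align*}
This is a countable selection rule whose index sets are measurable, because both sides are measurable in $D$. So $D\mapsto f_{j(D)}$ is measurable in the sense that $(D,x)\mapsto f_{j(D)}(x)$ is measurable.

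Next we must enforce $\RDxB\loss F<\RD\loss{g_{D,\lb}}\le \RD\loss 0$. The upper bound almost comes for free, since $\lb\snorm g_F + \RD\loss g\le \RD\loss 0+\lb\epsilon$, but it is not exact. We therefore modify the construction. We compare against $0$ and, if needed, move along the segment $s f_j$ for $s\in[0,1]$. On this segment the map $s\mapsto\RD\loss{sf_j}$ is convex and continuous, and $\lb\snorm{sf_j}_F$ is nonincreasing as $s$ decreases. If $\RD\loss{f_j}>\RD\loss 0$, we choose the largest $s$ with $\RD\loss{sf_j}=\RD\loss 0$, or simply replace $f_j$ by $0$ when that is no worse; in either case the regularized objective does not increase, so \eqref{lem:nice-cr-erm-h1} is preserved.

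For the strict lower bound $\RD\loss g>\RDxB\loss F$: if the selected $g$ attains $\RDxB\loss F$, we replace it by $sg$ with $s<1$ close to $1$. By continuity the objective increases by less than the remaining slack (we use a strict inequality with slack $\lb\epsilon/2$ during selection). Convexity then gives $\RD\loss{sg}\le s\RD\loss g+(1-s)\RD\loss 0$, and this bound is strictly above $\RDxB\loss F$ only if equality fails. So we need to argue this point separately. We take $s$ from a countable grid $1-1/k$ and again pick the first admissible $k$ to keep measurability.

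The main obstacle is this last strictness. Convexity alone does not forbid $\RD\loss{sg}=\RDxB\loss F$ for all $s$ near $1$. However, if $\RD\loss{sg}=\RDxB\loss F$ held for all $s\in[s_0,1]$, then convexity together with $\RD\loss 0>\RDxB\loss F$ would force $s\mapsto \RD\loss{sg}$ to be strictly decreasing to the left of its minimal plateau. So instead we pick $s$ slightly below the left endpoint $s^*$ of that plateau, where $s^*>0$ and $s^*$ is measurable as an infimum over rationals. There the risk is strictly larger than $\RDxB\loss F$ and still at most $\RD\loss 0$, and the norm only decreases, so the objective changes by an arbitrarily small amount. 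Measurability of all these countable choices is routine bookkeeping.
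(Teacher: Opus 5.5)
Your proposal is correct and follows essentially the same route as the paper. You use the same three-way modification of an approximate minimizer: set it to $0$ when $\RDxB\loss F=\RD\loss 0$ or when its empirical risk exceeds $\RD\loss 0$, and in the plateau case $\RD\loss g=\RDxB\loss F<\RD\loss 0$ shrink $g$ to a multiple strictly below the left endpoint $\a_D^*>0$ of the interval of minimizers of $\a\mapsto\RD\loss{\a g}$. The remaining differences are cosmetic. The paper cites \cite[Lemma 7.19]{StCh08} for the initial measurable approximate minimizer. It takes the explicit factor $\a_D=\max\{1/2,1-\lb\epsilon/(2\RD\loss 0)\}$ and uses the convexity bound $\RD\loss{\a_D\a_D^* g}\le \a_D\RD\loss g+(1-\a_D)\RD\loss 0$ instead of continuity plus a grid search. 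It obtains measurability of $\a_D^*$ via Aumann's selection principle, whereas your infimum over rationals is more elementary and is valid because the level set is an interval containing $1$.
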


\begin{proof}
We first note that there exists a measurable learning method $D\mapsto g_{D,\lb}$  satisfying 
\begin{align}\label{lem:nice-cr-erm-h3}
\lb \snorm {g_{D,\lb}}_F + \RD\loss {g_{D,\lb}}\leq \inf_{f\in F} \bigl( \lb \snorm f_F + \RD\loss f \bigr) + \lb \epsilon / 2\, .
\end{align}
% as well as the measurability of  the maps $D\mapsto \snorm {g_{D,\lb}}_F$.
Indeed, this can be seen by repeating the proof of 
\cite[Lemma 7.19]{StCh08} without the clipping operation. In the following, we will modify this learning method in three steps so that the additional requirements are satisfied as well.

To begin with, we note that $D\mapsto \RDxB\loss F$ is measurable by the continuity of $\loss$ and the separability of $F$. Consequently, the event $Z_0 := \{D:  \RDxB\loss F = \RD \loss 0\}$ is measurable. Moreover, for $D\in Z_0$ we have 
\begin{align*}
\lb \snorm 0_F + \RD\loss 0 = \RDxB \loss F \leq \inf_{f\in F} \bigl( \lb \snorm f_F + \RD\loss f \bigr)
\end{align*}
and therefore redefining $g_{D,\lb}:= 0$ for $D\in Z_0$ gives a measurable learning algorithm 
satisfying \eqref{lem:nice-cr-erm-h1} and the additional requirement on $Z_0$.  
% In addition, the measurability of  the maps $D\mapsto \snorm {g_{D,\lb}}_F$ is preserved.

Similarly, the set $Z_1:= \{D:  \RDxB \loss F <  \RD \loss 0 <\RD\loss{g_{D,\lb}}\}$ is measurable and for $D\in Z_1$ we have 
\begin{align*}
\lb \snorm 0_F + \RD\loss 0 < \lb \snorm {g_{D,\lb}}_F + \RD\loss {g_{D,\lb}} \, .
\end{align*}
If we redefine $g_{D,\lb}:= 0$ for $D\in Z_1$ we thus obtain a measurable learning algorithm 
satisfying \eqref{lem:nice-cr-erm-h1} and $\RDxB\loss F < \RD \loss {g_{D,\lb}} \leq \RD \loss 0$ for all $D$ with 
$\RDxB\loss F < \RD \loss 0$.
% Furthermore, the measurability of  the maps $D\mapsto \snorm {g_{D,\lb}}_F$ is again  preserved.

Finally, the set $Z_2:= \{D:  \RDxB \loss F = \RD\loss{g_{D,\lb}}  <   \RD \loss 0 \}$ is also  measurable.
Let us now consider the map $r:[0,1]\to [0,\infty)$ defined by 
\begin{align*}
r(\a) := \RD\loss{\a g_{D,\lb}}\,, \myqquad \a\in [0,1].
\end{align*}
Clearly, $r$ is both continuous and convex. Moreover, we have 
$r(0) = \RD\loss 0$ and $r(1) = \RD\loss{g_{D,\lb}}$, and $r$ has a global minimum at $\a=1$.
Consequently, the set $\{\a: r(\a) = r(1)\}$ is both closed and convex, that is, a closed interval.
We can thus consider
\begin{align*}
\a_D^* := \min\{ \a\in [0,1]: r(\a) = r(1)\} \, .
\end{align*}
In the following, we use the shorthand $g_{D,\lb}^* := \a_D^* g_{D,\lb}$. Now recall that 
for $D\in Z_2$ we  have $\RD \loss 0 >  \RDxB \loss F \geq 0$, and hence 
\begin{align*}
\a:= \a_D := \max\biggl\{\frac 12 ,  1 - \frac {\lb \e}{2 \RD \loss 0}   \biggr\}
\end{align*}
satisfies both $\a\in [1/2,1)$ and $(1-\a) \RD\loss 0 \leq \lb\epsilon /2$.
In particular, we have $\a \a_D^* < \a_D^*$ and therefore also 
\begin{align*}
\RD \loss {\a g^*_{D,\lb}} = \RD \loss {\a \a_D^* g_{D,\lb}} = r(\a\a_D^*) > r(1) = \RD \loss {g_{D,\lb}} = \RDxB\loss F\, .
\end{align*}
% 
% 
% 
% 
% 
% Now, since $\loss$ is strictly convex, so is the map $\RD\loss\cdot:F\to [0,\infty)$. 
% Using the definition of $Z_2$ we conclude that $g_{D,\lb}$ is the only global minimizer of this map.
% Moreover, the definition of $Z_2$ also ensures $g_{D,\lb}\neq 0$, which together with $\a\neq 1$  implies that $\a g_{D,\lb} \neq g_{D,\lb}$. In particular, $\a g_{D,\lb}$ is not the global minimizer, that is 
% $\RD \loss {\a g_{D,\lb}} > \RDxB\loss F$.
Using the convexity of $r$ and $r(1) < r(0)$ we similarly obtain
\begin{align*}
\RD \loss {\a g^*_{D,\lb}} =  r(\a\a_D^*)  \leq  \a\a_D^*r(1) + (1-\a\a_D^*)r(0) \leq r(0)\, .
\end{align*}
% 
% Moreover, $\RD\loss {g^*_{D,\lb}} = \RD\loss {g_{D,\lb}} < \RD\loss 0$ gives 
% \begin{align*}
% \RD \loss {\a g^*_{D,\lb}} 
% \leq \a \RD\loss {g^*_{D,\lb}} + (1-\a) \RD\loss 0
% &\leq \a \RD\loss {0} + (1-\a) \RD\loss 0\\
% &= \RD\loss 0
% \end{align*}
% by the convexity of $\loss$ and the definition of $Z_2$.
By the same convexity argument and the definition of $\a_D^*$ we finally find
\begin{align*}
\lb \snorm{\a g^*_{D,\lb}}_F + \RD\loss {{\a g^*_{D,\lb}}} 
&\leq 
\lb \snorm{\a g^*_{D,\lb}}_F + \a \RD\loss {g^*_{D,\lb}} + (1-\a) \RD\loss 0 \\
&=
\lb \a \a_D^* \snorm{ g_{D,\lb}}_F + \a \RD\loss {g_{D,\lb}} + (1-\a) \RD\loss 0 \\
&\leq 
\lb \snorm {g_{D,\lb}}_F + \RD\loss {g_{D,\lb}} +  (1-\a) \RD\loss 0 \\
&\leq 
\lb \snorm {g_{D,\lb}}_F + \RD\loss {g_{D,\lb}} +  \lb \epsilon /2\, .
\end{align*}
Using \eqref{lem:nice-cr-erm-h3} we conclude that $\a \a_D^* g_{D,\lb} = \a g^*_{D,\lb}$ also satisfies 
\eqref{lem:nice-cr-erm-h1}. Replacing   $ g_{D,\lb}$  by $\a_D \a_D^* g_{D,\lb}$ on $Z_2$ thus gives a  learning algorithm 
satisfying   \eqref{lem:nice-cr-erm-h1} as well as $\RDxB\loss F< \RD\loss{g_{D,\lb}} \leq \RD \loss 0$ 
for all $D$ with  $\RDxB\loss F< \RD \loss 0$.

It   remains to establish the measurability of this modification. Here, we first note that   $D\mapsto \alpha_D$ and $D\mapsto g_{D,\lb}$ are measurable.
Thus, it suffices to show that $D\mapsto \alpha_D^*$ is   measurable.
To verify this, we consider the measurable map  $h:(X\times Y)^n\times [0,1]\to \R$ given by 
\begin{align*}
h(D,\alpha) := \RD\loss{\a g_{D,\lb}}  -  \RD\loss{g_{D,\lb}}   \, .
\end{align*}
Moreover, for $A:= \{ 0\}$ we define 
 the set-valued map $G:(X\times Y)^n\to 2^{[0,1]}$ given by 
 \begin{align*}
 G(D) := \bigl\{  \a\in [0,1]: h(D,t)\in A  \bigr\}\, .
 \end{align*}
 Clearly, we have $1\in G(D)$ for all $D$.
  Aumann’s Measurable Selection Principle, see 
 e.g.~\cite[Lemma A.3.18]{StCh08}, thus  gives a sequence $(h_n)$ of 
  measurable maps $h_n:(X\times Y)^n \to [0,1]$ such that $\{h_n(D): n\geq 1\} \subset G(D)$ is dense for all $D$.
  By our construction we then find 
  \begin{align*}
  \a_D^* = \inf G(D) = \inf \{h_n(D): n\geq 1\}
  \end{align*}
for all $D$. This shows the measurability of $D\mapsto  \a_D^* $.
\end{proof}

With the help of Lemma \ref{lem:nice-cr-erm} we can now formulate and prove the announced relationship between
self-regularized learning methods and regularized empirical risk minimizers.

\begin{theorem}\label{thm:self-reg-implies-crem}
Let $\loss:X\times Y\times \R \to [0,\infty)$  be a continuous and  convex  loss,
 $F$ be a separable RKBS over $X$ and 
 $T$ be a path-connected, separable, and complete metric space, and 
%Consider the learning method $\bigcup_{n\ge 1}(X\times Y)^n \times T  \to \Fspace, (D,t)\mapsto f_{D,t}\in \Fspace$.
%   be a learning method in $F$.
$(D,t)\mapsto f_{D,t}$ be a self-regularized learning method in $F$
with constants $\const\geq 1$ and $\constt\geq 0$. Moreover, let 
$\lb >0$ and $n\geq 1$.
Then for all $\epsilon>0$ and  $D\in (X\times Y)^n$
there exists a $t_D\in T$ with 
\begin{align}\label{thm:self-reg-implies-crem-eq}
\lb \snorm{f_{D,t_D}}_F + \RD\loss {f_{D,t_D}}
\leq 
\inf_{f\in F} \bigl( \lb \snorm f_F + \RD\loss f \bigr) + \epsilon_D\, , 
\end{align}
where 
\begin{align*}
\epsilon_D \equalDef \lambda\bigl( (\const -1) \norm {g_{D,\lb}}_F +  (\const + 1)  \constt + \epsilon \bigr)  
\end{align*}
and the learning method $D\mapsto {g_{D,\lb}}$ is taken from Lemma \ref{lem:nice-cr-erm}.
Finally, $t_D$ can be chosen such that the resulting maps $D\mapsto f_{D,t_D}$ are measurable.
\end{theorem}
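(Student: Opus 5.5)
The idea is to match the empirical risk of the comparator $g_{D,\lb}$ from Lemma~\ref{lem:nice-cr-erm} with some output of the self-regularized method, and then use the self-regularization inequality \eqref{eq:norm_bound_selfreg} to control the norm. Fix $D$ and write $g:=g_{D,\lb}$, $r^*:=\RD\loss g$. We distinguish two cases.

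\emph{Case $\RDxB\loss F = \RD\loss 0$.} Here $g=0$, so $r^*=\RD\loss 0=\RDxB\loss F$. By property \emph{c)} of \cref{def:self-regularization} there is $t_0$ with $\snorm{f_{D,t_0}}\le\constt$ and $\RD\loss{f_{D,t_0}}\ge \RD\loss 0$. Setting $t_D:=t_0$ gives
\begin{align*}
\lb\snorm{f_{D,t_D}}_F+\RD\loss{f_{D,t_D}}\le \lb\constt+\RD\loss{f_{D,t_0}}.
\end{align*}
The remaining issue is that $\RD\loss{f_{D,t_0}}$ may exceed $\RD\loss 0$. To fix this, use \emph{b)}, \emph{d)} and path-connectedness: $\inf_t\RD\loss{f_{D,t}}=\RD\loss 0\le \RD\loss{f_{D,t_0}}$. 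If equality holds, we are done. Otherwise, for $\RDxB\loss F=\RD\loss 0$ the infimum might not be attained; then I would pick a $t$ with $\RD\loss{f_{D,t}}\le \RD\loss 0+\lb\epsilon/2$ and bound its norm via \emph{e)} applied to an $f$ with matching risk (e.g.\ a scaling of the form used next). It is simpler to treat this case together with the general argument below, where the target risk is approached from within the range.

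\emph{Case $\RDxB\loss F<\RD\loss 0$.} Lemma~\ref{lem:nice-cr-erm} gives $\RDxB\loss F<r^*\le\RD\loss 0$. The map $\phi(t):=\RD\loss{f_{D,t}}$ is continuous on the path-connected space $T$ by \emph{b)}. By \emph{c)}, $\phi(t_0)\ge \RD\loss 0\ge r^*$. By \emph{d)}, there is $t_1$ with $\phi(t_1)<r^*$. Along a path from $t_0$ to $t_1$, the intermediate value theorem yields $t_D$ with $\phi(t_D)=r^*=\RD\loss g$. Property \emph{e)} applied with $f=g$ then gives $\snorm{f_{D,t_D}}\le\const\snorm g+\constt$. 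Hence
\begin{align*}
\lb\snorm{f_{D,t_D}}+\RD\loss{f_{D,t_D}}&\le \lb\snorm g+\RD\loss g+\lb(\const-1)\snorm g+\lb\constt\\
&\le \inf_{f}\bigl(\lb\snorm f+\RD\loss f\bigr)+\lb\epsilon+\lb(\const-1)\snorm g+\lb\constt,
\end{align*}
which is within $\epsilon_D$, since $\constt\le(\const+1)\constt$. The first case can be handled identically with $g=0$: $\phi(t_0)\ge r^*$, and if $\phi$ never drops to $r^*$ we have $r^*=\inf\phi$ not attained. In that subcase we use the path only up to the level $r^*+\eta$ and compare with $f=0$ via \emph{e)}, where $0$ has risk $r^*$, not $r^*+\eta$. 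This mismatch is exactly what the extra $\const\constt$ slack should absorb: take instead $t$ with $\phi(t)=\RD\loss 0$ reached by continuity, or simply $t_0$ if $\phi(t_0)=\RD\loss 0$. I expect this degenerate bookkeeping to be the fiddly but not deep part.

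\emph{Measurability.} This is the main obstacle. I would consider the set-valued map $D\mapsto G(D):=\{t\in T:\RD\loss{f_{D,t}}=\RD\loss{g_{D,\lb}}\}$, whose graph is measurable because $(D,t)\mapsto f_{D,t}$ and $g_{D,\lb}$ are measurable and $\loss$ is continuous. The map $G$ is nonempty-valued by the argument above and closed-valued by \emph{b)}. Since $T$ is Polish and the $\sigma$-algebra on $(X\times Y)^n$ contains the universal completion, Aumann's measurable selection principle \cite[Lemma A.3.18]{StCh08} provides a measurable selection $D\mapsto t_D$, exactly as in the proof of Lemma~\ref{lem:nice-cr-erm}. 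Then $D\mapsto f_{D,t_D}$ is measurable as a composition, using measurability of $(D,t)\mapsto f_{D,t}$.
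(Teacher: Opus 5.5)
\textbf{Gap in the degenerate case.} Your treatment of the case $\RDxB\loss F<\RD\loss 0$ is correct and is the paper's argument: the intermediate value theorem along a path from $t_0$ to $t_1$, then \emph{e)} with comparator $g_{D,\lb}$. The gap is in the degenerate case $\RDxB\loss F=\RD\loss 0$ (so $g_{D,\lb}=0$) when the infimum $\inf_{t}\RD\loss{f_{D,t}}=\RD\loss 0$ is \emph{not} attained. In that case necessarily $\RD\loss{f_{D,t_0}}>\RD\loss 0$.

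There, every $t$ whose risk is close to $\RD\loss 0$ has risk strictly above $\RD\loss 0$. So \emph{e)} with the comparator $f=0$ is unavailable, as you noticed. Your remedy ``take $t$ with $\phi(t)=\RD\loss 0$ reached by continuity'' is impossible by the very definition of this subcase. The ``scaling of the form used next'' is never specified; nothing in your second case is a scaling.

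The missing idea is to scale the naive predictor $f_{D,t_0}$ itself. By \emph{d)}, pick $t$ with
\begin{align*}
\RD\loss 0<\RD\loss{f_{D,t}}\leq \min\{\RD\loss{f_{D,t_0}},\RD\loss 0+\lb\epsilon\}\, .
\end{align*}
The map $\alpha\mapsto\RD\loss{\alpha f_{D,t_0}}$ is continuous on $[0,1]$ with endpoint values $\RD\loss 0$ and $\RD\loss{f_{D,t_0}}$. Hence there is an $\alpha\in(0,1]$ with $\RD\loss{\alpha f_{D,t_0}}=\RD\loss{f_{D,t}}$, and \emph{e)} together with \emph{c)} yields
\begin{align*}
\snorm{f_{D,t}}_F\leq \const\,\alpha\,\snorm{f_{D,t_0}}_F+\constt\leq(\const+1)\constt\, .
\end{align*}
Therefore
\begin{align*}
\lb\snorm{f_{D,t}}_F+\RD\loss{f_{D,t}}\leq \RDxB\loss F+\lb\bigl((\const+1)\constt+\epsilon\bigr)\, ,
\end{align*}
which is within $\epsilon_D$. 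This is exactly where the factor $(\const+1)$ in $\epsilon_D$ comes from; the non-degenerate case only needs $\constt$.

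\textbf{The same subcase breaks your measurability argument.} Your set $G(D)=\{t:\RD\loss{f_{D,t}}=\RD\loss{g_{D,\lb}}\}$ is \emph{empty} there, because $\RD\loss{g_{D,\lb}}=\RD\loss 0$ is the unattained infimum. So Aumann's selection principle cannot be invoked with this $G$.

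Instead, select from $G(D):=\{t\in T: h(D,t)\leq 0\}$, where $h(D,t)$ is the left-hand side of \eqref{thm:self-reg-implies-crem-eq} minus its right-hand side. This set is nonempty for every $D$ by the existence part, including the subcase above. Joint measurability of $h$ additionally requires measurability of $D\mapsto\inf_{f\in F}\bigl(\lb\snorm f_F+\RD\loss f\bigr)$. You obtain it by restricting the infimum to a countable dense subset of the separable space $F$. This uses that $f\mapsto \lb\snorm f_F+\RD\loss f$ is continuous on $F$, which holds because point evaluations are continuous and $\loss$ is continuous.
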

Note that  \cref{thm:self-reg-implies-crem} can be straightforwardly generalized to any $p>1$, which we omit for the sake of simplicity.
\begin{proof}
Let us fix a data set $D\in (X\times Y)^n$ and an $\e>0$. 
In the case $\RDxB\loss F < \RD \loss 0$,  Lemma \ref{lem:nice-cr-erm} guarantees 
\begin{align*}
\inf_{t\in T} \RD\loss f = \RDxB \loss F <\RD \loss {g_{D,\lb}} \leq  \RD \loss 0 \leq \RD \loss {f_{D,t_0}} 
\end{align*}
Consequently, 
there exists a $t_1\in T$ with $\RD \loss {f_{D,t_1}} < \RD \loss {g_{D,\lb}} \leq  \RD \loss {f_{D,t_0}}$. 
Since $T$ is path-connected there exists a continuous $\g:[0,1]\to T$
with $\g(0) = t_0$ and $\g(1) = t_1$. We now define the map $r:[0,1]\to [0,\infty)$ by 
\begin{align*}
r(s) := \RD\loss{f_{D,\g(s))}}\, , \myqquad s\in [0,1].
\end{align*}
Since by assumption $t\mapsto \RD\loss{f_{D,t}}$ is continuous, we conclude that $r$ is continuous. 
Moreover, our construction ensures $r(1) < \RD\loss {g_{D,\lb}} \leq r(0)$.
The intermediate value theorem then gives an $s^*\in [0,1)$ with 
\begin{align*}
\RD\loss{f_{D,\g(s^*))}} = \RD \loss {g_{D,\lb}}\, .
\end{align*}
Let us write $t_D := \g(s^*)$. Then the self-regularization inequality 
\eqref{eq:norm_bound_selfreg} ensures 
\begin{align*}
\norm{f_{D,t_D}}_F \le \const \cdot \norm {g_{D,\lb}}_F + \constt \, , 
\end{align*}
and consequently we find 
\begin{align*}
    \lb \norm{f_{D,t_D}}_F + \risk \loss D {f_{D,t_D}}
    &\le \lb\bigl( \const \cdot \norm {g_{D,\lb}}_F + \constt\bigr)  +  \risk \loss D {g_{D,\lb}}   \\
    &\leq \lb \snorm {g_{D,\lb}}_F + \RD\loss {g_{D,\lb}}  + \lb(\const -1) \snorm {g_{D,\lb}}_F + \constt
    \\
    &\leq \inf_{f\in F} \bigl( \lb \snorm f_F + \RD\loss f \bigr) + \epsilon_D \, .
\end{align*}

Let us now consider the remaining case $\RDxB\loss F = \RD \loss 0$. Here, we know by  Lemma \ref{lem:nice-cr-erm} that $g_{D,\lb} = 0$. 
Moreover, there exists a sequence $(t_n)_{n\geq 1}\subset T$ with 
\begin{align}\label{thm:self-reg-implies-crem-h1x}
\RD\loss{f_{D,t_n}} \to \RDxB\loss F = \RD \loss 0\, .
\end{align}
Furthermore, by the definition of self-regularized learning methods there exists 
a $t_0\in T$ with $\RD\loss{f_{D,t_0}} \geq \RD\loss 0$ and $\snorm{f_{D,t_n}}_F\leq \constt$.
Subsequently, we consider two subcases.

In the first subcase, there is an 
 $n\geq 0$ with $\RD\loss{f_{D,t_n}}   = \RD \loss 0$.
 Then the self-regularization inequality ensures 
 \begin{align*}
 \snorm{f_{D,t_n}}_F \leq \const \snorm 0_F + \constt = \constt\, ,
 \end{align*}
 and therefore,  $\RDxB\loss F = \RD \loss 0$ and the definition of $\epsilon_D$ imply
 \begin{align*}
 \lb \snorm{f_{D,t_n}}_F + \RD\loss {f_{D,t_n}}
\leq 
 \lb \constt + \RDxB\loss F
 \leq 
\inf_{f\in F} \bigl( \lb \snorm f_F + \RD\loss f \bigr) + \epsilon_D\, .
 \end{align*}
 Consequently, we choose $t_D := t_n$.

Let us finally consider the remaining subcase $\RD\loss{f_{D,t_n}}   \neq  \RD \loss 0$ for all $n\geq 0$. 
Here, we write $f:= f_{D,t_0}$. By \eqref{thm:self-reg-implies-crem-h1x} 
we may then assume without loss of generality that 
\begin{align}\label{thm:self-reg-implies-crem-h1}
\RD\loss 0 < \RD\loss{f_{D,t_n}} < \RD\loss f
\end{align}
 for all $n\geq 1$. 
For $\a\in [0,1]$ we now consider the function $f_\a:= \a f\in F$. Obviously, we have
$\snorm{f_\a}_F = \a \snorm{f_{D,t_0}}_F \leq \a \constt$
and the convexity of $L$ ensures 
\begin{align*}
\RD\loss {f_\a} \leq \a \RD \loss f + (1-\a) \RD \loss 0 \leq \RD \loss 0 + \a \RD \loss f\, .
\end{align*}
In addition, the map $r:[0,1]\to [0,\infty)$ given by $r(\a) := \RD\loss{f_\a}$ is obviously continuous 
and the first inequality in the above estimate shows that its range equals $[\RD \loss 0, \RD\loss f]$.
By the intermediate value theorem and \eqref{thm:self-reg-implies-crem-h1} 
there thus exist $\a_n\in (0,1)$ with 
\begin{align*}%\label{thm:self-reg-implies-crem-h2}
\RD\loss{f_{\a_n}} = \RD\loss {f_{D,t_n}}\, , \myqquad n\geq 1.
\end{align*}
By the self-regularization inequality \eqref{eq:norm_bound_selfreg} and the definition of $f$ we then find 
\begin{align*}
\snorm{f_{D,t_n}}
\leq \const \snorm{f_{\a_n}}_F + \constt 
\leq \a_n  \const  \constt + \constt
\leq (\const + 1)  \constt 
\end{align*}
for all $n\geq 1$. Moreover, \eqref{thm:self-reg-implies-crem-h1x} gives an $n\geq 1$ with 
$\RD\loss {f_{D,t_n}} \leq \RD\loss 0 + \lb\epsilon$.
Combining these considerations, we find 
\begin{align*}
\lb \snorm{f_{D,t_n}}_F + \RD\loss {f_{D,t_n}} 
&\leq \lb(\const + 1)  \constt    +  \RD\loss 0 + \lb\epsilon\\
&= \lb  (\const + 1)  \constt   + \RDxB \loss F + \lb\epsilon \\
&\leq \inf_{f\in F} \bigl( \lb \snorm f_F + \RD\loss f \bigr) +  \lb \bigl( (\const + 1)  \constt  + \epsilon \bigr)\, .
\end{align*}
Using the definition of $\epsilon_D$ we then find the assertion for $t_D := t_n$.

Let us finally verify the measurability claim. To this end, we consider the map 
 $h:(X\times Y)^n \times T\to \R$ defined by 
\begin{align*}
h(D,t) := \lb \snorm{f_{D,t}}_F + \RD\loss {f_{D,t}}  -  \inf_{f\in F} \lb \snorm f_F + \RD\loss f - \epsilon_D\, .
\end{align*}
Our first goal is to show that $h$ is measurable. Here, the assumed measurability of $(D,t)\mapsto f_{D,t}$ ensures that $(D,t)\mapsto \lb \snorm{f_{D,t}}_F + \RD\loss {f_{D,t}}$
is measurable. In addition, the measurability of $D\mapsto \epsilon_D$ is guaranteed by Lemma \ref{lem:nice-cr-erm}. To verify the measurability 
of the infimum, we note that for each  $D$, the map $f\mapsto \lb \snorm f_F + \RD\loss f$  is continuous. Consequently, we find 
\begin{align}\label{thm:self-reg-implies-crem-m1}
\inf_{f\in F} \lb \snorm f_F + \RD\loss f = \inf_{f\in F_{\mathrm{cd}}} \lb \snorm f_F + \RD\loss f \, , 
\end{align}
where $F_{\mathrm{cd}}\subset F$ is a countable and dense subset.
In addition, for each $f\in F$ the map $D\mapsto \lb \snorm f_F + \RD\loss f$ is measurable, and therefore the infimum on the right-hand side of 
\eqref{thm:self-reg-implies-crem-m1} is measurable. In summary, we conclude that $h$ is measurable.

 For $A:= (-\infty, 0]$ we now consider the set-valued map $G:(X\times Y)^n\to 2^T$ given by 
 \begin{align*}
 G(D) := \bigl\{  t\in T: h(D,t)\in A  \bigr\}\, .
 \end{align*}
 In the first part of the proof we have already seen that $G(D) \neq \emptyset$ for all $D\in (X\times Y)^n$. Now, Aumann’s Measurable Selection Principle, see 
 e.g.~\cite[Lemma A.3.18]{StCh08} gives a measurable map $D\mapsto t_D$ with $t_D\in G(D)$ for all $D\in (X\times Y)^n$. By construction, we then see that 
 $f_{D,t_D}$ satisfies \eqref{thm:self-reg-implies-crem-eq}. In addition,  combining the measurability of 
 $D\mapsto t_D$  with
 the assumed measurability of $(D,t)\mapsto f_{D,t}$ we see that $D\mapsto f_{D,t_D}$ is measurable.
\end{proof}

% --------------------------------------------------------------------------------------

\subsection{Proofs for Section \ref{sec:statistical_analysis} -- Theorem \ref{thm:simple-analysis}}\label{sec:stat-ana-rerm}

In this subsection we present the proof of \cref{thm:simple-analysis}, which relies 
on a combination of \cref{thm:self-reg-implies-crem} presented in the previous subsection and 
\cref{concentration-hoeffding-applicat-simple-thm} presented in Supplement \ref{app:A}.

\begin{proof}[Proof of Theorem \ref{thm:simple-analysis}]
For $\epsilon := 1/n$ we consider the measurable learning method $D\mapsto \gDl$
obtained by \cref{lem:nice-cr-erm}. Then this learning method is a PAC-RERM
in the sense of \eqref{eq:pac-rerm} for $\varrho = 0$ and $\epsilon$. Setting $M_{\lb,\epsilon}:=(B+\epsilon)  \lb^{-1}$, \cref{concentration-hoeffding-applicat-simple-thm} thus ensures 
\begin{align*}
\lb \snorm\gDl_{F} + \RP L\gDl-\RPxB LF 
\leq  A_{1}(\lb) + K M_{\lb,\epsilon}^{q  } \cdot  n^{-\frac {1}{2+2\g}}  \sqrt{\t}
\end{align*}
with probability $\P^n$ not less than $1-\eul^{-\t}$, where $K:= 4(a^\g B + 2^{2+q})$.
This in turn implies 
\begin{align*}
\epsilon_D 
&\equalDef \lambda\bigl( (\const -1) \norm {g_{D,\lb}}_F +  (\const + 1)  \constt + \epsilon \bigr)  \\
&\leq 
 (\const -1) A_1(\lb) + (\const -1)   K M_{\lb,\epsilon}^{q  } \cdot   n^{-\frac {1}{2+2\g}}  \sqrt{\t}  +    \lb (\const + 1)  \constt + \lb/n  \\
 &\leq 
 \const A_1(\lb) + (\const-1) K (2B \lb^{-1})^{q  }   n^{-\frac {1}{2+2\g}}  \sqrt{\t}  +      (\const + 1)  \constt  + \lb/n  \\
  &\leq 
 \const A_1(\lb) + (2B)^{q } \const   K \lb^{-q  }   n^{-\frac {1}{2+2\g}}  \sqrt{\t}  +     2 \const   \constt    \\
 &\leq  \const B + (2B)^{q } \const   K   +   2 \const   \constt  \\
 &=: \hat \epsilon
\end{align*}
with probability $\P^n$ not less than $1-\eul^{-\t}$.
If we now choose the measurable map $D\mapsto t_D$ according to \cref{thm:self-reg-implies-crem} we then see that 
$D\mapsto f_{D,t_D}$ is an 
PAC-RERM
in the sense of \eqref{eq:pac-rerm} for $\varrho = \eul^{-\t}$ and $\hat \epsilon$. Consequently, 
 \cref{concentration-hoeffding-applicat-simple-thm} ensures 
\begin{align*}
\lb \snorm\gDl_{F} + \RP L\gDl-\RPxB LF 
\leq  A_{1}(\lb) + K M_{\lb,\hat \epsilon}^{q  } \cdot  n^{-\frac {1}{2+2\g}}  \sqrt{\t}
\end{align*}
with probability $\P^n$ not less than $1-2\eul^{-\t}$. It therefore remains 
to bound parts of the second term. Here, we note that $q\geq 1$ implies 
\begin{align*}
 M_{\lb,\hat \epsilon}^{q  }  
= (B+\hat \epsilon)^{q  }    \lb^{-q}
&\leq 
\Bigl( \const\bigl(2B + (2B)^{q }    K  + 2\constt\bigr) \Bigr)^{q  }   \cdot   \lb^{-q} \\
&\leq 
\Bigl( 2\const\bigl( (2B)^{q }    K  + \constt\bigr) \Bigr)^{q  }   \cdot   \lb^{-q} 
\end{align*}
Combining both estimates then yields the assertion.
\end{proof}

% --------------------------------------------------------------------------------------

\subsection{Proofs for Section \ref{sec:statistical_analysis} -- Theorem \ref{thm:learning_rates_tempered_C_minimal_general_form}}\label{sec:stat-ana-cr-erm}

In this subsection we present the proof of \cref{thm:learning_rates_tempered_C_minimal_general_form}, which relies 
on a combination of \cref{thm:self-reg-implies-crem} presented in   Subsection
\ref{sec:stat-ana-preps} and 
\cref{concentration:rerm2} presented in Supplement \ref{app:B}.

To begin with, however, we need to describe, which learning methods we are considering. To this end, let $F$ be an 
RKBS   on $X$, $L:X\times Y\times \R\to [0,\infty)$ be a convex loss and 
$(D,\lb)\mapsto \fDl$ be a 
 parametrized learning method. If it satisfies
\begin{align}\label{eq:cr-crm-4-bsf}
 \lb \norm \fDl_F^p + \RD L \fDl
 \leq 
  \inf_{f\in F} \left( \lb \norm f_F^p + \RD L f \right) + \epsilon_D
\end{align}
for all $D\in (X\times Y)^n$, 
 where $D\mapsto \epsilon_D$ is measurable and $\lb,p>0$, then 
$(D,\lb)\mapsto \fDl$ is  an instance of 
$\epsilon_D$-CR-ERMs introduced in Definition \ref{def:eps-cr-erm}, see also the 
discussion following Definition \ref{def:eps-cr-erm}.

% \ms{This is an extremely similar copy of \cref{def:eps-cr-erm}. the part here seems outdated or needs to be improved; it Needs to refer to that definition}
% In the following, we assume that we have a

\begin{theorem}\label{concentration:bsf-svm-analysis} 
Let 
 $L:X\times Y\times \R\to [0,\infty)$ be a loss, $\P$ be a distribution on $X\times Y$, and 
 $F\subset \sLx \infty X$ be a RKBS such that
 %Assumption \ref{ass:loss} 
 \assref{loss}
 is satisfied. In addition, assume that 
 $F\subset \sLx q \PX$ is dense and that 
 the average entropy number bound \eqref{thm:learning_rates_tempered_C_minimal_general_form}
holds with $a\geq B$.
 Then, there exists a constant $K\geq 1$ such that 
 for all fixed $\lb>0$, $n\geq 1$,
$\t>0$, $p>0$, and $\lb>0$, 
any learning method guaranteeing \eqref{eq:cr-crm-4-bsf}
satisfies 
\begin{align*}
\lb\norm\fDl^p_F \!+\! \RP L \fTc\! -\! \RPB L  & \leq
9 A_p(\lb) 
+ 
K  \biggl( \frac{a^{2\g p}}{\lb^{2\g}n^p}  \biggr)^{\frac{1}{2p-2\g-\vt p(1-\g)}} \\
&\quad
+ 2^{q/p+4} B \cdot \snorm{F\hookrightarrow \sL \infty X}^q \cdot   \Bigl( \frac {A_p(\lb)}\lb  \Bigr)^{q/p} \cdot 
\frac {\t }{n} \\
&\quad+
3 \lb |L|_{M,1}^{-p}
 + 3 \Bigl(\frac{72 V\t}n\Bigr)^{\!\frac 1{2-\vt}}  + \frac {15B\t}{n}
+ 3\epsilon_D
\end{align*}
with probability $\P^n$ not less than $1-3\eul^{-\t}$.
\end{theorem}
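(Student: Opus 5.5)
This is an oracle inequality for $\epsilon_D$-CR-ERMs. I would prove it by the standard peeling/localization argument of \cite[Theorem 7.20]{StCh08}, adapted to RKBSs and to the $p$-th power regularizer. The argument compares $\fDl$ with a deterministic near-minimizer $f_0\in F$ of the approximation functional. Concretely, I fix $f_0$ with $\lb\snorm{f_0}_F^p+\RP L{f_0}-\RPB L\le 2A_p(\lb)$ (possible since $\RPxB LF=\RPB L$ by density), and choose it so that $\lb \snorm{f_0}^p\le 2A_p(\lb)$. Growth of $L$ and $\snorm{F\hookrightarrow\sL\infty X}$ then give $\inorm{L\circ f_0}\le B(1+\snorm{F\hookrightarrow \sL\infty X}^q(2A_p(\lb)/\lb)^{q/p})$. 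Using \eqref{eq:cr-crm-4-bsf} with $f=f_0$ and clippability ($\RD L{\fTc}\le\RD L\fDl$) yields the basic decomposition
\[
\lb\snorm{\fDl}^p_F+\RP L{\fTc}-\RPB L\le \lb\snorm{f_0}^p+\RP L{f_0}-\RPB L + \bigl(\E_\D-\E_\P\bigr)h_{f_0} + \bigl(\E_\P-\E_\D\bigr)h_{\fTc} + \epsilon_D,
\]
where $h_f:=L\circ f-L\circ\fpb$.

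The second term concerns the single function $f_0$, whose risk difference is unclipped. I would split it as $(L\circ f_0 - L\circ\clippfo_0) + (L\circ \clippfo_0 - L\circ\fpb)$. To the clipped part I apply Bernstein's inequality with the variance bound \eqref{concentration:def-var-bound} and sup bound $B$. To the nonnegative unclipped part I apply Bernstein with variance bounded by sup times mean, using the sup bound $\inorm{L\circ f_0}$ from above. After Young's inequality this produces $(72V\t/n)^{1/(2-\vt)}$, $15B\t/n$, the term $2^{q/p+4}B\snorm{F\hookrightarrow\sL\infty X}^q(A_p(\lb)/\lb)^{q/p}\t/n$, and extra multiples of $A_p(\lb)$. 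This accounts for the factor $9$ in front of $A_p(\lb)$. It holds with probability $1-2\eul^{-\t}$.

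The third term is the main obstacle, since $\fTc$ is data-dependent. Here I use the a priori bound: the CR-ERM property with $f=0$ gives $\lb\snorm\fDl^p\le B+\epsilon_D$, but rather than using it directly I localize. I consider the class
\[
\mathcal H_r := \bigl\{h_{\clippfo}: f\in F,\ \lb\snorm f^p_F+\E_\P h_{\clippfo}\le r\bigr\}.
\]
On $\mathcal H_r$ I apply Talagrand's inequality in the form of \cite[Theorem 7.5]{StCh08}. Its expected-supremum term is controlled through empirical Rademacher averages bounded by Dudley's entropy integral. This uses the average entropy bound \eqref{concentration:sva-analysis-entropy}, the ball radius $(r/\lb)^{1/p}$, the Lipschitz constant $|L|_{M,1}$ on $[-M,M]$ (clipping is $1$-Lipschitz), and the variance bound $\E h^2\le Vr^\vt$. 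This yields
\[
\E\sup_{\mathcal H_r}|\E_\D h-\E_\P h|\lesssim \max\Bigl\{ |L|_{M,1}^{\g} a^{\g}(r/\lb)^{\g/p} r^{\vt(1-\g)/2} n^{-1/2},\ \bigl(|L|_{M,1}a\bigr)^{2\g/(1+\g)}(r/\lb)^{2\g/(p(1+\g))}B^{(1-\g)/(1+\g)}n^{-1/(1+\g)}\Bigr\}.
\]
Peeling, or the weighted-empirical-process version in \cite[Theorem 7.20]{StCh08}, gives with probability $1-\eul^{-\t}$ the bound $(\E_\P-\E_\D)h_{\fTc}\le \tfrac13 r + \ldots$ for any $r$ above the fixed point $r^*$ of $\varphi_n(r)\approx r/8$. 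Solving this fixed point equation produces the term $K(a^{2\g p}\lb^{-2\g}n^{-p})^{1/(2p-2\g-\vt p(1-\g))}$. The term $3\lb|L|_{M,1}^{-p}$ arises from normalizing the Lipschitz constant, i.e.\ from restricting to $r\ge \lb |L|^{-p}_{M,1}$ so that the case distinction in the entropy bound can be absorbed. The assumption $a\ge B$ lets me merge the second maximand into the first.

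Finally, I insert $r$ equal to three times the sum of the remaining terms, absorb $\tfrac13 r$ on the left-hand side, and collect constants; this produces the factor $3$ on the $V$, $B$, and $\epsilon_D$ terms. A union bound gives total probability $1-3\eul^{-\t}$. I expect the fixed-point computation with general $p$ and the absorption step to require the most bookkeeping; the rest is routine adaptation.
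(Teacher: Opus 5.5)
Your route is the paper's. The paper proves a version of \cite[Theorem 7.20]{StCh08} for $\epsilon_D$-CR-ERMs, containing your decomposition, the two Bernstein steps for $f_0$, and peeling plus Talagrand for the weighted class. It then verifies the hypotheses via $\mathcal F_r\subset(r/\lb)^{1/p}B_F$, \cite[Theorem 7.16]{StCh08}, and a fixed-point computation.

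One step, however, fails as you justify it: $a\ge B$ does not let you merge the second maximand of your Rademacher bound into the first. Put $x:=a^{2\g p}/(\lb^{2\g}n^p)$. The fixed-point inequality for the first maximand forces $r\gtrsim x^{1/(2p-2\g-\vt p(1-\g))}$, and that for the second forces $r\gtrsim x^{1/(p+\g p-2\g)}$. Since $p+\g p-2\g\le 2p-2\g-\vt p(1-\g)$, the second is dominated by the first only for $x\le 1$. For $x>1$ and $\vt<1$ it is larger, and it cannot be bounded by $Kx^{1/(2p-2\g-\vt p(1-\g))}$ uniformly in $x$. Which maximand dominates depends on $n$, $r$ and $\lb$; the relation between $a$ and $B$ does not decide it.

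What is missing is a case distinction. For $x>1$ the claim is trivial: \eqref{eq:cr-crm-4-bsf} with $f=0$ gives $\lb\snorm{\fDl}_F^p\le\RD L0+\epsilon_D\le B+\epsilon_D$, and clipping gives $\RP L\fTc-\RPB L\le B$. So the left-hand side is at most $2B+\epsilon_D$, which lies below the right-hand side once $K\ge 2B$. The localization argument is needed only for $x\le 1$. The actual role of $a\ge B$ is to satisfy the hypothesis of \cite[Theorem 7.16]{StCh08} that the entropy constant $2|L|_{M,1}(r/\lb)^{1/p}a$ of $\mathcal H_r$ dominates its sup bound $B$. This holds once $r$ is of order $\lb|L|_{M,1}^{-p}$, which is where the $\lb|L|_{M,1}^{-p}$ term originates.

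Three smaller points also need fixing.

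First, \cite[Theorem 7.20]{StCh08} assumes a sub-root function. For $p<2$ (in particular $p=1$, the case the self-regularization results rely on), your $\varphi_n$ is in general not sub-root: for $\vt=1$ the first maximand grows like $r^{\g/p+(1-\g)/2}$. It only satisfies $\varphi_n(2r)\le 2^\a\varphi_n(r)$ with $\a=\g/p+(1-\g)/2<1$. The peeling step therefore has to be redone for a general doubling exponent, which introduces an $\a$-dependent constant such as $\frac{2+2^\a}{2-2^\a}$ into the fixed-point condition.

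Second, peeling and Talagrand applied to $g_{f,r}=(\E_\P\hfco-\hfco)/(\lb\snorm f_F^p+\E_\P\hfco+r)$ do not give $(\E_\P-\E_\D)h_{\fTc}\le\frac13 r+\dots$, because $\fTc$ is not known to lie in the localized class. They give a bound $\kappa\,(\lb\snorm{\fDl}_F^p+\E_\P h_{\fTc}+r)$ with $\kappa<1$. It is this multiple of the left-hand side that gets absorbed, and the case $n<72\t$ is treated separately.

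Third, a comparator $f_0$ with excess at most $2A_p(\lb)$ doubles the leading constant to $18$. To obtain $9A_p(\lb)$, take $f_0$ within a small additive $\e$ of the infimum, as the paper does. Then absorb $9\e$, together with the $\e$-contribution to $\inorm{L\circ f_0}$, into the $\lb|L|_{M,1}^{-p}$ term.
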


Before we prove this theorem, we note that we always have $2p-2\g-\vt p(1-\g)>0$. Indeed, 
 $\vt\in [0,1]$ yields  $2p-2\g-\vt p(1-\g) \geq p - 2\g + \g p = p + \g (p-2)$.
In the case $p> 2$ we thus have $2p-2\g-\vt p(1-\g)> 0$, and in the case $p\in [1,2)$ the same is true 
as $\g< 1$ implies $p + \g (p-2) > p + (p-2) \geq 0$. Finally, in the case $p=2$ we have 
$p + \g (p-2) = p>0$, which again yields the assertion by our initial estimate.

% --------------------------------------------------------------------------------------

\begin{proof}
 We first note that in the case 
 $a^{2\g p} > \lb^{2\g} n^p$ we have  
\begin{align*}
\lb \norm\fDl_F^p + \RP L \fTc - \RPB L 
& \leq
 \lb \norm\fDl_F^p + \RD L \fDl + B \\
&\leq  \RD L 0 + B \\
&\leq  2B  \biggl( \frac{a^{2\g p}}{\lb^{2\g}n^p}  \biggr)^{\frac{1}{2p-2\g-\vt p(1-\g)}}  \, .
\end{align*}
In other words, 
the assertion is trivially satisfied for $a^{2\g p} > \lb^{2\g} n^p$  whenever we choose $K\geq 2B$. At the end of the proof we will thus 
make sure that $K$ is chosen in this way.

For the remaining case $a^{2\g p} \leq \lb^{2\g} n^p$, we will use 
 Theorem \ref{concentration:rerm2}  applied to $\ca F := F$ and 
the  regularizer 
$\Upsilon:F\to [0,\infty)$ defined by 
$\Upsilon(f):= \lb \norm f_F^p$. To this end, we define
$r^*$ by  (\ref{concentration:def-rstar}), that is 
\begin{align}\label{eq:def-r-star}
r^{*}:= \inf_{f\in F}\bigl(  \lb \norm f_F^p + \RP L {\clippf}\bigr) - \RPB L\, , 
\end{align}
and for
$r>r^*$ we write 
\begin{align}\label{concentration:svm-analysis-basic-def:Fr}
\ca F_{r} & :=  \{ f\in F: \lb \norm f_F^p + \RP L{\clippf}-\RPB L\leq r\}\, , \\ \label{concentration:svm-analysis-basic-def:Hr}
\ca H_{r} & :=  \{ L\circ \clippf - L\circ \fpb: f\in \ca F_{r}\}\, .
\end{align}

Since  $\ca F_r\subset ( r/ \lb)^{1/p}B_F$, \cite[Lemma 7.17]{StCh08}
together with  (\ref{concentration:sva-analysis-entropy}) and
the injectivity of the entropy numbers
then yields
\begin{align*}
\E_{D\sim\P^n} e_{i}(\ca H_r,\Lx 2 \D ) 
 &\leq
|L|_{M,1} \E_{D_X\sim\P_X^n} e_{i}(\ca F_r,\Lx 2 {\D_X} ) \\ 
&\leq 
2 |L|_{M,1}  \Bigl(\frac r \lb\Bigr)^{1/p} a \, i^{-\frac 1{2\g}}\, .
\end{align*}
Our next goal is to combine this estimate with 
\cite[Theorem 7.16]{StCh08} to find a suitable bound \eqref{concentration:rerm2-rad-est} on 
the Rademacher averages. To this end, we first note that \cite[Theorem 7.16]{StCh08}
requires
\begin{align}\label{eq:a-vs-B}
2 |L|_{M,1}  \Bigl(\frac r \lb\Bigr)^{1/p} a \geq B\, .
\end{align}
However, a simple calculation using $a\geq B$ shows that 
this inequality is satisfied for all $r\geq |L|_{M,1}^{-p}\,  \lb /4$.
In the following we thus assume that $r$ satisfies this lower bound.

Now, 
for $f\in \ca F_r$, we have $\E_\P (L\circ \clippf - L\circ \fpb)^2\leq V r^\vt$, and consequently 
\cite[Theorem 7.16]{StCh08} applied to $\ca H:= \ca H_r$ shows 
that (\ref{concentration:rerm2-rad-est}) is satisfied for  
 \begin{align*}
\p_n(r) &:= \max\biggl\{ C_1(\g)2^\g|L|_{M,1}^\g a^\g   \Bigl(\frac r \lb\Bigr)^{\frac \g p}(V r^\vt)^{\frac {1-\g}2}n^{-\frac 1 2},  \\
& \qquad\qquad 
C_2(\g)\bigl( 2^\g|L|_{M,1}^\g a^\g \bigr)^{\frac 2{1+\g}}\Bigl(\frac r \lb\Bigr)^{\frac {2\g}{(1+\g)p}}  B^{\frac {1-\g}{1+\g} }n^{ -\frac {1} {1+\g}} \biggr\} \, ,
\end{align*}
where  $C_1(\g)$ and $C_2(\g)$ are the constants appearing in 
\cite[Theorem 7.16]{StCh08}.

Before we can apply Theorem \ref{concentration:rerm2}, we need to verify that $\p_n$ satisfies the doubling
or sub-root condition $\p_n(2r)\leq 2^\a \p_n(r)$ for some $\a\in (0,1)$. To this end, we define 
\begin{align}\label{eq:doubling-alpha}
 \a := \frac \g p + \frac {1-\g}2\, .
\end{align}
Note that $\g\in (0,1)$ ensures $\a>0$ and $0<\g<1\leq p$ implies 
\begin{align*}
\a = \frac \g p + \frac {1-\g}2 \leq \g +  \frac {1-\g}2 = \frac{1+\g}{2} < 1 \,.
\end{align*}
Now, $\vt \in [0,1]$ implies $\frac \g p + \frac {\vt(1-\g)}{2} \leq \a$, and hence we find
\begin{align*}
&
 C_1(\g)2^\g|L|_{M,1}^\g a^\g   \Bigl(\frac {2r} \lb\Bigr)^{\frac \g p}(V (2r)^\vt)^{\frac {1-\g}2}n^{-\frac 1 2} \\
 &= 2^{\frac{\g}{p}+ \frac {\vt(1-\g)}{2}}
 C_1(\g)2^\g|L|_{M,1}^\g a^\g   \Bigl(\frac r \lb\Bigr)^{\frac \g p}(V r^\vt)^{\frac {1-\g}2}n^{-\frac 1 2} \\
 &\leq 2^\a \p_n(r)
\end{align*}
for all $r>0$. Moreover, $p\geq 1\geq \frac {2\g}{1+\g}$ implies 
$\frac {2\g}{(1+\g)p} - \frac \g p = \frac{\g(1-\g)}{(1+\g)p}\leq \frac{\g(1-\g)}{1+\g} \cdot \frac {1+\g}{2\g} = \frac {1-\g}2$
and thus also $\a \geq \frac {2\g}{(1+\g)p} $. The latter, in turn, gives
\begin{align*}
&
 C_2(\g)\bigl( 2^\g|L|_{M,1}^\g a^\g \bigr)^{\frac 2{1+\g}}\Bigl(\frac {2r} \lb\Bigr)^{\frac {2\g}{(1+\g)p}}  B^{\frac {1-\g}{1+\g} }n^{ -\frac {1} {1+\g}} \\
&=
2^{\frac {2\g}{(1+\g)p} }
 C_2(\g)\bigl( 2^\g|L|_{M,1}^\g a^\g \bigr)^{\frac 2{1+\g}}\Bigl(\frac r \lb\Bigr)^{\frac {2\g}{(1+\g)p}}  B^{\frac {1-\g}{1+\g} }n^{ -\frac {1} {1+\g}}\\
 &\leq 2^\a \p_n(r)
\end{align*}
for all $r>0$. Together, these estimates show the required $\p_n(2r)\leq 2^\a \p_n(r)$ for all $r>0$.

The next step for the application of Theorem \ref{concentration:rerm2} is to solve the implicit inequality
\begin{align}\label{concentration:svm-analysis-hxxx} 
 r > 8 \cdot \frac{2+2^\a}{2-2^\a} \p_n(r)
\end{align}
given in \eqref{conc-adv:crerm-r-cond}. To this end, we write 
\begin{align*}
 c_{\g,p,M,V,1} &:=  \frac{64}{2-2^\a} \cdot C_1(\g)|L|_{M,1}^\g   V^{\frac {1-\g}2}\, ,\\
 c_{\g,p,M,B,2} &:= \frac{64}{2-2^\a} \cdot C_2(\g)  |L|_{M,1}^{\frac {2\g}{1+\g}}  B^{\frac {1-\g}{1+\g} } \, .
\end{align*}
Now, $1-\frac \g p - \vt \frac{1-\g}2 = \frac{2p-2\g-\vt p (1-\g)}{2p}$ shows that for
\begin{align}\label{concentration:svm-analysis-hxy} 
r \geq c_{\g,p,M,V,1}^{\frac{2p}{2p-2\g-\vt p (1-\g)}} \biggl( \frac{a^{2\g p}}{\lb^{2\g}n^p}  \biggr)^{\frac{1}{2p-2\g-\vt p(1-\g)}}
\end{align}
we have $r > 8 \cdot \frac{2+2^\a}{2-2^\a} \cdot  C_1(\g)2^\g|L|_{M,1}^\g a^\g   \bigl(\frac r \lb\bigr)^{\frac \g p}(V r^\vt)^{\frac {1-\g}2}n^{-\frac 1 2}$. Similarly, $1-\frac{2\g}{(1+\g)p} = \frac{p+\g p-2\g}{(1+\g)p}$
shows that for
\begin{align}\label{concentration:svm-analysis-hxz} 
 r \geq c_{\g,p,M,B,2}^{\frac {(1+\g)p}{p+\g p-2\g}} \biggl( \frac{a^{2\g p}}{\lb^{2\g}n^p}  \biggr)^{\frac {1}{p+\g p-2\g}}
\end{align}
we have $r > 8 \cdot \frac{2+2^\a}{2-2^\a} \cdot C_2(\g)\bigl( 2^\g|L|_{M,1}^\g a^\g \bigr)^{\frac 2{1+\g}}\bigl(\frac r \lb\bigr)^{\frac {2\g}{(1+\g)p}}  B^{\frac {1-\g}{1+\g} }n^{ -\frac {1} {1+\g}}$.
To unify Conditions \eqref{concentration:svm-analysis-hxy}  and 
\eqref{concentration:svm-analysis-hxz} we note that 
we have $p + \g p - 2\g = 2p -2\g -p(1-\g) \leq 2p-2\g -\vt p (1-\g)$ and since we are in the case 
$a^{2\g p} \leq \lb^{2\g} n^p$ we conclude that 
\begin{align*}
 \biggl( \frac{a^{2\g p}}{\lb^{2\g}n^p}  \biggr)^{\frac {1}{p+\g p-2\g}}
\leq
\biggl( \frac{a^{2\g p}}{\lb^{2\g}n^p}  \biggr)^{\frac{1}{2p-2\g-\vt p(1-\g)}} \, .
\end{align*}
Consequently, if we set $K_{\g,p,\vt,M,V,B}:= \max \bigl\{  c_{\g,p,M,V,1}^{\frac{2p}{2p-2\g-\vt p (1-\g)}}, c_{\g,p,M,B,2}^{\frac {(1+\g)p}{p+\g p-2\g}}  \bigr \}$, then for all 
\begin{align}\label{eq:extra-assump-on-r}
 r \geq K_{\g,p,\vt,M,V,B}
 \cdot \biggl( \frac{a^{2\g p}}{\lb^{2\g}n^p}  \biggr)^{\frac{1}{2p-2\g-\vt p(1-\g)}}
\end{align}
we have \eqref{concentration:svm-analysis-hxxx}.

Let us now fix an 
arbitrary $\e>0$ and an $\fO\in F$ with 
\begin{align*}
 \lb \norm \fO_F^p + \RP L \fO \leq \inf_{f\in F} \bigl( \lb \norm f_F^p + \RP L f  \bigr)  + \e\,.
\end{align*}
Since $F\subset \sLx q \PX$ is dense, we have seen around \eqref{eq:risk-approx} that 
$\RPxB L F  = \RPB L$ holds true, and hence our choice of $f_0$ ensures 
\begin{align*}
 \lb \norm \fO_F^p \leq  \lb \norm \fO_F^p + \RP L \fO - \RPB L \leq A_p(\lb) + \e\, .
\end{align*}
Writing $c_F := \snorm{F\hookrightarrow \sL \infty X}$, 
the last estimate in turn implies 
\begin{align*}
\inorm \fO \leq c_F  \norm \fO_F \leq  c_F \Bigl( \frac {A_p(\lb) + \e}\lb  \Bigr)^{1/p}\, .
\end{align*}
% 
% Then we have $ \lb \norm \fO_F^p \leq  \lb \norm \fO_F^p + \RP L \fO - \RPB L \leq A_p(\lb) + \e$, which in turn yields 
% \begin{align*}
% \norm \fO_F \leq  \Bigl( \frac {A_p(\lb) + \e}\lb  \Bigr)^{1/p}\, .
% \end{align*}
% Now the inclusion $F \subset \sL \infty X$ automatically 
% guarantees that the resulting embedding map $F\to  \sL \infty X$
% is continuous, see e.g.~\cite[Lemma 23]{ScSt25a}, and hence we have 
%  $c_F< \infty$ and
% \begin{align*}
% \inorm \fO \leq c_F \Bigl( \frac {A_p(\lb) + \e}\lb  \Bigr)^{1/p}\, .
% \end{align*}
Using the Growth Condition \eqref{eq:loss-growth-order-q}, 
we thus find 
\begin{align*}
\inorm{L\circ \fO} 
\leq B + B\inorm \fO ^q
\leq B + B \cdot c_F^q \Bigl( \frac {A_p(\lb) + \e}\lb  \Bigr)^{q/p}
=: B_0\, .
\end{align*}
Let us now specify $\e$ by 
\begin{align*}
\e :=  \min\biggl\{ 2^{-1}\cdot    c_F^{-p} \cdot \biggl( \frac {n}{15  c \t }  \biggr)^{p/q} \cdot |L|_{M,1}^{-p^2/q} \cdot  \lb ^{1+p/q}, \frac \lb {9 |L|_{M,1}^{p}} \biggr\}\, .
\end{align*}
Note that this definition implies $(\frac {\e}\lb)^{q/p} \leq   \frac {2^{-q/p}n}{15   c c_F^{q}\t}  \lb |L|_{M,1}^{-p}  $, and therefore we obtain 
\begin{align} \nonumber
B_0
&\leq B + 2^{q/p} B \cdot c_F^q \biggl(\Bigl( \frac {A_p(\lb)}\lb  \Bigr)^{q/p} +  \Bigl( \frac {\e}\lb  \Bigr)^{q/p}\biggr) \\ \label{concentration:svm-analysis-final-h1} 
&\leq B + 2^{q/p} B \cdot c_F^q  \Bigl( \frac {A_p(\lb)}\lb  \Bigr)^{q/p} +    \frac { n}{15   \t} \lb |L|_{M,1}^{-p}  \, .
\end{align}

With these preparations we now define 
\begin{align*}
 r &:= 
 \bigl(\lb \norm\fO_F^p +\RP L \fO  - \RPB L\bigr) 
 + 
 \lb |L|_{M,1}^{-p}/3
 +
 K_{\g,p,\vt,M,V,B} \cdot \biggl( \frac{a^{2\g p}}{\lb^{2\g}n^p}  \biggr)^{\frac{1}{2p-2\g-\vt p(1-\g)}} \\
 &\qquad + 
 \Bigl(\frac{72 V\t}n\Bigr)^{\!\frac 1{2-\vt}}  
 +
 \frac {5B_0\t }{n} \, .
\end{align*}
By the definition of $r^*$, see \eqref{eq:def-r-star}, 
we then have $r>r^*$. In addition, we have $r\geq  \lb |L|_{M,1}^{-p}/4$
and hence we have \eqref{eq:a-vs-B}. As mentioned above, the function 
$\p_n(r)$ thus satisfies \eqref{concentration:rerm2-rad-est}
as well as the doubling condition for $\a$ specified in 
\eqref{eq:doubling-alpha}.
In addition, \eqref{eq:extra-assump-on-r} is satisfied,
and therefore also the implicit inequality \eqref{concentration:svm-analysis-hxxx}. Finally, the remaining two conditions in 
\eqref{conc-adv:crerm-r-cond}, namely 
\begin{align*}
r > \biggl(\frac{72 V\t}n\biggr)^{\frac 1{2-\vt}} \myqquad \mbox{ and } \myqquad r >  \frac {5B_0\t }{n}\, , 
\end{align*}
are obviously also satisfied. 
 Theorem \ref{concentration:rerm2} then shows 
\begin{align*}
 \lb \norm {f_D}_F^p  + \RP L \fDc - \RPB L  \leq   6 \bigl(  \lb \norm {\fO}_F^p + \RP L {\fO} -\RPB L\bigr) 
+ 3 r  + 3\epsilon_D 
\end{align*}
with probability $\P^n$ not less than $1-3\eul^{-\t}$. It therefore
remains to verify that this inequality implies 
 the assertion.
 To this end, we first observe that our definitions yield 
 \begin{align*}
 6 \bigl(  \lb \norm {\fO}_F^p + \RP L {\fO} -\RPB L\bigr)  + 3r 
 &= 
 9 \bigl(  \lb \norm {\fO}_F^p + \RP L {\fO} -\RPB L\bigr) 
 + 
 \lb|L|_{M,1}^{-p}\\
 &\qquad +
 3K_{\g,p,\vt,M,V,B} \cdot \biggl( \frac{a^{2\g p}}{\lb^{2\g}n^p}  \biggr)^{\frac{1}{2p-2\g-\vt p(1-\g)}} \\
 &\qquad + 
 3\Bigl(\frac{72 V\t}n\Bigr)^{\!\frac 1{2-\vt}}  
 +
 \frac {15B_0\t }{n} \, .
 \end{align*}
 Moreover, by the choice of $\fO$ and the definition of $\e$ we have 
 \begin{align*}
  9 \bigl(  \lb \norm {\fO}_F^p + \RP L {\fO} -\RPB L\bigr) 
  \leq 
  9 A_p(\lb ) + 9 \e
  \leq 
  9 A_p(\lb) + \lb {|L|_{M,1}^{-p}} \, .
 \end{align*}
 In addition, \eqref{concentration:svm-analysis-final-h1}  implies 
 \begin{align*}
  \frac {15B_0\t }{n} 
  \leq 
   \frac {15B\t }{n} + 15 \cdot  2^{q/p} B \cdot c_F^q  \Bigl( \frac {A_p(\lb)}\lb  \Bigr)^{q/p} \cdot \frac \t n+  \lb|L|_{M,1}^{-p} \, .
 \end{align*}
Combining these considerations we then quickly obtain the assertion 
 for the constant $K:= \max\{3K_{\g,p,\vt,M,V,B} , 2B\}$.
\end{proof}

% --------------------------------------------------------------------------------------
% --------------------------------------------------------------------------------------

The best rates in  \cref{concentration:bsf-svm-analysis} are obtained by optimizing with respect to $\lb$.
The following lemma provides the result of this optimization up to constants.

\begin{lemma}\label{lem:hyper-param-opt}
Let 
 $F\subset \sLx \infty X$ be an RKBS such that 
 Assumptions  \assref{approx-error} is satisfied. 
 In addition, let $\vt\in [0,1]$, $p,q\geq 1$, and $\g\in (0,1)$. 
We define 
\begin{align*}
\rho := \min\biggl\{\frac{p+\b(2-p)}{2\b + q(1-\b)}, \frac{p+\b(2-p)}{2\g   +   4\b   - 2\g \b     - 2\vt  \b + 2\vt \g \b    }  \biggr\}
\end{align*}
and $\lb_n := n^{- \rho}$ for all $n\geq 1$. Then for all $\t\geq 1$ and $n\geq 1$ we have 
        \begin{align*}
        &\frac {\t}{{n}}
        + n^{-\!\frac 1{2-\vt}}
        + \inf_{\lambda\in (0,1]} 
        \bigg(
        \lambda 
        + A_p(\lambda) 
        +\frac {\t }{{n}} \Bigl(  \frac {A_p(\lambda)}{\lambda}  \Bigr)^{q/p}
        + (\lambda^{2\gamma}n^p)^{-\frac{1}{2p-2\g-\vt p(1-\g)}}\bigg)
        \\ &\le 
        \frac {\t}{{n}}
        + n^{-\!\frac 1{2-\vt}}
        + 
        \lambda_n 
        + A_p(\lambda_n) 
        +\frac {\t }{{n}} \Bigl(  \frac {A_p(\lambda_n)}{\lambda_n}  \Bigr)^{q/p}
        + (\lambda_n^{2\gamma}n^p)^{-\frac{1}{2p-2\g-\vt p(1-\g)}}
        \\ &\le         
        K n^{-\alpha  } \cdot  \tau  \, , 
    \end{align*}
    where $K$ is a constant independent of $\t$ and $n$ and  
     \begin{align*}
 \a :=  \min\biggl\{\frac{2\b}{\b(2-q) + q} ,  \frac{\b}{\g +   \b(2 - \g -\vt + \vt \g)   }  \biggr\} \, .
 \end{align*}
\end{lemma}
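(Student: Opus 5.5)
The first inequality is trivial: $\lb_n\in(0,1]$ is one admissible choice in the infimum. So everything reduces to bounding each of the terms at $\lb=\lb_n=n^{-\rho}$ by $K n^{-\a}\t$, using $\t\geq 1$ to absorb $\t$ into the terms without it.

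\textbf{Reduction to power bounds.} By \assref{approx-error} and \cref{lem:reg-trafo}, $A_p(\lb)\le c\,\lb^{\b_p}$ for $\lb\in(0,1]$, where
\begin{align*}
\b_p:=\frac{p\b}{p+\b(2-p)}\,;
\end{align*}
for $p=1$ this is the exponent $2\b/(1+\b)$ quoted earlier. Then each of the five terms becomes a pure power of $n$. The exponents to compare with $\a$ are:
\begin{align*}
&\text{(i) } \rho, \qquad \text{(ii) } \rho\b_p, \qquad \text{(iii) } 1-\rho(1-\b_p)\tfrac qp, \\
&\text{(iv) } \frac{p-2\g\rho}{2p-2\g-\vt p(1-\g)}, \qquad \text{(v) } 1 \text{ and } \tfrac1{2-\vt}.
\end{align*}

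\textbf{Checking the exponents.} Since $\b_p\le 1$, term (i) is dominated by (ii). Writing $\rho=(p+\b(2-p))\,\sigma$, the two candidates for $\sigma$ are
\begin{align*}
\sigma_1=\frac1{2\b+q(1-\b)}, \qquad \sigma_2=\frac1{2\g+4\b-2\g\b-2\vt\b+2\vt\g\b}.
\end{align*}
Then $\rho\b_p=p\b\sigma$. For $\sigma=\sigma_1$ this gives $\frac{p\b}{2\b+q(1-\b)}$, which equals the first entry of $\a$ when $p=2$. For $\sigma=\sigma_2$ it gives $\frac{p\b}{2(\g+\b(2-\g-\vt+\vt\g))}$, again the second entry of $\a$ at $p=2$.

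For general $p$, I would not treat the rate as $p$-independent without care. Instead I would verify directly that $\min\{\text{(ii)},\text{(iii)},\text{(iv)}\}\ge\a$ as written. I expect this to go through because each entry of $\rho$ is the balancing point of (ii) with (iii), respectively (ii) with (iv):
\begin{itemize}
\item Solving $\rho\b_p=1-\rho(1-\b_p)q/p$ gives $\rho\bigl(\b_p+(1-\b_p)q/p\bigr)=1$. Substituting $\b_p$ yields exactly $\rho=(p+\b(2-p))/(2\b+q(1-\b))$.
\item Solving $\rho\b_p=\frac{p-2\g\rho}{2p-2\g-\vt p(1-\g)}$ similarly yields the second expression in $\rho$.
\end{itemize}

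\textbf{Monotonicity and the role of the minimum.} Exponent (ii) is increasing in $\rho$, while (iii) and (iv) are decreasing in $\rho$. Taking $\rho$ to be the minimum of the two balancing points therefore leaves (iii) and (iv) at least the balanced value. Hence the worst exponent equals $\rho\b_p$, which is at least $\b\cdot\min\{\dots\}$ in the form of $\a$. I will check that this is literally $\a$; the $p$-dependence cancels because $\rho\b_p=p\b\sigma$ and I expect the displayed $\a$ to correspond to $p=2$ normalization. If a discrepancy with the stated $\a$ appears for $p\ne2$, I would fix it by tracking the factor $p/2$.

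Finally, (v) follows from $\a\le 1/(2-\vt)$ and $\a\le1$. These hold because the second entry of $\a$ satisfies
\begin{align*}
\g+\b(2-\g-\vt+\vt\g)\ge\b(2-\vt),
\end{align*}
using $\g(1-\b+\vt\b)\ge0$.

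\textbf{Main obstacle.} The algebraic verification of the balancing-point identities and of the exponent comparisons (especially the $p$-dependence) is the step I expect to be most error-prone. Everything else is bookkeeping of constants.
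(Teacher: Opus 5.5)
Your overall strategy is the paper's: take $\lambda=\lambda_n$, replace $A_p$ by a power of $\lambda$ via \cref{lem:reg-trafo}, and show that every resulting exponent of $n$ is at least $\alpha$. The paper delegates the balancing step to \cite[Lemma A.1.7]{StCh08}; your explicit monotonicity argument is fine, and so is your check $\alpha\le\frac1{2-\vartheta}\le1$ for the terms $\frac\tau n$ and $n^{-1/(2-\vartheta)}$.

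The gap is the exponent you feed in. \Cref{lem:reg-trafo} with $r=2$ gives $A_p(\lambda)\le c\,\lambda^{\hat\beta}$ with
\[
\hat\beta=\frac{2\beta}{p+\beta(2-p)}\,,
\]
not $\beta_p=\frac{p\beta}{p+\beta(2-p)}$: the numerator is $r\beta$, not $p\beta$. Your own sanity check contradicts your formula. At $p=1$ it gives $\frac{\beta}{1+\beta}$, not $\frac{2\beta}{1+\beta}$. For $p>2$ and $\beta=1$ it gives $p/2>1$, contradicting your claim $\beta_p\le 1$. As a result, two steps fail for $p\neq 2$:
\begin{itemize}
\item The balancing identities you assert do not reproduce the stated $\rho$. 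For $p=1$ the first one yields $\frac{1+\beta}{\beta+q}$ rather than $\frac{1+\beta}{2\beta+q(1-\beta)}$.
\item Your exponent (ii) equals $\rho\beta_p=p\beta\sigma=\frac p2\,\alpha$. So for $p<2$ you obtain at best $n^{-p\alpha/2}$, and for $p>2$ the power bound with $\beta_p$ is not available at all.
\end{itemize}
Your proposed remedy of ``tracking the factor $p/2$'' would therefore change the rate instead of proving the lemma as stated.

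With the correct exponent everything closes, and there is no $p$-dependence to repair. First, $\hat\beta\le 1$ because $p+\beta(2-p)-2\beta=p(1-\beta)\ge 0$. Second,
\begin{align*}
\hat\beta+(1-\hat\beta)\frac qp&=\frac{2\beta+q(1-\beta)}{p+\beta(2-p)}\,,\\
2\gamma+\bigl(2p-2\gamma-\vartheta p(1-\gamma)\bigr)\hat\beta&=\frac{p\,(2\gamma+4\beta-2\gamma\beta-2\vartheta\beta+2\vartheta\gamma\beta)}{p+\beta(2-p)}\,.
\end{align*}
So the two entries of $\rho$ are exactly the balancing points of (ii) with (iii) and of (ii) with (iv), for every $p\ge 1$. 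Write $\rho=(p+\beta(2-p))\sigma$ with $\sigma=\min\{\sigma_1,\sigma_2\}$. Then the worst exponent is $\rho\hat\beta=2\beta\sigma$, which is literally the stated $\alpha$; only $\rho$, not the rate, depends on $p$. With this correction your monotonicity argument finishes the proof in the same way as the paper.
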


\begin{proof}
The first inequality is trivial.
For the proof of the second inequality we note that 
 Lemma \ref{lem:reg-trafo} shows
\begin{align*}
A_p(\lb) \leq c_{p,\b}\lb^{\frac{2\b}{p+\b(2-p)}} \, , \myqquad \lb\geq 0,
\end{align*}
where $c_{p,\b}$ is a constant only depending on $p$ and $\b$.
We   define $\hat\b := \frac{2\b}{p+\b(2-p)}$
and note that $\b\in (0,1]$ implies $\hat \b\in (0,1]$.
Moreover, we have 
\begin{align*}
\frac{p}{q+(p-q)\hat \b}
= 
\frac{p(p+\b(2-p))}{q p+q\b(2-p) + 2(p-q) \b   }
= 
\frac{p(p+\b(2-p))}{qp-q\b p + 2p \b   }
= 
\frac{p+\b(2-p)}{2\b + q(1-\b)}
\end{align*}
as well as 
\begin{align*}
\frac{p}{2\g + (2p-2\g-\vt p + \vt p \g)\hat \b} 
&= 
\frac{p(p+\b(2-p))}{2\g (p+\b(2-p)) + 2(2p-2\g-\vt p + \vt p \g)\b} \\
&=
\frac{p(p+\b(2-p))}{2\g p+ 4\g \b - 2\g \b p   +   4\b p - 4\g \b - 2\vt p \b + 2\vt p\g \b    } \\
&=
\frac{p+\b(2-p)}{2\g   +   4\b  - 2\g \b      - 2\vt  \b + 2\vt \g \b    }\, .
\end{align*}
Consequently, we find 
\begin{align*}
 \rho = \min\biggl\{ \frac{p}{q+(p-q)\hat \b}, \frac{p}{2\g + (2p-2\g-\vt p + \vt p \g)\hat \b}     \biggr\}~.
\end{align*}

In view of the infimum, 
we now consider the function $h:(0,1]\to \R$ defined by
\begin{align*}
h(\lb) := \lb^{\hat \b} + n^{- \frac{p}{2p-2\g-\vt p(1-\g)}} \lb^{-\frac{2\g}{2p-2\g-\vt p(1-\g)}} + n^{-1} \lb^{\frac{(\hat \b-1)q}p}\, .
\end{align*}
By \cite[Lemma A.1.7]{StCh08} and its proof we then know that for all $n\geq 1$ we have 
\begin{align*}
  c n^{-\rho \hat \b} \leq \min_{\lb\in (0,1]} h(\lb) \leq h(\lb_n) \leq 
  3 n^{-\rho \hat \b} \, ,
\end{align*}
where we used the variable $t:= n^{-1}$,   $\g':= \frac{p}{2p-2\g-\vt p(1-\g)}$, $\a':= 2\g/p$, and $r':= q/p$, and $c$ is a constant 
independent of $n$.  
Using the definition of $\rho$, $\hat \beta$,  $\a$, as well as $\t\geq 1$ we thus find 
\begin{align*}
 A_p(\lambda_n) 
        +\frac {\t }{{n}} \Bigl(  \frac {A_p(\lambda_n)}{\lambda_n}  \Bigr)^{q/p}
        + (\lambda_n^{2\gamma}n^p)^{-\frac{1}{2p-2\g-\vt p(1-\g)}}
    \leq
    K n^{-\rho \hat \b} \t = K n^{-\a} \t
\end{align*}
for all $n\geq 1$ and $\t\geq 1$ and a constant $K$ independent of $n$ and $\t$. Since the terms 
$ \frac {\t}{{n}}
        + n^{-\!\frac 1{2-\vt}}
        + 
        \lambda_n$ are dominated by  $n^{-\a} \t$ we then find the assertion for a suitably modified constant $K$.
\end{proof}

% --------------------------------------------------------------------------------------
% --------------------------------------------------------------------------------------

\begin{corollary}\label{cor:rates-unified-new}
Let 
 $L:X\times Y\times \R\to [0,\infty)$ be a loss, $\P$ be a distribution on $X\times Y$, and 
 $F\subset \sLx \infty X$ be an RKBS such that 
% Assumptions  \ref{ass:approx-error} and \ref{ass:loss} 
Assumptions \assrefshort{approx-error} and \assrefshort{loss}
are satisfied. 
In addition, assume that 
the average entropy number bound \eqref{thm:learning_rates_tempered_C_minimal_general_form}
holds with $a\geq B$.
We define 
\begin{align*}
\rho := \min\biggl\{\frac{p+\b(2-p)}{2\b + q(1-\b)}, \frac{p+\b(2-p)}{2\g   +   4\b   - 2\g \b     - 2\vt  \b + 2\vt \g \b    }  \biggr\}
\end{align*}
and $\lb_n := n^{- \rho}$ for all $n\geq 1$. Then there exists
a constant $K\geq 1$  such that for all $n\geq 1$
and $\t\geq 1$ any learning method guaranteeing \eqref{eq:cr-crm-4-bsf}
with $\lb = \lb_n$
satisfies
\begin{align*}
\lb_n\norm{f_{D,\lb_n}}^p_F +\ \RP L {\clippfo_{D,\lb_n}} - \RPB L 
\leq 
K n^{- \a} \t + 3\epsilon_D
\end{align*}
 with probability $\P^n$ not less than $1-3\eul^{-\t}$, where
 \begin{align*}
 \a :=  \min\biggl\{\frac{2\b}{\b(2-q) + q} ,  \frac{\b}{\g +   \b(2 - \g -\vt + \vt \g)   }  \biggr\} \, .
 \end{align*}
\end{corollary}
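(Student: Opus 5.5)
The plan is to combine \cref{concentration:bsf-svm-analysis} with \cref{lem:hyper-param-opt}. First I fix $n\geq 1$, $\t\geq 1$ and a learning method satisfying \eqref{eq:cr-crm-4-bsf} with $\lb=\lb_n=n^{-\rho}$. Since $\rho>0$, we have $\lb_n\in(0,1]$. The assumptions of \cref{concentration:bsf-svm-analysis} are then met: \assref{loss}, the average entropy bound with $a\geq B$, and, via \assref{approx-error}, the density of $F\subset\sLx q\PX$ together with $F\subset \sLx\infty X$. Applying \cref{concentration:bsf-svm-analysis} with $\lb=\lb_n$ gives, with probability at least $1-3\eul^{-\t}$,
\begin{align*}
\lb_n\norm{f_{D,\lb_n}}^p_F + \RP L{\clippfo_{D,\lb_n}}-\RPB L
&\leq 9A_p(\lb_n) + K_1 a^{\frac{2\g p}{2p-2\g-\vt p(1-\g)}}(\lb_n^{2\g}n^p)^{-\frac{1}{2p-2\g-\vt p(1-\g)}}\\
&\quad + 2^{q/p+4}B c_F^q\Bigl(\frac{A_p(\lb_n)}{\lb_n}\Bigr)^{q/p}\frac{\t}{n} + 3\lb_n|L|_{M,1}^{-p}\\
&\quad + 3(72V\t/n)^{\frac1{2-\vt}} + 15B\t/n + 3\epsilon_D .
\end{align*}

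Next I group these terms. Each term except $3\epsilon_D$ is bounded by a constant, independent of $n$ and $\t$, times one of the summands $\t/n$, $n^{-1/(2-\vt)}$, $\lb_n$, $A_p(\lb_n)$, $\frac{\t}{n}(A_p(\lb_n)/\lb_n)^{q/p}$, or $(\lb_n^{2\g}n^p)^{-1/(2p-2\g-\vt p(1-\g))}$. The one term that needs care is $(72V\t/n)^{1/(2-\vt)}$. Since $\t\geq1$ and $2-\vt\geq 1$, we have $\t^{1/(2-\vt)}\leq\t$, so this term is at most $(72V)^{1/(2-\vt)}\,\t\, n^{-1/(2-\vt)}$. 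The factors $c_F$, $a$, $B$, $V$ and $|L|_{M,1}$ are fixed by the setting and can be absorbed into the constant.

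Then I apply the second inequality of \cref{lem:hyper-param-opt} with the same $\rho$ and $\lb_n$. Its hypotheses are the same \assref{approx-error} together with $\vt\in[0,1]$, $p,q\geq1$ and $\g\in(0,1)$. It bounds the sum of these summands by $K_2 n^{-\a}\t$. The terms in the display above carry at most one factor of $\t$, so the sum is at most $\max(\text{constants})\cdot\t\cdot(\text{sum of summands})$. Setting $K:=\max\{1,\;C\cdot K_2\}$, where $C$ collects the constants from the previous paragraph, gives the claim with probability at least $1-3\eul^{-\t}$.

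The main obstacle is bookkeeping rather than a conceptual step: checking that every prefactor is independent of $n$ and $\t$, and handling the $\t$-powers uniformly via $\t\geq1$. One more check is needed: $\epsilon_D$ must be measurable, as \cref{concentration:bsf-svm-analysis} requires. This is part of the definition in \eqref{eq:cr-crm-4-bsf}, so it carries over directly.
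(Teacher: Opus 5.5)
Your proposal follows the paper's proof exactly: the paper obtains the corollary by applying \cref{concentration:bsf-svm-analysis} with $\lambda=\lambda_n$ and then bounding the resulting terms with \cref{lem:hyper-param-opt}. Your treatment of $(72V\tau/n)^{1/(2-\vartheta)}$ via $\tau^{1/(2-\vartheta)}\le\tau$ spells out a step the paper leaves implicit.

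There is one bookkeeping slip. You bound the display by $C\tau$ times the lemma's sum, but that sum already contains $\tau/n$ and $\frac{\tau}{n}(A_p(\lambda_n)/\lambda_n)^{q/p}$. If you then use the lemma's bound $K_2 n^{-\alpha}\tau$, you end up with $\tau^2$, not $\tau$. There are two easy fixes:
\begin{itemize}
\item factor $\tau$ out of the $\tau$-free summands and apply the lemma with $\tau=1$; or
\item keep the $\tau$-carrying summands as they are and bound the extra term by $\tau n^{-1/(2-\vartheta)}\le\tau n^{-\alpha}$, which holds because $\alpha\le 1/(2-\vartheta)$.
\end{itemize}
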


\begin{proof}
The assertion directly follows from combining \cref{concentration:bsf-svm-analysis} with \cref{lem:hyper-param-opt}.
\end{proof}

\begin{proof}[Proof of \cref{thm:learning_rates_tempered_C_minimal_general_form}]
For a fixed $n\geq 1$ we define $\varepsilon_n := 1/n$ and 
 $\lambda_n\equalDef n^{-\rho}$, where 
 \begin{align*}
\rho := \min\biggl\{\frac{1+\b}{2\b + q(1-\b)}, \frac{1+\b}{2\g   +   4\b   - 2\g \b     - 2\vt  \b + 2\vt \g \b    }  \biggr\}
\end{align*}
 is defined as in 
 \Cref{cor:rates-unified-new} for $p=1$. 
For these $\lb_n$ and $\varepsilon_n$ we now choose 
a measurable map $D\mapsto t_D$ according to 
 \cref{thm:self-reg-implies-crem}.
By \eqref{thm:self-reg-implies-crem-eq} we then see that 
$D\mapsto f_{D,t_D}$ is an $\epsilon_D$-approximate regularized 
empirical risk minimizer with respect to the regularization parameter $\lb_n$ and $p=1$, where 
\begin{align*}
\epsilon_D \equalDef \lambda_n\bigl( (\const -1) \norm {g_{D,\lb_n}}_F +  (\const + 1)  \constt + \varepsilon_n \bigr)  \,.
\end{align*}
Here, $D\mapsto g_{D,\lb_n}$ is the 
controlled $\lb_n\varepsilon_n$-approximate regularized 
empirical risk minimizer found in Lemma \ref{lem:nice-cr-erm} with $p=1$.
Now, by \Cref{cor:rates-unified-new} we know that 
\begin{align*}
\lb_n\norm{g_{D,\lb_n}}_F +\ \RP L {\clippgo_{D,\lb_n}} - \RPB L 
\leq 
K n^{- \a} \t + 3\lb_n\varepsilon_n
\end{align*}
holds with probability $\P^n$ not less than $1-3\eul^{-\t}$.
For such $D$ the upper bound  $\a\leq 1$ implies 
$\lb_n\norm{g_{D,\lb_n}}_F \leq (K+3) n^{- \a} \t$, which in turn 
gives 
\begin{align*}
\epsilon_D  
&\leq  
\const \cdot \lambda_n \norm {g_{D,\lb_n}}_F +  2\lambda_n\const   \constt + \lambda_n\varepsilon_n   \\
&\leq 
\const (K+4) n^{- \a} \t + 2\lambda_n\const   \constt
\end{align*}
with probability $\P^n$ not less than $1-3\eul^{-\t}$.
Moreover, applying \Cref{cor:rates-unified-new} to 
the $\epsilon_D$-approximate regularized 
empirical risk minimizer 
$D\mapsto f_{D,t_D}$ shows that
\begin{align*}
\lb_n\norm{ f_{D,t_D}}_F +\ \RP L {\clippfo_{D,t_D}} - \RPB L 
\leq 
K n^{- \a} \t + 3\epsilon_D
\end{align*}
holds with probability $\P^n$ not less than $1-3\eul^{-\t}$. Inserting the previous estimate on $\epsilon_D$ then gives the assertion with the algorithm independent  constant 
$16K$.
\end{proof}

\subsection{Proofs for \cref{sec:cross-val_for_gd}}

A key technique to prove \cref{lem:abstract_CV} is the following lemma, which is an improved variant of \cite[Lemma 6.30]{StCh08}.
\begin{lemma}\label{lem:finite_infimum_approximation}
        Let $F:\R_{>0}  \to \R $ be non-increasing and $A:\R_{>0} \to \R_{\ge 0}$ be a non-decreasing  function fulfilling for all $0<a\le b$ the inequality 
        \begin{align}
            \label{eq:monotonicity_of_the_reg_excess_risk}
            A(b) \le \frac{b} {a} A(a)~.
        \end{align}
        Let $\Lambda =\{\lambda_1 < \dots < \lambda_m\}\subset \R_{>0}$  be a finite set with $\lambda_m \ge 1$ and $\cnet\ge 1$  fulfilling
        \begin{align}
        %\label{eq:geometric_cover}
        \nonumber
        \lambda_{i+1} \le \cnet \lambda_i
        \end{align}
        for all $i\le m-1$.
        Then,
        \begin{align}
            \nonumber
            %\label{eq:finite_infimum_approximation}
            \inf_{\lambda\in \Lambda} F(\lambda) + A(\lambda) 
            \le  A(\lambda_1) 
            + \inf_{x \in (0,1]} \big( F(x) + \cnet A(x) \big)~.
        \end{align}
    \end{lemma}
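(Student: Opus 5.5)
Fix an arbitrary $x\in(0,1]$. It suffices to find some $\lambda\in\Lambda$ with $F(\lambda)+A(\lambda)\le A(\lambda_1)+F(x)+\cnet A(x)$; taking the infimum over $x$ then gives the claim. I will split into two cases according to where $x$ sits relative to the grid.

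\emph{Case $x\le\lambda_1$.} Choose $\lambda:=\lambda_1$. Because $F$ is non-increasing and $x\le\lambda_1$, we have $F(\lambda_1)\le F(x)$. Hence
\[
F(\lambda_1)+A(\lambda_1)\le F(x)+A(\lambda_1)\le A(\lambda_1)+F(x)+\cnet A(x),
\]
using $A\ge 0$. This is the only place where the additive term $A(\lambda_1)$ is needed.

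\emph{Case $x>\lambda_1$.} Since $x\le 1\le\lambda_m$, there is an index $i\in\{1,\dots,m-1\}$ with $\lambda_i< x\le\lambda_{i+1}$. (If $x=\lambda_j$ for some $j$, simply take $\lambda=\lambda_j$ and the bound is trivial.) I pick the grid point to the right, $\lambda:=\lambda_{i+1}\ge x$, so that monotonicity of $F$ gives $F(\lambda_{i+1})\le F(x)$. For the $A$-term I apply \eqref{eq:monotonicity_of_the_reg_excess_risk} with $a=x\le b=\lambda_{i+1}$:
\[
A(\lambda_{i+1})\le\frac{\lambda_{i+1}}{x}A(x)\le\frac{\lambda_{i+1}}{\lambda_i}A(x)\le\cnet A(x),
\]
where the middle step uses $x>\lambda_i$ and $A(x)\ge0$. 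Summing the two bounds gives $F(\lambda_{i+1})+A(\lambda_{i+1})\le F(x)+\cnet A(x)$, which is even stronger than required.

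The main care point is choosing the upper neighbour $\lambda_{i+1}$ rather than the lower one. Going right is what lets the non-increasing $F$ and the ratio condition on $A$ work together, and the existence of that neighbour is exactly where the hypothesis $\lambda_m\ge1$ enters. The remaining uncovered region, $x$ below the grid, is then absorbed by the $A(\lambda_1)$ term, as in the first case.
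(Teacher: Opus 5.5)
Your proof is correct and follows the paper's argument essentially verbatim. You use the same case split at $\lambda_1$, the same choice of the right neighbour $\lambda_{i+1}$ with $\lambda_i<x\le\lambda_{i+1}$, and the same combination of the ratio condition with $\lambda_{i+1}\le\cnet\lambda_i$; the paper only orders that chain of inequalities differently.
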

    \begin{proof}
        If $0<x\le \lambda_1$ we trivially have
        \begin{align*}
            F(\lambda_1) + A(\lambda_1)\le F(x)+ A(x) +A(\lambda_1)~.
        \end{align*}

        If $x> \lambda_1$, let $i \equalDef \max\{1\le j \le m-1\mid \lambda_j < x\} $, that is $\lambda_{i}< x \le \lambda_{i+1}$.
        By \cref{eq:monotonicity_of_the_reg_excess_risk} and monotonicity of $F$ 
        we obtain
        \begin{align*}
        %            F(x) + A_2(x) &\ge  F(\lambda_{i}) + \frac{x}{\lambda_i}A_2(\lambda_i) \ge F(\lambda_i) + \frac{\lambda_{i-1}}{\lambda_i} A_2(\lambda_i) \ge c_1^{-1} \left( F(\lambda_i) + A_2(\lambda_i)\right)
            F(\lambda_{i+1}) + A(\lambda_{i+1}) 
            &\le F(\lambda_{i+1}) + \cnet \frac{\lambda_{i}}{\lambda_{i+1}} A(\lambda_{i+1}) \le F(\lambda_{i+1}) + \cnet \frac{x}{\lambda_{i+1}}A(\lambda_{i+1}) 
            \\ & \le A(\lambda_1)+ F(x) + \cnet A(x)
        \end{align*}
        and we obtain the claim.
    \end{proof}

\begin{proof}[Proof of \cref{lem:abstract_CV}]
    In the following, $K_i$ are constants depending neither  on $n_1,n_2, \tau,\Tfrac$ nor $ \Psi$ nor the algorithm $f_{D,t}$ considered.
  The assumptions and \Eqref{eq:cv-cr-erm} enable \cref{concentration:bsf-svm-analysis} which yields
\begin{align*}
%\Psi(t)\norm\fDl^p_F \!+\! 
\risk \loss P {\clippfo_{D_1,t}} \! -\! \RPB L  & \leq
K_0 \bigg(A_p(\Psi(t)) 
+ 
((\Psi(t))^{2\gamma}n_1^p)^{\!\frac{-1}{2p-2\g-\vt p(1-\g)}} 
+\Bigl( \frac {A_p(\Psi(t))}{\Psi(t)}  \Bigr)^{q/p}
\frac {\t }{{n_1}}
\\ &\quad+
\Psi(t) 
 + \left(\frac{\tau}{n_1}\right)^{\!\frac 1{2-\vt}}  + \frac {\t}{{n_1}}
\bigg)+  3\epsilon_{D_1,\Psi(t)}
\end{align*}
with probability $\P^{n_1}$ not less than $1-3\abs{\Tfrac} e^{-\t}$
\emph{simultaneously} for all $t\in \Tfrac$.
In order to bound $\epsilon_{D_1,\Psi(t)}=\cone \Psi(t)(\norm{g_{D_1,\Psi(t)}}^p+1)$, we reiterate \cref{concentration:bsf-svm-analysis} and obtain 
%\begin{align*}
%    \varepsilon_{D_1,\Psi(t)} 
%    &= \cone \Psi(t)(\norm{g_{D_1,\Psi(t)}}^p+1) 
%    \le \cone \Psi(t)+ \cone\big(\Psi(t) \norm{g_{D_1,\Psi(t)}}^p + \risk \loss P{\clippgo_{D_1,\Psi(t)}}- \RPB \loss \big)~, 
%\end{align*}
\begin{align*}
\Psi(t)\norm{g_{D_1,\Psi(t)}}^p 
& \leq
K_0 \bigg(A_p(\Psi(t)) 
+ 
((\Psi(t))^{2\gamma}n_1^p)^{\!\frac{-1}{2p-2\g-\vt p(1-\g)}} 
+\Bigl( \frac {A_p(\Psi(t))}{\Psi(t)}  \Bigr)^{q/p}
\frac {\t }{{n_1}}
\\ &\quad+
2\Psi(t)
 + \left(\frac{\tau}{n_1}\right)^{\!\frac 1{2-\vt}}  + \frac {\t}{{n_1}}
\bigg)
 ~,
\end{align*}
simultaneously for all $t\in \Tfrac$ with probability $\P^{n_1}$ not less than $1-3\abs{\Tfrac} e^{-\t}$,
since $g_{D_1,\Psi(t)}$ fulfills \cref{eq:cr-crm-4-bsf} for $\epsilon_{D_1} =  \Psi(t)$.
Altogether, we have
%%%%%%%%copypaste from the first bound
\begin{align*}
%\Psi(t)\norm\fDl^p_F \!+\! 
\risk \loss P {\clippfo_{D_1,t}} \! -\! \RPB L  & \leq
(1+\cone) 
K_1 \bigg(
A_p(\Psi(t)) 
+ 
((\Psi(t))^{2\gamma}n_1^p)^{\!\frac{-1}{2p-2\g-\vt p(1-\g)}} 
\\ &\quad+
\Bigl( \frac {A_p(\Psi(t))}{\Psi(t)}  \Bigr)^{\! q/p}
\frac {{\t}}{{n_1}}
+
\Psi(t)
 +\left(\frac{\tau}{n_1}\right)^{\!\frac 1{2-\vt}}  + \frac {\t}{{n_1}}
\bigg)
\end{align*}
simultaneously for all $t\in \Tfrac$ with probability $\P^{n_1}$ not less than $1-6\abs{\Tfrac} e^{-\t}$.
A simple algebraic transformation,
since $\tau\ge 1$ is not further restricted, yields that
\begin{align*}
%\Psi(t)\norm\fDl^p_F \!+\! 
\risk \loss P {\clippfo_{D_1,t}} \! -\! \RPB L  & \leq
(1+\cone) 
K_1 \bigg(
A_p(\Psi(t)) 
+ 
((\Psi(t))^{2\gamma}n_1^p)^{\!\frac{-1}{2p-2\g-\vt p(1-\g)}} 
\\ &\quad+
\frac {{\t + \ln(1+\abs\Tfrac)}}{{n_1}}\Bigl(  \frac {A_p(\Psi(t))}{\Psi(t)}  \Bigr)^{q/p} 
+ \Psi(t)
\\ & \quad + \left(\frac {{\t + \ln(1+\abs\Tfrac)}}{{n_1}}\right)^{\!\frac 1{2-\vt}} +\frac {{\t + \ln(1+\abs\Tfrac)}}{{n_1}}
\bigg)
\end{align*}
hold simultaneously for all $t\in \Tfrac$ with probability $\P^{n_1}$ not less than $1-e^{-\t}$.
Let $\calF \equalDef \{\clippfo_{D_1,t} \mid t \in \Tfrac\}$. 
By \cite[Theorem 7.2]{StCh08}
we have
with probability $\P^{n_2}$ not less than $1-e^{-\t}$
that
\begin{align*}
    \risk \loss P{\clippfo_{D_1,t_{D_2}}} - \RPB L &\le 
    K_3 \bigg(
    \mathcal R^*_{\loss , P , \calF} - \mathcal R^*_{\loss , P}
    +\left(\frac{\t +\ln (1 + \abs {\Tfrac}))}{n_2}\right)^{\!\frac 1{2-\vt}} 
    \bigg) ~,
\end{align*}
where $\mathcal R^*_{\loss , P , \calF} \equalDef \inf_{t\in \Tfrac } \risk \loss P {\clippfo_{D_1,t}}$.
Combining the two above bounds, 
we obtain with probability $\P^{n_1}\times \P^{n_2}$ not less than $1-2 e^{-\t}$ that
\begin{align}
    \nonumber
    \risk \loss P {\clippfo_{D_1,t_{D_2}}} \! -\! \RPB L  
    &\leq
    6 \inf_{t\in \Tfrac} 
    K_3 \bigg(
    \mathcal R^*_{\loss , P , \calF} - \mathcal R^*_{\loss , P}
    +\left(\frac{\t +\ln (1 + \abs {\Tfrac}))}{n_2}\right)^{\!\frac 1{2-\vt}} 
    \bigg)
    \\     \label{eq:cr_discrete_bound} &\leq
    \cone
    K_4 
    \bigg(
    \Big(\frac{\t +\ln (1 + \abs {\Tfrac})}{n_2}\Big)^{\!\frac 1{2-\vt}}
    +
    \frac {{\t + \ln(1+6\abs\Tfrac)}}{{n_1}}
    \\ \nonumber &\quad+\inf_{\lambda\in \Lambda} \Big(\lambda 
    + A_p(\lambda) 
    +     (\lambda^{2\gamma}n_1^p)^{-\frac{1}{2p-2\g-\vt p(1-\g)}} 
    \\ \nonumber&\quad + \frac {{\t + \ln(1+6\abs\Tfrac)}}{{n_1}} \Bigl(  \frac {A_p(\lambda)}{\lambda}  \Bigr)^{q/p}     
    \Big)
    \bigg)~.
\end{align}
Finally, we  leverage \cref{lem:finite_infimum_approximation} to obtain the infimum over all $\lambda\in (0,1]$ instead of the discrete candidate set $\Lambda$.
To this end, a trivial adoption of 
\cite[Lemma 5.15]{StCh08}, the argumentation therein straightforwardly holds for $A_p$ in an RKBS $F$,  
shows that $(A_p(\lambda))\lambda^{-1}$
 is non-increasing. 
 As a simple consequence, $A(\lambda) \equalDef \lambda + A_p(\lambda) $ fulfills \cref{eq:monotonicity_of_the_reg_excess_risk} and the function  
 \[F(\lambda) \equalDef \frac {{\t + \ln(1+6\abs\Tfrac)}}{{n_1}}  \left(\frac{A_p(\lambda)}{\lambda} \right)^{q/p} 
 + (\lambda^{2\gamma}n_1^p)^{-\frac{1}{2p-2\g-\vt p(1-\g)}}\]
 is non-increasing.
 Note that $A(\lambda) + F(\lambda)$ are exactly the terms depending on $\lambda$ in \eqref{eq:cr_discrete_bound}.
 \cref{lem:finite_infimum_approximation} yields
\begin{align*}
    \inf_{\lambda\in \Lambda} A(\lambda)+ F(\lambda) \le A(\lambda_1) + \inf_{\lambda\in (0,1]} (F(\lambda) + \cnet A(\lambda))~.
\end{align*}
Combining this estimate with \eqref{eq:cr_discrete_bound}, we obtain
    \begin{align*}
        &\quad \risk \loss P {\clippfo_{D_1,t_{D_2}}} \! -\! \RPB L
        \\ &\leq
        \cone
        \cnet
        K 
        \Bigg(
        A_p(\lambda_1) 
        + \lambda_1
        + \bigg(\frac{\t +\ln (1 + \abs {\Tfrac})}{n_2}\bigg)^{\!\frac 1{2-\vt}}
        + \frac {{\t + \ln(1+\abs\Tfrac)}}{{n_1}}
        + n_1^{-\!\frac 1{2-\vt}}
        \\ &\qquad+ \inf_{\lambda\in (0,1]} 
        \bigg(
            \lambda 
            + A_p(\lambda) 
            +\frac {{\t + \ln(1+\abs\Tfrac)}}{{n_1}} \Bigl(  \frac {A_p(\lambda)}{\lambda}  \Bigr)^{q/p}
            + (\lambda^{2\gamma}n_1^p)^{-\frac{1}{2p-2\g-\vt p(1-\g)}}
        \bigg) 
        \Bigg)~.
    \end{align*}
    Finally, \cref{lem:hyper-param-opt}
    shows
    \begin{align*}
        &\frac {{\t + \ln(1+\abs\Tfrac)}}{{n_1}}
        + n_1^{-\!\frac 1{2-\vt}}
        \\&\qquad +\inf_{\lambda\in (0,1]} 
        \bigg(
        \lambda 
        + A_p(\lambda) 
        +\frac {{\t + \ln(1+\abs\Tfrac)}}{{n_1}} \Bigl(  \frac {A_p(\lambda)}{\lambda}  \Bigr)^{q/p}
        %\\ &\qquad 
        + (\lambda^{2\gamma}n_1^p)^{-\frac{1}{2p-2\g-\vt p(1-\g)}} \bigg)
        \\ &\le 
        K_5 n_1^{-\alpha  } ( \tau + \ln(1+\abs \Tfrac))~,
    \end{align*}
    and the claimed rate follows.

    Under the further assumptions, for a suitable hyper-parameter candidate set $\Tfrac$ such that $\Lambda = \Psi(\Tfrac)$ is  a geometric discretization   of $(0,1]$ with expansion rate $\cnet$,
    %$\Lambda_n= \{n^{-1}\cnet^0 ,\dots , n^{-1}\cnet^m\}$ of expansion factor $\cnet$.
    %we have $\abs{\Tfrac} \le K_6 \ln n$ and the minimal element $ \lambda_0 = n^{-1}$.
    the bound derived above can hence be expressed as
    \begin{align*}
        &\risk \loss P {\clippfo_{D_1,t_{D_2}}} \! -\! \RPB L
        \\ &\qquad \leq
        \cone
        \cnet
        K_7 
        \bigg(
        A_p(n^{-1}) 
        + n^{-1}        
        + ( \t + \ln\ln n) n^{-\alpha}
        + \Big(\frac{\t +\ln \ln n}{n}\Big)^{\!\frac 1{2-\vt}}
        \bigg)~,
    \end{align*}
    which holds with probability at least $1-2e^{-\t}$.
    By construction we have $\a \le \b \le 1$.
    \Cref{lem:reg-trafo} implies 
    $$A_p(n^{-1}) \le K_8 n^{\frac{-2\beta}{p+\beta(2-p)}} \le K_8 n^\beta\le K_8 n^\a ~,$$
    and furthermore $\vartheta \le 1$ holds. 
    Hence,
    \begin{align*}
        \risk \loss P {\clippfo_{D_1,t_{D_2}}} \! -\! \RPB L
        &\le \cone
        \cnet K_9 (\t +\ln \ln n) n^{-\a}
    \end{align*}
    is satisfied with probability at least $1-2e^{-\t}$ as claimed.
\end{proof}

\begin{proof}[Proof of \cref{prop:psi_gradient_descent}]
    %By standard arguments, the $\lambda$-regularized ERM $g_{D,\lambda}$ exists. 
    The requirement $0<\eta_0 \le 1$ ensures that $\Psi(t_1)\ge 1$, and we hence obtain a geometric cover of $(0,1]$.

    Convexity of the risk functional suffices to apply \cite[Theorem 5.1]{wu2025benefits}, which yields 
    \begin{align*}
        \norm{f_{D,m}}\le 4 \norm{g_{D,\lambdaa} - f_{D,0}} = 4 \norm{g_{D,\lambdaa}}~.    
    \end{align*}
    Applying \cref{thm:gd-is-fejer-monotone} b)  with $x^\dagger \equalDef g_{D,\lambdaa}$ gives
    \begin{align*}
        \risk \loss D {f_{m}} \le \risk \loss D {g_{D,\lambdaa}} + \lambdaa\norm{g_{D,\lambdaa}}^2 ~,   
    \end{align*}
    and \Eqref{eq:cv-cr-erm} follows.
\end{proof}

%\cite{StVi13a}

%%%%%%%%%%%%%%%%%%%%%%%%%%%%%%%%%%%%%%%%%%%%%%
%% Single Appendix:                         %%
%%%%%%%%%%%%%%%%%%%%%%%%%%%%%%%%%%%%%%%%%%%%%%
%\begin{appendix}
%\section*{???}%% if no title is needed, leave empty \section*{}.
%\end{appendix}
%%%%%%%%%%%%%%%%%%%%%%%%%%%%%%%%%%%%%%%%%%%%%%
%% Multiple Appendixes:                     %%
%%%%%%%%%%%%%%%%%%%%%%%%%%%%%%%%%%%%%%%%%%%%%%
%\begin{appendix}
%\section{???}
%
%\section{???}
%
%\end{appendix}

%%%%%%%%%%%%%%%%%%%%%%%%%%%%%%%%%%%%%%%%%%%%%%
%% Support information, if any,             %%
%% should be provided in the                %%
%% Acknowledgements section.                %%
%%%%%%%%%%%%%%%%%%%%%%%%%%%%%%%%%%%%%%%%%%%%%%
%\begin{acks}[Acknowledgments]
% The authors would like to thank ...
%\end{acks}
%%%%%%%%%%%%%%%%%%%%%%%%%%%%%%%%%%%%%%%%%%%%%%
%% Funding information, if any,             %%
%% should be provided in the                %%
%% funding section.                         %%
%%%%%%%%%%%%%%%%%%%%%%%%%%%%%%%%%%%%%%%%%%%%%%
\begin{funding}
    Max Sch\"olpple and Ingo Steinwart were funded by the Deutsche Forschungsgemeinschaft (DFG) in the project STE 1074/5-1, within the DFG priority programm SPP 2298 ``Theoretical Foundations of Deep Learning'', and  Max Sch\"olpple also by the International Max Planck Research School for Intelligent Systems (IMPRS-IS).
\end{funding}

\bibliographystyle{imsart-number}
\bibliography{references,steinwart-mine,steinwart-books,steinwart-article,steinwart-proc}
%% or include bibliography directly:
% \begin{thebibliography}{}
% \bibitem{b1}
% \end{thebibliography}

\begin{supplement}
\stitle{An Oracle Inequality for RERMs under Minimal Assumptions}
\sdescription{This supplement contains an oracle inequality for a generalized version of regularized empirical risk minimizers. The focus of this oracle inequality lies on using minimal assumptions on $L$, $\P$, and $F$, as well as on including a norm bound of the decision functions. This oracle inequality 
is the fundament of Theorem \ref{thm:simple-analysis}. Since its proof follows highly standard uniform deviation arguments, we moved it to a supplement.}
\end{supplement}

\begin{supplement}
\stitle{A Refined Oracle Inequality for Clipped RERMs}
\sdescription{This supplement contains an oracle inequality for a generalized version of clipped regularized empirical risk minimizers. Its focus lies on 
providing state-of-the-art bounds, and it serves as a basis for Theorem \ref{thm:learning_rates_tempered_C_minimal_general_form}. Since it is a generalization of \cite[Theorem 7.20]{StCh08} we also moved it to a supplement.}
\end{supplement}

\begin{supplement}
\stitle{Miscellaneous Material}
\sdescription{This supplement present some straightforward generalizations of already known elementary material.}
\end{supplement}

%\newpage

\begin{appendix}
\section{An Oracle Inequality for RERMs under Minimal Assumptions}
\label{app:A}

In supplement we investigate learning algorithms that 
approximately minimize a regularized empirical risk with high probability. 
To describe these algorithms, we fix a convex loss
$L:X\times Y\times \R\to [0,\infty)$, an RKBS $F$ over $X$, and some 
parameters $\lb>0$ and $p\geq 1$. Then a measurable learning algorithm 
$D\mapsto \gDl$ is called a \emph{probably approximately correct regularized
empirical risk minimizer (PAC-RERM)}, if there are $\epsilon>0$ and 
$\varrho \in [0,1)$ such that we have 
\begin{align}\label{eq:pac-rerm}
\lb \snorm {\gDl}_F^p + \RD L {\gDl}
\leq \inf_{f\in F} \lb \snorm f_F^p + \RD L f + \epsilon
\end{align}
with probability $\P^n$ not less than $1-\varrho$.
The following theorem provides an oracle inequality for PAC-RERMs.

\begin{theorem}\label{concentration-hoeffding-applicat-simple-thm}
Let %$X$ be a compact metric space and %, $Y\subset \R$ be  closed  and
$L:X\times Y\times \R\to [0,\infty)$ be a convex, locally Lipschitz continuous loss satisfying
\begin{align*}
  \loss (x,y,t) &\le B (1+ \abs{t}^q)
\end{align*}
for some fixed constants $B, q\geq 1$  and all 
$(x,y)\in X\times Y$ and $t\in \R$.
Moreover, let $F\subset \sLx \infty X$ be a separable RKBS satisfying both 
$\snorm{F\hookrightarrow \sL \infty X} \leq 1$
and 
\eqref{eq:sup-entropy-numbers}
for some constants $a\geq 1$ and $\g> 0$. 
We define 
\begin{align*}
M_\lb:=(B+\epsilon)^{1/p}  \lb^{-1/p} 
\end{align*}
and $K:= 4(a^\g B + 2^{2+q}) $. Then, for all 
 probability measures $\P$ on $X\times Y$, all  $\lb\in (0,1]$, all measurable
learning algorithms   $D\mapsto \gDl$ satisfying \eqref{eq:pac-rerm}
and all $n\geq 1$ and $\t\geq 1$ we have 
\begin{align*}
\lb \snorm\gDl_{F}^{p} + \RP L\gDl-\RPxB LF 
\leq  A_{p}(\lb) + K M_\lb^{q} \cdot  n^{-\frac {1}{2+2\g}}  \sqrt{\t}
\end{align*}
with probability $\P^n$ not less than $1-\eul^{-\t}- \varrho$.
% of Definition \ref{infinite:approx-error-func-def}.
\end{theorem}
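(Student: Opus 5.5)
The plan is the classical uniform-deviation argument for regularized ERM, localized to a norm ball whose radius comes from the regularized empirical risk. The first step is to work on the event of probability at least $1-\varrho$ on which \eqref{eq:pac-rerm} holds. Comparing with $f=0$ there gives
\[
\lb\snorm{\gDl}_F^p\le \RD L0+\epsilon\le B+\epsilon ,
\]
since $L(x,y,0)\le B$ by the growth condition. Hence $\snorm{\gDl}_F\le M_\lb$. Next fix $\delta>0$ and an $f_0\in F$ with $\lb\snorm{f_0}_F^p+\RP L{f_0}\le \RPxB LF+A_p(\lb)+\delta$. The same comparison, now with the population risk, gives $\lb\snorm{f_0}_F^p\le B+\delta$, so $f_0$ also lies in a ball of radius $(B+\delta)^{1/p}\lb^{-1/p}\le M_\lb$ once $\delta\le\epsilon$. (If $\epsilon$ is tiny, take $\delta$ smaller still and let $\delta\to 0$ at the end.)

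Then decompose
\[
\lb\snorm{\gDl}^p+\RP L\gDl-\RPxB LF \le \bigl(\RP L\gDl-\RD L\gDl\bigr)+\bigl(\RD L{f_0}-\RP L{f_0}\bigr)+A_p(\lb)+\delta+\epsilon .
\]
Here the middle inequality uses \eqref{eq:pac-rerm} with comparator $f_0$. The $\epsilon$ term can be absorbed, since $\epsilon\lesssim M_\lb^q n^{-1/(2+2\g)}$ is not automatic. Instead I would read $\epsilon$ as included in $M_\lb$, and if necessary restate with $+\epsilon$; I will check whether the paper intends $\epsilon$ to be at most the displayed deviation term. The $f_0$ term is a single bounded random variable, $0\le L\circ f_0\le B(1+M_\lb^q)\le 2BM_\lb^q$ because $\snorm{F\hookrightarrow\sL\infty X}\le1$ and $M_\lb\ge1$. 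Hoeffding therefore gives a deviation of order $BM_\lb^q\sqrt{\t/n}$, which is dominated by $M_\lb^q n^{-1/(2+2\g)}\sqrt\t$.

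The main obstacle is the uniform term $\sup_{f\in M_\lb B_F}|\RP Lf-\RD Lf|$. For it, use an $\varepsilon$-net of $M_\lb B_F$ in $\sL\infty X$. By \eqref{eq:sup-entropy-numbers} the log-covering number satisfies $\ln\mathcal N(M_\lb B_F,\varepsilon)\lesssim (aM_\lb/\varepsilon)^{2\g}$. Local Lipschitz continuity converts sup-norm closeness into loss closeness with constant $|L|_{M_\lb,1}$. That constant must be bounded by the growth condition; for convex $L$, $|L|_{a,1}\le 2\sup_{|t|\le 2a}L/a\lesssim B\,2^{q}a^{q-1}$, which is where $2^{2+q}$ enters. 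Then apply Hoeffding plus a union bound over the net, with sup bound $2BM_\lb^q$. This yields a deviation of order
\[
M_\lb^q\Bigl(\sqrt{\tfrac{\t+(aM_\lb/\varepsilon)^{2\g}}{n}}+2^{q}\tfrac{\varepsilon}{M_\lb}\Bigr).
\]
Choosing $\varepsilon=M_\lb\,n^{-1/(2+2\g)}$ balances the two parts at $M_\lb^q n^{-1/(2+2\g)}$ times $\sqrt\t$ and constants like $a^\g B$. Finally, combine the two probability events ($e^{-\t}$ split between the net and the $f_0$ term, plus $\varrho$) and let $\delta\to0$ to obtain the claimed bound with $K=4(a^\g B+2^{2+q})$. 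Measurability of the supremum is handled by the separability of $F$.
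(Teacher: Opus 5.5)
Your route is the paper's route. Both $\gDl$ and a near-optimal population comparator are placed in $M_\lb B_F$ by comparing with $f=0$, and the excess regularized risk is split into $A_p(\lb)$ plus two deviation terms. The supremum $\sup_{f\in M_\lb B_F}|\RP Lf-\RD Lf|$ is then controlled by four ingredients:
\begin{itemize}
\item Hoeffding's inequality with a union bound over an $\inorm{\cdot}$-net whose log-cardinality is at most $2(aM_\lb/\e)^{2\g}$;
\item the convexity-based Lipschitz bound on $[-M_\lb,M_\lb]$;
\item the choice $\e=M_\lb n^{-1/(2+2\g)}$.
\end{itemize}
The differences are minor. The paper uses $\epsilon$ itself as the slack of the comparator $\gPl$ and bounds both deviation terms by $2\sup_{f\in M_\lb B_F}|\RP Lf-\RD Lf|$. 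You use a separate slack $\delta$ and a single-function Hoeffding bound for $f_0$.

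The point you left open must be settled against the statement: the additive $\epsilon$ cannot be absorbed. Reading it ``as included in $M_\lb$'' does not help, because $\epsilon$ enters $M_\lb$ only through the radius, and $KM_\lb^q n^{-1/(2+2\g)}\sqrt{\t}\to 0$ as $n\to\infty$ for fixed $\lb$ and $\epsilon$. In fact no proof can remove it.
\begin{itemize}
\item Since $\RD L0\le B$, the method $\gDl\equiv 0$ satisfies \eqref{eq:pac-rerm} with $\epsilon=B$ and $\varrho=0$.
\item The left-hand side is then the deterministic number $\RP L0-\RPxB LF$, while the right-hand side tends to $A_p(\lb)$.
\item Concretely, take the least squares loss with $Y=\{1\}$ (so $B=q=2$) and $F$ the constant functions with $\snorm{c\mathbf 1}_F=|c|$. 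Then this number is $1$ while $A_p(\lb)\le\lb$.
\item So for $\lb=1/2$ and large $n$ the claimed inequality fails with probability one.
\end{itemize}
The paper's proof contains exactly this step. Its chain ends with $\RP L\gDl-\RD L\gDl+\RD L{\gPl}-\RP L{\gPl}+2\epsilon$, and the $2\epsilon$ disappears without comment when passing to \eqref{concentration-hoeffding-applicat-simple-thm-h0}.

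What both arguments actually establish is the displayed bound plus $2\epsilon$ (paper) or plus $\epsilon$ (yours). Your $\delta\to0$ step is legitimate: the events on which your bound holds with slack $\epsilon+\delta$ decrease to the one with slack $\epsilon$, and $\P^n$ is continuous from above. The missing term is not cosmetic, since the paper later invokes this theorem with the non-vanishing accuracy $\hat\epsilon$ in the proof of \cref{thm:simple-analysis}.

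Finally, neither your bookkeeping nor the paper's yields the literal constant $K=4(a^\g B+2^{2+q})$. The factor $2$ in front of the supremum is lost at the end. Also, the growth condition gives the Lipschitz bound $\frac 2{M_\lb}\bigl(B+B(2M_\lb)^q\bigr)\le 2^{2+q}BM_\lb^{q-1}$, with a factor $B$ the paper omits. This only affects constants.
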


\begin{proof}
Let us  fix a $\gPl\in F$ such that 
$\lb \snorm \gPl_F^p + \RP L \gPl \leq A_p(\lb) + \RPxB LF + \epsilon$.
Moreover, we write 
\begin{align*}
\denseblo := \bigl \{ D\in (X\times Y)^n : \gDl \mbox{ satisfies } \eqref{eq:pac-rerm}\bigr\} \, .
\end{align*}
 By assumption, we then know $\P^n(\denseblo)\geq 1-\varrho$ and 
 for $D\in \denseblo$ we find 
\begin{align*} 
& 
\lb \snorm\gDl_{F}^{p} + \RP L\gDl-\RPxB LF - A_{p}(\lb) \\
&\leq 
\lb \snorm\gDl_{F}^{p} + \RP L\gDl- \lb \snorm \gPl_F^p - \RP L \gPl + \epsilon\\ 
& = 
 \RP L\gDl- \RD L\gDl\\  
&\qquad + \lb \snorm\gDl_F^{p} + \RD L\gDl- \lb \snorm \gPl_F^p - \RP L \gPl + \epsilon\\
& \leq 
\RP L\gDl - \RD L \gDl + \RD L\gPl - \RP L\gPl   + 2\epsilon\, , 
\end{align*}
where in the last step we used \eqref{eq:pac-rerm}.
Moreover, for $D\in \denseblo$ we have 
\begin{align*}
\lb \snorm\gDl_{F}^{p}   \leq \lb \snorm\gDl_{F}^{p} + \RP L\gDl \leq \RD L 0 +\epsilon \leq B+\epsilon
\end{align*}
and an exactly analogous estimate shows $\lb \snorm\gPl_{F}^{p}\leq  B+\epsilon$. In other words, we have found both $\gDl\in M_\lb B_F$ and $\gPl\in M_\lb B_F$.
Combining these considerations we find 
\begin{align}\label{concentration-hoeffding-applicat-simple-thm-h0}
\lb \snorm\gDl_{F}^{p} + \RP L\gDl-\RPxB LF - A_{p}(\lb) 
\leq 
2 \sup_{f\in M_\lb B_F} | \RP L f-\RD Lf |
\end{align}
for all $D\in \denseblo$. In the following, we bound the supremum on the right-hand side. To this end, we note that for 
$f\in F$ with $\snorm f_F\leq  M_\lb$ 
we have 
\begin{align*}
L(x,y,f(x)) \leq B+ B|f(x)|^q \leq B+B\inorm f^q 
\leq B + B \snorm f_F^q
\leq B + B  M_\lb^q 
\leq 2B  M_\lb^q\, ,
\end{align*}
where we used $\lb\in (0,1]$.
For a single $f\in M_\lb B_F$,
 Hoeffding's inequality, see e.g.~\cite[Theorem 8.1]{DeGyLu96} or
 \cite[Theorem 6.10]{StCh08},
thus ensures
\begin{align}\label{concentration-hoeffding-applicat-simple-thm-h1}
\P^n\biggl( D\in (X\times Y)^n: 
\bigl|\RD L f - \RP Lf| > 2 B  M_\lb^q \cdot \sqrt{ \frac {\t}{2n}}\,\, \biggr) \leq \eul^{-\t} \, .
\end{align}
% with probability $\P^n$ not less than $1-\eul^{-\t}$.
To derive a bound that holds simultaneously over all $f\in M_\lb B_F$
we recall 
that the
covering numbers of a subset $A\subset \Lx \infty \PX$ are defined by 
\begin{align*}
\ca N(A,\inorm\mycdot ,\e) 
:=
\inf\biggl\{
m \geq 1: \exists\, f_{1},\dots,f_{m}\in A \mbox{ such that } A\subset \bigcup_{i=1}^{m} (f_i + \e    B_{\inorm\mycdot } )
\biggr\}  
\end{align*}
for all $\e>0$. Furthermore, recall that any such  collection $f_1,\dots,f_m$ is called an $\e$-net of $A$.
By our assumption \eqref{eq:sup-entropy-numbers} on the entropy numbers we then have 
\begin{align}\label{eq:log-cover-bound}
\ln \ca N(B_F,\inorm\mycdot ,\e) \leq 2 \Bigl(\frac a \e   \Bigr)^{2\gamma}\, , \myqquad \e>0
\end{align}
by a well-known relationship between covering and entropy numbers, 
see e.g.~\cite[Lemma 6.21]{StCh08}.
Now let $\ca F_{\e} = \{f_1,\dots,f_m \}\subset M_\lb B_{F}$ be an $\e$-net of $M_\lb B_{F}$ 
with cardinality 
\begin{align*}
\ln |\ca F_{\e}|= \ln \ca N\bigl(M_\lb  B_{F},\inorm {\cdot},\e\bigr)
= \ln \ca N\bigl(  B_{F},\inorm {\cdot}, M_\lb^{-1}\e\bigr)
\leq 
2 \Bigl(\frac {a M_\lb} \e   \Bigr)^{2\gamma} \, .
\end{align*}
Now, a well-known upper bound on the local Lipschitz constants of 
convex functions, see e.g.~\cite[Lemma A.6.5]{StCh08} or directly \cite[proof of Proposition 1.6]{Phelps93} gives
\begin{align*}
\bigl| L(x,y,t) - L(x,y,t')    \bigr|  
 \leq \frac 2 {M_\lb} \sup_{s \in [-2M_\lb, 2M_\lb]} L(x,y,s) \cdot |t-t'|
&\leq  \frac 2 {M_\lb} \bigl( B + |2M_\lb|^q\bigr) |t-t'| \\
&\leq 
2^{2+q} M_\lb^{q-1} \cdot |t-t'|
\end{align*}
for all $t,t'\in [-M_\lb, M_\lb]$. For $f\in  M_\lb B_{F}$ and $f_j\in \ca F_\e$ with $\inorm{f-f_j}\leq \e$ we thus find 
\begin{align*}
\bigl| L(x,y,f(x)) - L(x,y,f_j(x))    \bigr| \leq 2^{2+q} M_\lb^{q-1} \cdot \e
\end{align*}
for all $(x,y) \in X\times Y$. This in turn yields both 
$| \RP L f -  \RP L {f_j}| \leq 2^{2+q} M_\lb^{q-1} \cdot \e$
and $| \RD L f -  \RD L {f_j}| \leq 2^{2+q} M_\lb^{q-1} \cdot \e$, and therefore we obtain 
\begin{align*}
\sup_{f\in  M_\lb B_{F}} | \RP L f-\RD Lf | 
\leq 
\sup_{  f_j\in \ca F_\e } |\RP L {f_j}-\RD L{f_j} | + 2^{3+q} M_\lb^{q-1} \cdot \e
\end{align*}
Combining \eqref{concentration-hoeffding-applicat-simple-thm-h1} with a simple union bound, we thus find 
\begin{align*}
&\P^n\biggl( D\in (X\times Y)^n: \sup_{f\in  M_\lb B_{F}} | \RP L f-\RD Lf | > 2 B  M_\lb^q \cdot \sqrt{ \frac {\t}{2n}} + 2^{3+q} M_\lb^{q-1}  \cdot \e\biggr) \\
&\leq \P^n\biggl( D\in (X\times Y)^n: \sup_{  f_j\in \ca F_\e } |\RP L {f_j}-\RD L{f_j} | > 2 B  M_\lb^q \cdot \sqrt{ \frac {\t}{2n}}\,\,\biggr) \\
&\leq |\ca F_\e| \eul^{-\t} \, .
\end{align*}
By an elementary variable transformation for $\t$ we thus obtain
\begin{align}\label{concentration-hoeffding-applicat-simple-thm-h2}
 \sup_{f\in  M_\lb B_{F}} | \RP L f-\RD Lf | \leq  2 B  M_\lb^q \cdot \sqrt{ \frac {\t + \ln |\ca F_\e|}{2n}} + 2^{3+q} M_\lb^{q-1}  \cdot \e
\end{align}
with probability $\P^n$ not less than $1-\eul^{-\t}$. It remains to find a good value for $\e$. To this end, we first note that 
\begin{align} \nonumber
&
B  M_\lb^q \cdot \sqrt{ \frac {\t + \ln |\ca F_\e|}{2n}} + 2^{2+q} M_\lb^{q-1} 
\cdot \e \\ \nonumber
&\leq 
B  M_\lb^q \cdot \sqrt{ \frac {\t}{2n}} + B  M_\lb^q \cdot \sqrt{ \frac {\ln |\ca F_\e|}{2n}} + 2^{2+q} M_\lb^{q-1} 
\cdot \e \\ \label{concentration-hoeffding-applicat-simple-thm-hx}
&\leq B  M_\lb^q \cdot \sqrt{ \frac {\t}{2n}}  +  B  M_\lb^q \cdot \Bigl(\frac {a M_\lb} \e   \Bigr)^{\gamma} \cdot n^{-1/2}  + 2^{2+q} M_\lb^{q-1} 
\cdot \e \, .
\end{align}
If we now solve $M_\lb^{q+\g}\e^{-\g} n^{-1/2} = M_\lb^{q-1} 
\cdot \e$ for $\e$ we find 
 $\e=   n^{-1/(2+ 2\g)} M_\lb$, which in turn 
 gives 
 \begin{align}\label{concentration-hoeffding-applicat-simple-thm-hxx}
 M_\lb^{q-1} 
\cdot \e
= n^{-\frac {1}{2+2\g}} \cdot M_\lb^{q }\, .
 \end{align}
Let us now fix a data sets $D\in \denseblo$ satisfying \eqref{concentration-hoeffding-applicat-simple-thm-h2}. By construction, the probability $\P^n$ of such $D$ is not less than $1-\varrho - \eul^{-\t}$. In addition, the estimate \eqref{concentration-hoeffding-applicat-simple-thm-hx}, the definition of $\e$, and    \eqref{concentration-hoeffding-applicat-simple-thm-hxx} in combination with \eqref{concentration-hoeffding-applicat-simple-thm-h2} show
\begin{align*}
\sup_{f\in  M_\lb B_{F}} | \RP L f-\RD Lf | 
&\leq  2B  M_\lb \cdot \sqrt{ \frac {\t}{n}}   + 2(a^\g B + 2^{2+q}) n^{-\frac {1}{2+2\g}} \cdot M_\lb^{q } \\
&\leq 
4(a^\g B + 2^{2+q}) n^{-\frac {1}{2+2\g}} \cdot M_\lb^{q} \sqrt{\t}\, , 
\end{align*}
where in the last step we used $M_\lb\geq 1$, $q\geq 1$, $\g>0$ $a\geq 1$, and $\t\geq 1$.
Combining this with \eqref{concentration-hoeffding-applicat-simple-thm-h0} we obtain the assertion.
\end{proof}

\section{A Refined Oracle Inequality for Clipped RERMs}\label{app:B}

In this appendix we generalize the notion of 
$\epsilon$-approximate clipped regularized
empirical risk minimization ($\e$-CR-ERMs)  of \cite[Chapter 7.4]{StCh08}
to include data-dependent optimization  accuracies. We then generalize 
the oracle inequality \cite[Theorem 7.20]{StCh08} to this new class of 
algorithms. 

Let us begin by introducing the generalization of $\e$-CR-ERMs in the following definition.

\begin{definition}\label{def:eps-cr-erm}
	Let $\loss:X\times Y\times \R \to [0,\infty)$ be a loss function that is clippable at $M$, 
	$\Fspace \subset \sLx 0 X$, and 
	 $\Upsilon:\Fspace \to [0,\infty)$. Then we call a learning method $D\mapsto f_D$ an $\epsilon_D$-CR-ERM
	 if for all $n\geq 1$, there exists a measurable map $(X\times Y)^n\to [0,\infty)$ denoted by $D\mapsto \epsilon_D$,
	 such that for all $D\in (X\times Y)^n$ we have both $f_D\in \Fspace$ and 
	\begin{align}
		\label{eq:eps_cr_erm}
		\Upsilon(f_D) + \risk \loss D {\fDc}  \le \inf_{f\in \Fspace} \Bigl(\Upsilon(f) +  \risk \loss D { f}  \Bigr) + \epsilon_D~.
	\end{align}
	In the following, the maps $D\mapsto \epsilon_D$ are called optimization accuracies.
\end{definition}

Note that the original definition in \cite[Definition 7.18]{StCh08} only considers constant 
 optimization  accuracies, that is $\epsilon_D = \epsilon$ for all data sets $D$.
Moreover, all remarks made after \cite[Definition 7.18]{StCh08} remain true: 
First, on the right-hand side of \eqref{eq:eps_cr_erm} the \emph{unclipped} loss is considered,
and hence an $\epsilon_D$-CR-ERM does not necessarily minimize the regularized clipped
empirical risk  $\risk \loss D {\clippdot} + \Upsilon(\mycdot)$ up to $\epsilon_D$. Second, 
 $\epsilon_D$-CR-ERMs do not need to minimize the regularized risk   $\risk \loss D {\mycdot} + \Upsilon(\mycdot)$ up to $\epsilon_D$
 either, because on the left-hand side
of \eqref{eq:eps_cr_erm} the \emph{clipped} function is considered. 
And third, if we have an $\epsilon_D$-approximate minimizer
of the unclipped regularized risk, then it automatically satisfies \eqref{eq:eps_cr_erm}.

%-----------------------------------------------------------------------------------------------------------------------------

As mentioned at the beginning our next and major  goal is to generalize 
the oracle inequality \cite[Theorem 7.20]{StCh08} to $\epsilon_D$-CR-ERMs. 
Let us begin by collecting some auxiliary results:
The following well-known and elementary lemma, see e.g.~\cite[Lemma 7.1]{StCh08}.

\begin{lemma}\label{concentration:model-selection-lemma}
For $q\in (1,\infty)$, define  $q'\in (1,\infty)$  by $\frac 1 q+\frac1{q'}=1$. Then we have 
$$
ab \leq \frac{a^q}q+\frac{b^{q'}}{q'}
$$
and $(qa)^{ 2 /q}(q'b)^{ 2/{q'}}\leq  (a+b)^2$ for all $a,b\geq 0$.
\end{lemma}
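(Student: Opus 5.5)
The first inequality $ab\le a^q/q + b^{q'}/q'$ is Young's inequality. I would derive it from the concavity of the logarithm. The cases $a=0$ or $b=0$ are trivial, so assume $a,b>0$. Since $\frac1q+\frac1{q'}=1$, concavity gives
\begin{align*}
\ln\Bigl(\frac{a^q}{q}+\frac{b^{q'}}{q'}\Bigr) \geq \frac1q\ln a^q+\frac1{q'}\ln b^{q'} = \ln(ab)\,,
\end{align*}
and exponentiating yields the claim.

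For the second inequality I would reduce to the first by a substitution, so that no separate convexity argument is needed. Taking square roots, the claim is equivalent to
\begin{align*}
(qa)^{1/q}(q'b)^{1/q'}\leq a+b\,.
\end{align*}
Set $u:=(qa)^{1/q}$ and $v:=(q'b)^{1/q'}$, so that $u^q/q=a$ and $v^{q'}/q'=b$. Applying the first part to $u$ and $v$ gives
\begin{align*}
uv\leq \frac{u^q}{q}+\frac{v^{q'}}{q'} = a+b\,.
\end{align*}
Squaring both sides, which is allowed since both are nonnegative, gives $(qa)^{2/q}(q'b)^{2/q'}\le (a+b)^2$.

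No step is a genuine obstacle. The only place needing care is the choice of substitution, so that the rescaled first inequality has exactly $a+b$ on its right-hand side; with $u,v$ as above this holds.
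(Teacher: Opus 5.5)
Your proof is correct and complete: Young's inequality via concavity of the logarithm, then the second inequality by applying the first to $(qa)^{1/q}$ and $(q'b)^{1/q'}$ and squaring. The paper gives no proof of its own and only cites Lemma 7.1 of Steinwart and Christmann; your argument is the standard one behind that result.
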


% --------------------------------------------------------------------------------------

The next technical lemma establishes certain variance bounds for fractions. In a nutshell,
it outsources some of the tedious calculations done in the proof of  \cite[Theorem 7.20]{StCh08}.

\begin{lemma}\label{chapter3-lemma:fraction-of-risks}
 Let $(Z,\sA, \P)$ be a probability space, and $h:Z\to \R$ be a measurable
 map with $\E_\P h \geq 0$. Assume that there exists constants $B\geq 0$, 
$V\geq 0$ and $\vt\in [0,1]$
such that 
\begin{align*}
\inorm h \leq B 
\qquad\qquad 
\mbox{ and }
\qquad\qquad 
\E_\P h^2 \leq V \cdot (\E_\P h)^\vt\, ,
\end{align*}
where in the assumption on $V$ and $\vt$  we used the convention $0^0 := 1$.
%  
%  $a\leq b$ be real numbers,  and $h:Z\to [a,b]$ be a measurable
%  map with $\E_\P h \geq 0$. 
 For $r>0$ we define 
 \begin{align}\label{chapter3-lemma:fraction-of-risks-eq1}
  g_{h,r}:= \frac{\E_\P h - h}{\E_\P h + r}\, .
 \end{align}
Then we have $\E_\P g_{h,r} = 0$ and $\inorm{g_{h,r}}\leq 2Br^{-1}$
as well as 
% . Moreover, for all 
% $V\geq 0$ and $\vt\in [0,1]$ with $\E_\P h^2 \leq V \cdot (\E_\P h)^\vt$, it holds
\begin{align}\label{chapter3-lemma:fraction-of-risks-eq2}
 \E_\P g_{h,r}^2 \leq V r^{\vt-2}\, .
\end{align}
 Finally, 
% for all such $V$ and $\vt$ with $\vt > 0$ and 
for all $c\geq 0$ we have 
\begin{align}\label{chapter3-lemma:fraction-of-risks-eq3}
 \sqrt{c \cdot  \E_\P h^2} \leq \E_\P h + (c\,V)^{\frac{1} {2-\vt}  }\, .
\end{align}
\end{lemma}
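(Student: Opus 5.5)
We prove the three claims of Lemma~\ref{chapter3-lemma:fraction-of-risks} in order, using only elementary manipulations. Throughout we write $a := \E_\P h \geq 0$; the denominator $a + r$ is then strictly positive for $r>0$, so $g_{h,r}$ is well defined.

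\textbf{Mean and supremum bound.} Linearity of the expectation gives $\E_\P g_{h,r} = (a - \E_\P h)/(a+r) = 0$. For the supremum bound, note that $\inorm h \leq B$ implies $a \leq B$, and therefore $|a - h| \leq |a| + |h| \leq 2B$ pointwise. Since $a + r \geq r$, we obtain $\inorm{g_{h,r}} \leq 2B r^{-1}$.

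\textbf{Variance bound \eqref{chapter3-lemma:fraction-of-risks-eq2}.} We first compute
\begin{align*}
\E_\P (a-h)^2 = \E_\P h^2 - a^2 \leq \E_\P h^2 \leq V a^\vt ,
\end{align*}
where the convention $0^0 := 1$ covers the case $a=0$. This yields
\begin{align*}
\E_\P g_{h,r}^2 \leq \frac{V a^\vt}{(a+r)^2}.
\end{align*}
It remains to bound $a^\vt (a+r)^{-2}$ by $r^{\vt-2}$. Since $a^\vt \leq (a+r)^\vt$ and $2-\vt \geq 1 > 0$, we have
\begin{align*}
\frac{a^\vt}{(a+r)^2} \leq (a+r)^{\vt - 2} \leq r^{\vt-2}.
\end{align*}
In the case $\vt = 0$ with $a = 0$, the convention $0^0 = 1$ gives $V r^{-2}$ directly, which is the same bound.

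\textbf{Inequality \eqref{chapter3-lemma:fraction-of-risks-eq3}.} By the variance assumption, $\sqrt{c\, \E_\P h^2} \leq \sqrt{cV}\, a^{\vt/2}$, so it suffices to show $\sqrt{cV}\, a^{\vt/2} \leq a + (cV)^{1/(2-\vt)}$. We distinguish two cases.

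\emph{Case $\vt = 0$.} Here $\sqrt{cV} = (cV)^{1/2} = (cV)^{1/(2-\vt)}$, so the claim holds trivially because $a \geq 0$.

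\emph{Case $\vt \in (0,1]$.} We apply Lemma~\ref{concentration:model-selection-lemma} (Young's inequality) with exponents $s := 2/\vt \in [2,\infty)$ and $s' := 2/(2-\vt)$, which satisfy $1/s + 1/s' = 1$. This gives
\begin{align*}
a^{\vt/2} (cV)^{1/2} \leq \frac{\vt}{2}\, a + \frac{2-\vt}{2}\, (cV)^{1/(2-\vt)} \leq a + (cV)^{1/(2-\vt)} .
\end{align*}
Here we used $(a^{\vt/2})^{s} = a$ and $((cV)^{1/2})^{s'} = (cV)^{1/(2-\vt)}$, and then bounded both coefficients by $1$.

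\textbf{Main point of care.} The whole argument is routine. The only step that needs attention is the treatment of the degenerate cases $a = 0$ and $\vt = 0$ under the convention $0^0 := 1$. This affects both the variance bound and the case split in the last step, since Young's inequality with $s = 2/\vt$ is only available for $\vt > 0$.
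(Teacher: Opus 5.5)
Your proof is correct. For the mean, the supremum bound, and \eqref{chapter3-lemma:fraction-of-risks-eq3} it follows the paper closely; the genuine difference is in the variance bound \eqref{chapter3-lemma:fraction-of-risks-eq2}.

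The paper splits into three cases: $\vt=0$; $\vt>0$ with $\E_\P h=0$; and $\E_\P h>0$. In the last case it applies the second inequality of Lemma~\ref{concentration:model-selection-lemma} with exponents $2/\vt$ and $2/(2-\vt)$ to get $(\E_\P h+r)^2\geq \frac{4}{\vt^\vt(2-\vt)^{2-\vt}}(\E_\P h)^\vt r^{2-\vt}$. It then needs a small calculus argument to show this constant is at least $1$ before discarding it. Your factorization $(a+r)^2=(a+r)^\vt(a+r)^{2-\vt}\geq a^\vt r^{2-\vt}$ uses only monotonicity of the two powers. It gives exactly the inequality that is used, in one line, and covers all degenerate cases at once through the convention $0^0=1$. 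So the case split is unnecessary, and your extra remark on $\vt=0$, $a=0$ is harmless but redundant.

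For \eqref{chapter3-lemma:fraction-of-risks-eq3}, both arguments use Young's inequality with the same conjugate exponents. The paper rescales the factors so that $\E_\P h$ carries coefficient exactly $1$, and then bounds the other coefficient $\frac{(2-\vt)\vt^{\vt/(2-\vt)}}{2\cdot 2^{\vt/(2-\vt)}}$ by $1$. Your unscaled version gives the coefficients $\vt/2$ and $(2-\vt)/2$, which are trivially at most $1$. You also handle $\vt=0$ explicitly, a case the paper's division by $\vt$ silently skips.

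The two routes differ only in intermediate constants, which the paper discards anyway. Your argument therefore loses nothing for the lemma as stated and is somewhat more elementary.
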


% --------------------------------------------------------------------------------------

\begin{proof}
The identity $\E_\P g_{h,r} = 0$ is obvious. Moreover, our assumptions on $h$ yield $-2B \leq \E_\P h - h \leq 2B$, and 
consequently, $\E_\P h \geq 0$ implies
\begin{align*}
 \inorm{g_{h, r}} & = \frac{\inorm{\E_\P h  - h }}{\E_\P h  + r} \leq \frac{2B}{r}\, .
\end{align*}

To establish the variance bound in \eqref{chapter3-lemma:fraction-of-risks-eq2}, we first note that in  the case $\vt = 0$
we have $\E_\P h^2 \leq V$, and hence 
we find $\E_\P g_{h,r}^2 \leq \E_\P h^2 \cdot r^{-2}  \leq V r^{\vt-2}$.
Let us therefore consider the case $\vt>0$. If $\E_\P h = 0$,
we find $\E_\P h^2 \leq V\cdot  (\E_\P h)^\vt = 0$, and hence we have 
\begin{align*}
 \E_\P g_{h,r}^2 =    \frac{\E_\P h^2 - (\E_\P h)^2}{(\E_\P h + r)^2} = 0\, .
\end{align*}
This implies \eqref{chapter3-lemma:fraction-of-risks-eq2}.  
Finally, 
for the (interesting) case $\E_\P h>0$, we first note that Lemma \ref{concentration:model-selection-lemma}
shows 
\begin{align*}
 (\E_\P h + r)^2 \geq (q\, \E_\P h)^{\frac  2{q}} \cdot (q'r)^{ \frac 2 {q'}}
\end{align*}
for all $q,q'\in (1,\infty)$ with $\frac 1 q+\frac 1{q'} = 1$.
Applying this inequality with $q:= \frac 2\vt$ and $q' := \frac{2}{2-\vt}$ 
then yields 
\begin{align*}
 (\E_\P h + r)^2 
 \geq 
 \frac{4}{\vt^\vt \cdot (2-\vt)^{2-\vt}} \cdot   (\E_\P h)^\vt r^{2-\vt}
 \geq 
 (\E_\P h)^\vt r^{2-\vt}\, ,
\end{align*}
where in the last step we used that the derivative of 
$f(\vt):= \vt^\vt \cdot (2-\vt)^{2-\vt}$ is $\vt^\vt \cdot (2-\vt)^{2-\vt} \cdot \ln \frac{\vt}{2-\vt}$, and therefore $f$ is decreasing on $(0,1]$, leading to $f(\vt) \in [1,4)$ on $(0,1]$.
With this information  we now obtain 
\begin{align*}
  \E_\P g_{h,r}^2 
  \leq     
  \frac{\E_\P h^2}{(\E_\P h + r)^2}
  \leq 
  \frac{\E_\P h^2 }{(\E_\P h)^\vt r^{2-\vt}}\, ,
\end{align*}
and using the assumed  $\E_\P h^2 \leq V \cdot (\E_\P h)^\vt$ we then obtain \eqref{chapter3-lemma:fraction-of-risks-eq2}.

Let us finally establish \eqref{chapter3-lemma:fraction-of-risks-eq3}. To this end, we start from the assumption and again invoke Lemma 
\ref{concentration:model-selection-lemma}
to find
\begin{align*}
 \sqrt{c \cdot  \E_\P h^2}
\leq 
 \sqrt{c \cdot V \cdot (\E_\P h)^\vt}
 &= 
 \sqrt{\frac{c \,\vt^\vt\cdot V}{2^\vt}} \cdot \biggl(\frac{2 \E_\P h}{\vt}\biggr)^{\vt/2} \\
 &\leq 
 \frac 1q\cdot  \biggl(\frac{c \,\vt^\vt\cdot V}{2^\vt}\biggr)^{q/2}
 +
 \frac 1 {q'} \cdot \biggl(\frac{2 \E_\P h}{\vt}\biggr)^{q'\vt/2}
\end{align*}
for all $q,q'\in (1,\infty)$ with $\frac 1 q+\frac 1{q'} = 1$.
Choosing $q := \frac{2}{2-\vt}$ and $q' := \frac 2\vt$   yields
\begin{align*}
 \frac 1q\cdot  \biggl(\frac{c \,\vt^\vt\cdot V}{2^\vt}\biggr)^{q/2}
 +
 \frac 1 {q'} \cdot \biggl(\frac{2 \E_\P h}{\vt}\biggr)^{q'\vt/2}
= 
  \frac {(2-\vt) \cdot \vt^{\frac{\vt}{2-\vt}}}{2 \cdot 2^{\frac{\vt}{2-\vt}}}\cdot  \bigl(c  \cdot V\bigr)^{\frac{1}{2-\vt}}
 +
 \E_\P h \, ,
\end{align*}
and using that $\vt \mapsto  f(\vt) := (2-\vt) \cdot \vt^{\frac{\vt}{2-\vt}} \cdot  { 2^{-\frac{\vt}{2-\vt}}}$
is monotonically decreasing on $(0,1]$, leading to $f(\vt) \leq 2$, and we then conclude \eqref{chapter3-lemma:fraction-of-risks-eq3}.
\end{proof}

% --------------------------------------------------------------------------------------

Like the analysis conducted in \cite[Chapter 7.4]{StCh08} we also need some tools 
from empirical process theory. Fortunately, most of the required results can be taken 
directly from \cite[Chapter 7.3]{StCh08}. There is, however, one exception, namely \emph{peeling},
which needs to be slightly adapted. This is the goal of the next result, which generalizes 
\cite[Theorem 7.7]{StCh08}.

\begin{theorem}\label{concentration-peeling}
Let $(Z,\ca A,\P)$ be a 
probability space, $(T,d)$ be a sep\-arable metric space,  
$\Psi:T\to [0,\infty)$ be  a continuous  function,  
and $(g_t)_{t\in T}\subset \sL 0 Z$ be a
Carath\'eo\-dory family, that is, $t\mapsto g_t(z)$ is continuous for all $z\in Z$. 
We define 
$
r^{*}:= \inf \{ \Psi(t):t\in T\}
$. 
Moreover, let $\a\in (0,1)$ and 
$\p:(r^{*},\infty)\to [0,\infty)$ be a function such that  $\p(2 r)\leq  2^\a \p(r)$ and
$$
\E_{z\sim \P} \sup_{\substack{t\in T\\ \Psi(t)\leq r}} |g_t(z)| \leq \p(r) 
$$
for all $r>r^{*}$. Then,  for all $r>r^*$,  we  have 
$$
\E_{z\sim \P} \sup_{t\in T} \frac { g_t(z)}{\Psi(t)+r} \leq  \frac {2+2^\a}{2-2^\a} \cdot   \frac {\p(r)} r\, .
$$
\end{theorem}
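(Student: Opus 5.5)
This is the standard peeling argument of \cite[Theorem 7.7]{StCh08}, adapted to a weight $\Psi$ on a separable parameter set. I would first dispose of measurability. By separability of $(T,d)$, continuity of $\Psi$, and the Carath\'eodory property, every supremum over $\{t\in T:\Psi(t)\le r\}$ or over $T$ may be replaced by a supremum over a countable dense subset. Hence all suprema appearing below are measurable, and it suffices to prove the inequality for countable $T$ and then pass to the limit by monotone convergence.

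Fix $r>r^*$ and partition $T$ into shells: $T_0:=\{t:\Psi(t)\le r\}$ and $T_i:=\{t: 2^{i-1}r<\Psi(t)\le 2^i r\}$ for $i\ge 1$. These cover $T$ because $\Psi$ is finite. On $T_0$ we have $\Psi(t)+r\ge r$. On $T_i$ we have $\Psi(t)+r\ge (2^{i-1}+1)r$, and $T_i$ is contained in $\{\Psi\le 2^i r\}$. Since the quotient $g_t/(\Psi(t)+r)$ is at most $|g_t|/(\Psi(t)+r)$, we get pointwise
\begin{align*}
\sup_{t\in T}\frac{g_t(z)}{\Psi(t)+r}\le \frac 1r\sup_{\Psi(t)\le r}|g_t(z)| + \sum_{i\ge1}\frac{1}{(2^{i-1}+1)r}\sup_{\Psi(t)\le 2^ir}|g_t(z)|.
\end{align*}
Taking expectations and interchanging sum and expectation (Tonelli, since all terms are nonnegative) leaves us to bound $\frac 1r\bigl(\p(r)+\sum_{i\ge1}\p(2^ir)/(2^{i-1}+1)\bigr)$.

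The doubling condition, iterated, gives $\p(2^ir)\le 2^{i\a}\p(r)$; the iteration is legitimate because $2^j r>r^*$ for every $j$. Bounding $(2^{i-1}+1)^{-1}\le 2^{1-i}$, the sum is at most $\p(r)\sum_{i\ge1}2\cdot 2^{i(\a-1)}$. This geometric series converges because $\a<1$. The resulting total is $\p(r)\bigl(1+\frac{2^{\a}}{1-2^{\a-1}}\bigr)=\p(r)\frac{2+2^\a}{2-2^\a}$, since $1+\frac{2\cdot 2^\a}{2-2^\a}=\frac{2+2^\a}{2-2^\a}$. Dividing by $r$ gives exactly the claimed constant.

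The only real obstacle is the measurability and countable-reduction step. The arithmetic must also be checked to match the constant exactly, and it does with the bound $(2^{i-1}+1)^{-1}\le 2^{1-i}$.
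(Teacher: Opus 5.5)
Your proposal is correct and is essentially the paper's proof: the same dyadic shells (the paper writes them as $\Psi(t)\in[r2^{i},r2^{i+1}]$, $i\ge 0$), the same bound of each shell by $\sup_{\Psi(t)\le 2^{i}r}|g_t|$, iterated doubling, and the same geometric series, giving exactly $\frac{2+2^{\alpha}}{2-2^{\alpha}}$. Your explicit measurability remark via countable dense subsets is a small addition that the paper leaves implicit. The reduction to countable $T$ plus monotone convergence is not even needed, since the suprema over $T$ and over a countable dense subset coincide.
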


% --------------------------------------------------------------------------------------

\begin{proof}
For $z\in Z$  and  $r>r^{*}$,  we have 
\begin{align*}
\sup_{t\in T} \frac {g_t(z)}{\Psi(t)+r}
& \leq 
\sup_{\substack{t\in T\\ \Psi(t)\leq r}} \frac {|g_t(z)|} r 
+
\sum_{i=0}^\infty\, \sup_{\substack{t\in T\\ \Psi(t) \in[ r2^{i}, r2^{i+1}]}} \,   \frac{|g_t(z)|}{r2^i+r}\, ,
\end{align*}
where we used the convention $\sup\emptyset := 0$.
Consequently, 
we obtain 
\begin{align*}
\E_{z\sim \P} \sup_{t\in T} \frac {g_t(z)}{\Psi(t)+r}
& \leq 
\frac{\p(r)}{r} +  
\frac{1}{r}\sum_{i=0}^\infty \frac{1}{2^i+1}\,\, \E_{z\sim \P} \!\!\!\! \sup_{\substack{t\in T\\ \Psi(t) \leq r2^{i+1}}} |g_t(z)| \\
&
\leq
 \frac{1}{r}\biggl(\p(r)+\sum_{i=0}^\infty
\frac{\p(r2^{i+1})}{2^i+1}\biggr) \, .
\end{align*}
Moreover,  induction yields $\p(r2^{i+1})  \leq 2^{\a(i+1)}{\p}(r)$,  
$i\geq 0$, 
and hence we obtain
\begin{align*}
\E_{z\sim \P} 
\sup_{t\in T} \frac {g_t(z)}{\Psi(t)+r} 
&\leq  
\frac{\p(r)}{r}\biggl(1+\sum_{i=0}^\infty \frac{2^{\a(i+1})}{2^i+1}\biggr) \\
&\leq  
\frac{\p(r)}{r}\biggl(1+  2^\a \sum_{i=0}^\infty 2^{i(\a-1)}\biggr) \\
&= \frac{\p(r)}{r}\biggl(1+  \frac{2^\a}{1-2^{\a-1}} \biggr) \, .
\end{align*}
From this estimate, we easily obtain the assertion.
\end{proof}

% --------------------------------------------------------------------------------------

Finally, we need to recall a few more notations from \cite[Chapter 7.4]{StCh08}.
To this end, let
$L:X\times Y\times \R\to [0,\infty)$ be a loss  that can be clipped at some $M>0$
and $\ca F\subset \sL 0 X$ be a subset
that is 
 equipped with a complete, separable metric $d$
dominating the pointwise convergence in the sense of \cite[Lemma 2.1]{StCh08}, 
i.e.~convergence with respect to $d$ implies pointwise convergence.
Moreover, let
$\U:\ca F\to [0,\infty)$ be a function that is measurable with respect to the corresponding Borel $\sigma$-algebra on $\ca F$.
Finally,  let $\P$ be a distribution on $X\times Y$ for which
there exists a Bayes decision function $\fpb:X\to [-M,M]$.
For the oracle value
\begin{align}\label{concentration:def-rstar}
r^{*}:= \inf_{f\in \ca F}\Bigl( \U(f) + \RP L {\clippf}\Bigr) - \RPB L
\end{align}
and
$r>r^{*}$, we then write   
\begin{align}\label{concentration:def-Fr}
\ca F_{r} & :=  \bigl\{ f\in \ca F\,:\, \U(f) + \RP L{\clippf}-\RPB L\leq r \bigr\}\, , \\ \label{concentration:def-Hr}
\ca H_{r} & :=  \bigl\{ L\circ \clippf- L\circ \fpb\,:\, f\in \ca F_{r}\bigr\}\, .
\end{align}
Moreover, recall that 
for a given data set $D = ((x_1,y_1), \dots,(x_n,y_n))$ the  empirical Rademacher average of $\ca H_{r}$ is defined by 
\begin{align*}
\radd {\ca H_r}  := \E_\nu \sup_{h\in \ca H_r} \biggl| \frac 1 n \sum_{i=1}^n \e_i h(x_i,y_i)  \biggr|\, , 
\end{align*}
where $(\Theta, \ca C, \nu)$ is a probability space and 
$\e_1,\dots,\e_n:\Theta\to \{-1,1\}$ are i.i.d.~random variables with $\nu(\e_i=1) = \nu(\e_i= -1) = 1/2$.

% --------------------------------------------------------------------------------------

With these preparations, the generalization of the oracle inequality \cite[Theorem 7.20]{StCh08}
reads as follows.

\begin{theorem}\label{concentration:rerm2}
Let $L:X\times Y\times \R\to [0,\infty)$ be a continuous loss  that can be clipped at $M>0$ and that satisfies 
\eqref{concentration:def-sup-bound-neu} for a constant $B>0$.
Moreover, let  $\ca F\subset \sL 0 X$ be a subset
that is 
 equipped with a complete, separable metric  
dominating the pointwise convergence, and let 
$\U:\ca F\to [0,\infty)$ be a continuous  function.
Given a distribution 
$\P$ on $X\times Y$ that has a Bayes decision function $\fpb:X\to [-M,M]$ and that
satisfies
 \eqref{concentration:def-var-bound}, we define 
$r^*$ and $\ca H_r$  by \eqref{concentration:def-rstar}
and \eqref{concentration:def-Hr}, respectively.
Assume that  for fixed $\a\in (0,1)$ and   $n\geq 1$ there exists a 
$\p_{n}:[0,\infty)\to [0,\infty)$ satisfying the doubling condition $\p_{n}(2 r)\leq  2^\a \p_{n}(r)$ as well as 
\begin{align}\label{concentration:rerm2-rad-est} 
\E_{D\sim \P^n} \radd {\ca H_r}     \leq \p_{n}(r) 
\end{align}
for all $r>r^{*}$.
Finally, fix an  $\fO \in \ca F$ and a   $B_0\geq B$ such that $\inorm {L\circ \fO}\leq B_0$.
Then, for all fixed %$\epsilon_D\geq 0$,
$\t>0$  and $r>0$ satisfying 
\begin{align}\label{conc-adv:crerm-r-cond}
r> \max\biggl\{ 8 \cdot   \frac {2+2^\a}{2-2^\a}\cdot  \p_n(r), \biggl(\frac{72 V\t}n\biggr)^{\frac 1{2-\vt}},   \frac {5B_0\t }{n}, r^*\biggl\}\, ,
\end{align}
 every measurable  $\epsilon_D$-CR-ERM
satisfies  
\begin{align*}
\U (f_D) + \RP L \fDc - \RPB L  \leq   6 \bigl( \U(\fO) + \RP L {\fO} -\RPB L\bigr) 
+ 3 r  + 3\epsilon_D 
\end{align*}
with probability $\P^n$ not less than $1-3\eul^{-\t}$.
\end{theorem}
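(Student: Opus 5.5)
This is the generalization of \cite[Theorem 7.20]{StCh08} to data-dependent accuracies $\epsilon_D$. I would follow the original proof almost line by line. The only change is that the deterministic $\epsilon$ becomes the random variable $\epsilon_D$. The right-hand side of the claimed bound is itself data-dependent, so $\epsilon_D$ simply needs to be carried through every pointwise inequality and never integrated.

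\textbf{Step 1: pathwise decomposition.} Fix a data set $D$ and write $h_{f}:=L\circ\clippf-L\circ\fpb$ and $h_{\fO}:=L\circ\fO-L\circ\fpb$. From \eqref{eq:eps_cr_erm} with comparator $\fO$, together with clippability on the right-hand side, I get
\begin{align*}
\U(f_D)+\E_\P h_{f_D} \leq \bigl(\U(\fO)+\E_\P h_{\fO}\bigr) + \bigl(\E_\P h_{f_D}-\E_D h_{f_D}\bigr) + \bigl(\E_D h_{\fO}-\E_\P h_{\fO}\bigr) + \epsilon_D .
\end{align*}
This holds for every $D$, with no probabilistic statement involving $\epsilon_D$.

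\textbf{Step 2: the fixed comparator $\fO$.} I bound $\E_D h_{\fO}-\E_\P h_{\fO}$ with Bernstein's inequality, split as $L\circ\fO-L\circ\clippfo_0$ plus $L\circ\clippfo_0-L\circ\fpb$. The first part is nonnegative and bounded by $B_0$. The second part is bounded by $B$ and satisfies the variance bound \eqref{concentration:def-var-bound}. Using \eqref{chapter3-lemma:fraction-of-risks-eq3} to absorb the square-root terms, this yields, with probability at least $1-2\eul^{-\t}$, a bound of the form
\begin{align*}
\E_D h_{\fO}-\E_\P h_{\fO}\leq \E_\P h_{\fO}+\Bigl(\frac{72V\t}{n}\Bigr)^{1/(2-\vt)}+\frac{c B_0\t}{n},
\end{align*}
which is exactly as in StCh08.

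\textbf{Step 3: the data-dependent $f_D$.} Here I use the localization trick. For $f\in\ca F$ consider the normalized functions
\begin{align*}
g_{f,r}:=\frac{\E_\P h_f-h_f}{\U(f)+\E_\P h_f+r}.
\end{align*}
By Lemma \ref{chapter3-lemma:fraction-of-risks} (note that $\U\geq 0$ only enlarges the denominator), these satisfy $\inorm{g_{f,r}}\le 2B/r$ and $\E g_{f,r}^2\le Vr^{\vt-2}$. Symmetrization together with the peeling result Theorem \ref{concentration-peeling}, applied with $\Psi(f)=\U(f)+\RP L\clippf-\RPB L$ and using \eqref{concentration:rerm2-rad-est}, gives
\begin{align*}
\E\sup_f \E_D g_{f,r}\le 2\cdot\frac{2+2^\a}{2-2^\a}\cdot\frac{\p_n(r)}{r}.
\end{align*}
The Carath\'eodory property needed there follows from continuity of $L$, of $\U$, and of the metric's domination of pointwise convergence. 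Talagrand's inequality \cite[Theorem 7.5]{StCh08} then gives, with probability at least $1-\eul^{-\t}$, that $\sup_f \E_D g_{f,r}<1/2$ (roughly), once we use condition \eqref{conc-adv:crerm-r-cond} on $r$. This yields
\begin{align*}
\E_\P h_{f_D}-\E_D h_{f_D}\le \tfrac12\bigl(\U(f_D)+\E_\P h_{f_D}+r\bigr).
\end{align*}
Because this bound is uniform over $f\in\ca F$, it applies to the random $f_D$ regardless of how $\epsilon_D$ behaves; this is the point where data-dependence costs nothing.

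\textbf{Step 4: combine.} Substituting Steps 2 and 3 into Step 1 and rearranging gives
\begin{align*}
\tfrac12(\U(f_D)+\E_\P h_{f_D})\le 2(\U(\fO)+\E_\P h_{\fO})+\tfrac32 r+\epsilon_D,
\end{align*}
where the $r$ coefficient absorbs the Bernstein terms. Multiplying by $2$ gives a bound with constants $4,3,2$, which are dominated by the claimed $6,3,3$. The probability is at least $1-3\eul^{-\t}$ by a union bound.

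\textbf{Main obstacle.} The delicate part is matching the numeric constants ($8$, $72$, $5$) in the three Talagrand/Bernstein applications. I would copy them from StCh08's proof and check that the peeling constant $\frac{2+2^\a}{2-2^\a}$ from Theorem \ref{concentration-peeling} replaces the corresponding factor there.
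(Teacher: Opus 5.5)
Your overall route is the paper's (and that of \cite[Theorem 7.20]{StCh08}): the pathwise decomposition with $\epsilon_D$ carried along, Bernstein for $f_0$ split into the clipping gap and the clipped part, then localization through $g_{f,r}$ with Lemma~\ref{chapter3-lemma:fraction-of-risks}, symmetrization, peeling and Talagrand. The gap is in Step 3. There you claim that condition \eqref{conc-adv:crerm-r-cond} alone makes Talagrand's bound a contraction factor below $1$. Talagrand's inequality (with $\gamma=1/4$ and $\|g_{f,r}\|_\infty\le 2B/r$) contains the sup-norm term $\frac{28B\tau}{3nr}$, and the only way to make it small is
\[
\frac{28B\tau}{3nr}=\frac{7}{54}\cdot\frac{72\tau}{n}\cdot\frac{B}{r}\le\frac{7}{54}\Bigl(\frac{72\tau}{n}\Bigr)^{\frac{1}{2-\vartheta}}\frac{V^{\frac{1}{2-\vartheta}}}{r}\le\frac{7}{54}.
\]
Its first step requires $72\tau/n\le 1$, because $\frac{1}{2-\vartheta}\le 1$. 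For $n<72\tau$ and $\vartheta<1$ this breaks down. Take $\vartheta=0$, $V=B^2$, $B_0=B$, $\tau/n=10$, and $r$ slightly above $5B_0\tau/n=50B$. Then \eqref{conc-adv:crerm-r-cond} can hold (note $(72V\tau/n)^{1/2}\approx 26.8B$), but $\frac{28B\tau}{3nr}\approx\frac{28}{15}>1$. So $\Upsilon(f_D)+\mathbb{E}_{\mathrm P}h_{f_D}$ can no longer be absorbed into the left-hand side.

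The paper closes this by splitting into the cases $n<72\tau$ and $n\ge 72\tau$. In the first case no empirical-process argument is needed. Here $r>(72V\tau/n)^{1/(2-\vartheta)}>V^{1/(2-\vartheta)}\ge B$, so the crude bound $\mathbb{E}_{\mathrm P}h_{f_D}-\mathbb{E}_D h_{f_D}\le 2B<2r$ can be inserted directly into your Step 1, where $\Upsilon(f_D)$ sits only on the left. Together with Step 2 this gives the claim with probability at least $1-2e^{-\tau}$, and your Step 3 is then only needed for $n\ge 72\tau$. Two bookkeeping points you flagged also need correcting. First, in the regime $n\ge72\tau$ the three Talagrand terms sum to $\frac13+\frac16+\frac{7}{54}=\frac{17}{27}$, not $\frac12$. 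Rearranging therefore costs the factor $\frac{27}{10}$, which is precisely what yields $6,3,3$; your $4,3,2$ is not obtainable this way. Second, in Step 2 you must keep the sharper term $(2V\tau/n)^{1/(2-\vartheta)}\le r/6$, using $6\le 36^{1/(2-\vartheta)}$. With $(72V\tau/n)^{1/(2-\vartheta)}$ there, the coefficient of $r$ exceeds $3$ in both cases.
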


Before we prove this oracle inequality, we note that the original 
\cite[Theorem 7.20]{StCh08} only considered the case $\epsilon = \epsilon_D$ for all data sets $D$. 
In addition, the doubling condition there only considered the case $\a= 1/2$. Despite these differences, however,
the proof of Theorem \ref{concentration:rerm2} is almost identical to that of \cite[Theorem 7.20]{StCh08}.

% --------------------------------------------------------------------------------------

\begin{proof}
We   define $h_f:= L\circ f - L\circ \fpb$ for all $f\in \sL 0 X$. 
%measurable  functions $f:X\to \R$. % we now define $h_f:= L\circ f - L\circ \fpb$. 
By the definition of ${f_D}$,  we then have  
$$
\U(f_D)+\E_\D \hfc D
% \leq \U({f_D})+\E_\D h_{f_D} 
\leq \U(\fO) + \E_\D h_\fO+\epsilon_D\, ,
$$
and consequently we obtain 
\begin{align}\nonumber
&
\U (f_D) + \RP L \fDc - \RPB L  \\ \nonumber
& =  
\U(f_D) + \E_{\P} \hfc D\\ \nonumber
& \leq 
\U(\fO) + \E_\D h_\fO - \E_\D \hfc D   +\E_{\P} \hfc D +\epsilon_D \\ \label{concentration:rerm2-h0}
& = 
(\U(\fO) + \E_{\P} h_{\fO} )
+ (\E_\D h_\fO -  \E_{\P} h_{\fO}  )+(\E_{\P} \hfc D - \E_\D \hfc D)+\epsilon_D\qquad \qquad
\end{align}
for all $D\in (X\times Y)^n$. 
Let us first bound the term $\E_\D h_\fO - \E_\P h_\fO$. 
To this end, we further split this difference into
\begin{equation}\label{concentration:rerm2-g1}
\E_\D h_\fO - \E_\P h_\fO = \bigl( \E_\D (h_\fO-\hfc 0) - \E_\P (h_\fO-\hfc 0)\bigr) + \bigl(\E_\D \hfc 0 - \E_\P \hfc 0\bigr)\, .\,\, 
\end{equation}
We first bound the first term of RHS in \eqref{concentration:rerm2-g1} and then bound the second term.
Now observe that $L\circ \fO - L\circ \fOc\geq 0$ implies 
$h_\fO - \hfc 0  = L\circ \fO - L\circ \fOc\in [0,B_0]$, and hence we obtain 
% the variance bound 
$$
\E_\P\bigl((h_\fO-\hfc 0)-\E_\P(h_\fO-\hfc 0)\bigr)^2 \leq \E_\P (h_\fO-\hfc 0)^2 \leq B_0 \,\E_\P (h_\fO-\hfc 0)\, .
$$
%and the supremum bound $\inorm{h_\fO-\hfc 0} \leq B_0$.
Consequently, Bernstein's
 inequality, see e.g.~\cite[Theorem 6.12]{StCh08},
 applied to the function $h:= (h_\fO-\hfc 0)-\E_\P(h_\fO-\hfc 0)$
shows that
$$
 \E_\D (h_\fO-\hfc 0) - \E_\P (h_\fO-\hfc 0)<     \sqrt{\frac{2\t B_0\, \E_\P (h_\fO-\hfc 0)}{n}}+ \frac {2B_0\t }{3n}    
$$
holds  with probability $\P^n$ not less than $1-\eul^{-\t}$.
Moreover,
using  $\sqrt{ab}\leq \frac{a} 2+\frac{b}2$,
with $a:= 2\E_\P (h_\fO-\hfc 0)$
we 
find 
$$
\sqrt{\frac{2\t B_0\, \E_\P (h_\fO-\hfc 0)}{n}} \leq \E_\P(h_\fO-\hfc 0) + \frac {B_0\t }{2n}\, ,
$$
and consequently we have 
\begin{equation}\label{concentration:rerm2-h1}
\E_\D (h_\fO-\hfc 0) - \E_\P (h_\fO-\hfc 0)<       \E_\P(h_\fO-\hfc 0)  +  \frac {7B_0\t }{6n}     
\end{equation}
with probability $\P^n$ not less than $1-\eul^{-\t}$.

To bound the remaining term in (\ref{concentration:rerm2-g1}), i.e.~$\E_\D \hfc 0 - \E_\P \hfc 0$,
we now observe that (\ref{concentration:def-sup-bound-neu}) implies  $\inorm{\hfc 0}\leq B$,
and hence we have 
\begin{align*}
\inorm{\hfc 0 - \E_\P \hfc 0}
\leq 
2\inorm {L\circ \fOc- L\circ \fpb}  \leq  2B \, .
\end{align*}
Bernstein's inequality in combination with \eqref{concentration:def-var-bound} and
\eqref{chapter3-lemma:fraction-of-risks-eq3},
then
shows 
that with probability $\P^n$ not less than $1-\eul^{-\t}$  we have 
\begin{equation*}%\label{concentration:rerm2-h2}
 \E_\D \hfc 0 - \E_\P \hfc 0<   \E_\P \hfc 0 + \Bigl(\frac{2 V\t}n\Bigr)^{\frac 1{2-\vt}}  +   \frac {4B\t}{3n}     \, . 
\end{equation*}
By combining this estimate with \eqref{concentration:rerm2-h1} and  \eqref{concentration:rerm2-g1},     we now obtain 
that with probability $\P^n$ not less than $1-2\eul^{-\t}$ we have 
\begin{equation}\label{concentration:rerm2-g3}
\E_\D h_\fO - \E_\P h_\fO < \E_\P h_\fO + \Bigl(\frac{2 V\t}n\Bigr)^{\frac 1{2-\vt}}  +   \frac {4B\t}{3n}  + \frac {7B_0\t }{6n}  \, ,
\end{equation}
i.e., we have established a bound on the second term in \eqref{concentration:rerm2-h0}.

In the following, we consider the two case $n< 72\t$ and $n \ge 72\t$, separately.
For the case $n< 72\t$, the assumption $B^{2-\vt}\leq V$  implies $B<r$.
Combining \eqref{concentration:rerm2-g3} with \eqref{concentration:rerm2-h0} and using both $B\leq B_0$ 
and $\E_{\P} \hfc D - \E_\D \hfc D\leq 2B$
we hence find  
\begin{align} \nonumber
&
\U (f_D) + \RP L \fDc - \RPB L  \\  \label{eq:n-small-estimate}
& \leq 
\U(\fO) + 2 \E_{\P} h_{\fO} 
+ \Bigl(\frac{2 V\t}n\Bigr)^{\frac 1{2-\vt}}  +  \frac {5B_0\t }{2n}   +(\E_{\P} \hfc D - \E_\D \hfc D)+\epsilon_D\\   \nonumber
& \leq 
 6 \bigl( \U(\fO) + \RP L {\fO} -\RPB L\bigr)  + 3r + \epsilon_D
\end{align}
with probability $\P^n$ not less than $1-2\eul^{-\t}$. 

Consequently, it remains to consider the case 
 $n\geq 72\t$.
In order to establish a non-trivial bound on the term $\E_\P \hfc D - \E_\D \hfc D$ in \eqref{concentration:rerm2-h0},  we 
define  functions 
$$
g_{f,r}:= \frac {\E_\P \hfco- \hfco}{\U(f)+\E_\P \hfco +r}\, , \qquad \qquad f\in \ca F,\, r>r^{*}.
$$
By Lemma \ref{chapter3-lemma:fraction-of-risks} applied to $h:= \hfco\,$  
in combination with $\U(f) \geq 0$ we then find both $\inorm {g_{f,r}}\leq 2Br^{-1}$
and 
\begin{equation}\label{concentration:rerm2-g4}
\E_\P  g_{f,r}^2
% \leq 
% \frac{\E_\P  \hfcoq}{(\E_\P \hfco+r)^2} 
% \leq 
% \frac{(2-\vt)^{2-\vt}\vt^\vt\, \E_\P  \hfcoq}{ 4 r^{2-\vt}(\E_\P \hfco\,)^\vt} 
\leq
 V{r^{\vt-2}} \, .
\end{equation} 
% 
% 
% %and the  function classes $\ca G_r:= \{g_{f,r}: f\in \ca F\}$, $r>r^{*}$.
% Obviously, for $f\in \ca F$, we then have $\inorm {g_{f,r}}\leq 2Br^{-1}$, and for $\vt>0$,
% $q:= \frac 2 {2-\vt}$, $q':= \frac 2\vt$, $a:= r$, and $b:= \E_\P \hfco\neq 0$, the second inequality of 
% Lemma \ref{concentration:model-selection-lemma} yields
% \begin{equation}\label{concentration:rerm2-g4}
% \E_\P  g_{f,r}^2
% \leq 
% \frac{\E_\P  \hfcoq}{(\E_\P \hfco+r)^2} 
% \leq 
% \frac{(2-\vt)^{2-\vt}\vt^\vt\, \E_\P  \hfcoq}{ 4 r^{2-\vt}(\E_\P \hfco\,)^\vt} 
% \leq
%  V{r^{\vt-2}} \, .
% \end{equation} 
% Moreover, for $\vt >0$ and  $\E_\P \hfco=0$, we have $\E_\P  \hfcoq=0$ by the variance bound   \eqref{concentration:def-var-bound},
% which in turn implies $\E_\P  g_{f,r}^2\leq  V{r^{\vt-2}}$.
% Finally, it is not hard to see that  $\E_\P  g_{f,r}^2\leq  V{r^{\vt-2}}$ also holds for $\vt=0$.
%Furthermore, it is not hard to see that for $\vt=0$ we also have $\E_\P  g_{f,r}^2\leq \frac V{r^{2-\vt}}$.
By simple modifications of \cite[Lemma 7.6]{StCh08}
and its proof, 
we further see that all families of maps considered below are so-called Carath\'eodory families in the sense of 
\cite[Definition 7.4]{StCh08},
and therefore all suprema considered below are indeed measurable, see the discussion following 
\cite[Definition 7.4]{StCh08}.

With these preparations we now observe that 
symmetrization, see e.g.~\cite[Proposition 7.19]{StCh08}   yields
$$
\E_{D\sim \P^n} \sup_{f\in \ca F_r} \bigl| \E_\D (\E_\P \hfco - \hfco  )      \bigr|  
\leq 
2 \E_{D\sim \P^n} \radd {\ca H_r} 
 \leq 2\p_n(r)\, .
$$
Peeling with the help of Theorem 
\ref{concentration-peeling} together with $\ca F_r = \{f\in \ca F: \U(f) + \E_\P \hfco\leq r \}$ hence gives 
$$
\E_{D\sim \P^n} \sup_{f\in \ca F} \bigl| \E_\D  g_{f,r}\bigr| \leq  2\cdot \frac {2+2^\a}{2-2^\a} \cdot \frac{ \p_n(r)}r\, .
$$
By Talagrand's inequality in the form of \cite[Theorem 7.5]{StCh08}  applied to 
$\g:= 1/4$, we therefore obtain 
$$
\P^n
\biggl(
 D\in (X\times Y)^n  : \sup_{f\in \ca F} \E_\D g_{f,r} < c_\a\cdot \frac{ \p_n(r)}r   + \sqrt{\frac{2V\t }{nr^{2-\vt}}}+ \frac {28B\t}{3n r}
\biggr)
\geq
1- \eul^{-\t}
$$
for $c_\a := \frac 5 2\cdot   \frac {2+2^\a}{2-2^\a}$ and all $r>r^{*}$. Using the definition of $g_{{f_D},r}$, we thus have with probability $\P^n$ not less than $1-\eul^{-\t}$ that 
\begin{align*}
\E_\P \hfc D - \E_\D \hfc D 
&< 
  \bigl(\U({f_D})+ \E_\P \hfc D  \bigr)\biggl(\frac{c_\a \p_n(r)}r + \sqrt{\frac{2V\t }{nr^{2-\vt}}}+ \frac {28B\t}{3n r}\biggr) \\
&\qquad 
  +     c_\a \p_n(r)+\sqrt{\frac{2V\t r^\vt}{n}}+ \frac {28B\t}{3n}\, .
\end{align*}
By combining this estimate with \eqref{concentration:rerm2-h0} and \eqref{concentration:rerm2-g3},
we then obtain that 
\begin{align}\nonumber
\U({f_D}) + \E_{\P} \hfc D   
& < 
\U(\fO) + 2\E_{\P} h_{\fO}  + \Bigl(\frac{2 V\t}n\Bigr)^{\frac 1{2-\vt}}  + \frac {7B_0\t }{6n}     + \epsilon_D\\ \nonumber
& \qquad +
  \bigl(\U({f_D})+ \E_\P \hfc D  \bigr)\biggl(\frac{c_\a \p_n(r)}r + \sqrt{\frac{2V\t }{nr^{2-\vt}}}+ \frac {28B\t}{3n r}\biggr) \\ \label{concentration:rerm2-g6}
&\qquad 
  +     c_\a \p_n(r)+\sqrt{\frac{2V\t r^\vt}{n}}+ \frac {32B\t}{3n}
\end{align}
holds with probability $\P^n$ not less than $1-3\eul^{-\t}$.
Consequently, it remains to bound the various terms. To this end, we first observe that 
\begin{align*}
r\geq 8 \cdot   \frac {2+2^\a}{2-2^\a} \cdot  \p_n(r)
\end{align*}
implies $c_\a\p_n(r)r^{-1} \leq 1/3$ and $c_\a \p_n(r) \leq r/3$. Moreover, 
$r\geq  \bigl(\frac{72 V\t}n\bigr)^{1/({2-\vt})}$ yields 
$$
\Bigl( \frac{2V\t }{nr^{2-\vt}} \Bigr)^{1/2}\leq \frac 16 \qquad \mbox{ and } \qquad \Bigl( \frac{2V\t r^\vt}{n} \Bigr)^{1/2}\leq \frac r6\, .
$$ 
In addition, $n\geq 72 \t$,
$V\geq B^{2-\vt}$,  and $r\geq  \bigl(\frac{72 V\t}n\bigr)^{1/({2-\vt})}$ imply 
$$
\frac {28B\t}{3n r} 
= 
\frac 7 {54} \cdot \frac{72\t}{n} \cdot \frac B r 
\leq 
\frac 7 {54} \cdot  \Bigl(\frac{72 \t}n\Bigr)^{\frac 1{2-\vt}} \cdot \frac{V^{\frac 1{2-\vt}}} r
\leq 
\frac 7 {54}
$$
and  $\frac {32B\t}{3n} \leq \frac {4r}{27}$.
Finally, using $6\leq 36^{1/(2-\vt)}$, we obtain $ \bigl(\frac{2 V\t}n\bigr)^{1/({2-\vt})} \leq \frac r 6$. 
Using these elementary estimates in \eqref{concentration:rerm2-g6}, 
we see that
$$
\U({f_D}) + \E_{\P} \hfc D   
 < 
\U(\fO) + 2\E_{\P} h_{\fO}   + \frac {7 B_0\t }{6n} + \epsilon_D
+
 \frac {17}{27}  \bigl(\U({f_D})+ \E_\P \hfc D  \bigr)
+ \frac {22 r}{27}
$$ 
holds with probability $\P^n$ not less than $1-3\eul^{-\t}$.
We now obtain the assertion by some simple algebraic transformations together with $r> \frac {5B_0\t }{n}$.
\end{proof}

% --------------------------------------------------------------------------------------
% --------------------------------------------------------------------------------------
% 

\section{Miscellaneous Material}
\label{app:C}

% --------------------------------------------------------------------------------------

We collect some straightforward generalizations of already known material, beginning with the following proposition which calculates the $\mdash$-smoothness of empirical risks based on $M$-smooth losses.
\begin{proposition}\label{prop:m-smoothness-of-risks}
    Let the the loss $\loss$ be $M$-smooth and $H\hookrightarrow\calL_\infty(X)$. 
    Then the empirical risk $\calR_{\loss,D}$ is $M'$-smooth with $M'= M \norm{H\hookrightarrow \calL_\infty(X)}^2$.
\end{proposition}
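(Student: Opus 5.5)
The proof is a direct computation via the chain rule and the reproducing property. Write $c:=\norm{H\hookrightarrow \calL_\infty(X)}$ and $D=((x_1,y_1),\dots,(x_n,y_n))$.

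\emph{Step 1: the gradient.} I will show that
\[
\nabla \calR_{\loss,D}(f) = \frac 1n \sum_{i=1}^n \loss'(x_i,y_i,f(x_i))\, k(x_i,\mycdot)\, ,
\]
where $k(x_i,\mycdot)\in H$ is the representer of the point evaluation $\delta_{x_i}$. This representer exists because $H$ is an RKHS, and its norm satisfies $\norm{\delta_{x_i}}_{H'}\le c$. For the identity, I fix $f,g\in H$ and use differentiability of $t\mapsto \loss(x,y,t)$ to get
\[
\frac{\loss(x_i,y_i,f(x_i)+\varepsilon g(x_i))-\loss(x_i,y_i,f(x_i))}{\varepsilon}\to \loss'(x_i,y_i,f(x_i))\,g(x_i)\, .
\]
Since $g(x_i)=\scal{k(x_i,\mycdot)}{g}_H$, averaging over $i$ gives the directional derivative in the form required by the definition of $\nabla\calR$. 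Fr\'echet differentiability also holds: $\calR_{\loss,D}$ is a finite sum of $C^1$ scalar functions composed with the bounded linear maps $\delta_{x_i}$, and the paper already remarks this.

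\emph{Step 2: the Lipschitz bound.} I estimate the difference of gradients by duality:
\[
\norm{\nabla\calR_{\loss,D}(f)-\nabla\calR_{\loss,D}(g)}_H=\sup_{\norm h_H\le 1}\frac 1n\sum_{i=1}^n \bigl(\loss'(x_i,y_i,f(x_i))-\loss'(x_i,y_i,g(x_i))\bigr)h(x_i)\, .
\]
By $M$-smoothness of $\loss$, each factor $\bigl|\loss'(x_i,y_i,f(x_i))-\loss'(x_i,y_i,g(x_i))\bigr|$ is at most $M|f(x_i)-g(x_i)|\le M\inorm{f-g}\le Mc\norm{f-g}_H$. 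Also $|h(x_i)|\le \inorm h\le c$ whenever $\norm h_H\le 1$. So each summand is bounded by $Mc^2\norm{f-g}_H$, and averaging over $i$ yields $M'=Mc^2$.

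\emph{Main obstacle.} The only delicate point is justifying that the Gateaux gradient above is the Fr\'echet gradient, i.e.\ that the gradient exists in the sense used by the paper. This follows from continuity of $f\mapsto\nabla\calR_{\loss,D}(f)$, which is itself a consequence of the Lipschitz estimate in Step 2, so a continuous Gateaux derivative is Fr\'echet. Everything else is routine.
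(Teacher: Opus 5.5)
Your proposal is correct and follows essentially the same route as the paper: write $\nabla\calR_{\loss,D}(f)=\frac1n\sum_{i}\loss'(x_i,y_i,f(x_i))\,\delta_{x_i}$, then combine the $M$-Lipschitz continuity of $\loss'$ with the bound $|h(x)|\le \norm{H\hookrightarrow\calL_\infty(X)}\,\norm{h}_H$, used twice. Your duality step is the paper's triangle inequality with $\norm{\delta_{x_i}}_{H'}\le \norm{H\hookrightarrow\calL_\infty(X)}$ written differently, and your explicit derivation of the gradient formula and of Fr\'echet differentiability supplies what the paper takes for granted.
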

This result straightforwardly generalizes to the population risk $\mathcal R_{\loss,P}$, as well as to  RKBSs instead of RKHSs.
\begin{proof}
    We have
    \begin{align*}
        \norm{ \nabla \calR_{\loss,D} (f) - \nabla \calR_{\loss,D}(g)} 
        &= \norm{\frac 1n \sum_{i=1}^n (\loss ' (x_i,y_i,f(x_i))- \loss ' (x_i,y_i,g(x_i))) \delta_{x_i}}
        \\ &\le \norm{H \hookrightarrow \calL_\infty(X)} \abs{\frac 1n \sum_{i=1}^n (\loss ' (x_i,y_i,f(x_i))- \loss ' (x_i,y_i,g(x_i)))}
        \\ & \le \norm{H \hookrightarrow \calL_\infty(X)} \frac 1n \sum_{i=1}^n M \abs{f(x_i)-g(x_i)}
        \\ & \le M\norm{H \hookrightarrow \calL_\infty(X)}^2 \norm{f-g}~. 
    \end{align*}
\end{proof}

\begin{lemma}\label{lem:reg-trafo}
Let  $\loss:X\times Y\times \R\to [0,\infty)$ be a loss 
satisfying the growth condition \eqref{eq:loss-growth-order-q} for some $q\in [1,\infty)$ and $\P$ be a distribution on $X\times Y$. Moreover, let 
$F\subset \sL q\PX$ be an RKBS on $X$ and 
$p,r\geq 1$.
Then for all $\lb\geq 0$ and $\g> A_r(\lb)$ we have 
\begin{align*}
A_p(\lb^{p/r} \g^{1-p/r}) \leq 2\g\, .
\end{align*}
In particular, if there exist constants $c>0$ and $\b>0$ such that $A_r(\lb) \leq c\lb^\b$ for all 
$\lb >0$, then for all $\lb>0$ we have 
\begin{align*}
A_p(\lb) \leq 2 c^{\frac p {p+\b(r-p)}} \lb^{\frac{\b r}{p+\b(r-p)}}\, .
\end{align*}
\end{lemma}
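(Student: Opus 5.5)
The plan is to transport a near-optimal competitor for $A_r$ into the $A_p$ functional by exploiting that the same function $f$ can be used in both. Fix $\lb\ge 0$ and $\g>A_r(\lb)$. By definition of $A_r$ there is an $f\in F$ with
\begin{align*}
\lb\snorm f_F^r + \RP L f - \RPxB L F \le \g .
\end{align*}
Since both summands $\lb\snorm f_F^r$ and $\RP Lf-\RPxB LF$ are nonnegative, each is at most $\g$. In particular $\lb\snorm f_F^r\le\g$, i.e.\ $\snorm f_F^p\le (\g/\lb)^{p/r}$ for $\lb>0$. The growth condition and $F\subset\sL q\PX$ only serve to guarantee $\RP Lf<\infty$, so that these differences are well defined.

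Next we plug this $f$ into $A_p$ at the parameter $\mu:=\lb^{p/r}\g^{1-p/r}$:
\begin{align*}
A_p(\mu)\le \mu\snorm f_F^p + \RP Lf-\RPxB LF \le \lb^{p/r}\g^{1-p/r}\,\g^{p/r}\lb^{-p/r} + \g = 2\g .
\end{align*}
The degenerate case $\lb=0$ must be handled separately. If $p<r$, then $\mu=0$ and $A_p(0)=0\le 2\g$. If $p>r$, then $\mu$ is formally $0^{p/r}\g^{1-p/r}=0$, so the same bound holds. If $p=r$, the claim is trivial.

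For the ``in particular'' statement, given $\lb'>0$ we want to choose $\lb,\g$ with $\lb'=\lb^{p/r}\g^{1-p/r}$ and $\g$ just above $c\lb^\b\ge A_r(\lb)$. Setting $\g=c\lb^\b$ gives
\begin{align*}
\lb' = c^{1-p/r}\lb^{(p+\b(r-p))/r},
\end{align*}
which we solve for $\lb$ to obtain
\begin{align*}
\g = c\lb^\b = c^{\frac p{p+\b(r-p)}}\,\lb'^{\frac{\b r}{p+\b(r-p)}} .
\end{align*}
Since the first part needs the strict inequality $\g>A_r(\lb)$, we use $\g_\e:=(1+\e)c\lb^\b$ and let $\e\to0$, relying on the continuity of $A_p$ noted in the preliminaries. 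Alternatively, one can apply the first part with $\g_\e$ and reparametrize directly.

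The only real obstacle is ensuring that the exponent $p+\b(r-p)$ is positive, so that the solution for $\lb$ is valid and monotone. This holds in the relevant regime $\b\le1$ because $p+\b(r-p)\ge \min\{p,r\}>0$. For larger $\b$ one either restricts to this regime, as is done throughout the paper, or notes that the bound $A_r\le c\lb^\b$ with $\b>1$ forces the assumptions used elsewhere.
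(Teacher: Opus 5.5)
Your proof is correct and follows the paper's argument. You pick $f$ with $\lambda\|f\|_F^r+\mathcal{R}_{L,\mathrm{P}}(f)-\mathcal{R}^*_{L,\mathrm{P},F}\le\gamma$, use $\lambda\|f\|_F^r\le\gamma$ to bound the norm, and evaluate $A_p$ at $\lambda^{p/r}\gamma^{1-p/r}$; then you substitute $\gamma=(1+\epsilon)c\lambda^\beta$ and solve for $\lambda$. The only difference is in the limit $\epsilon\to0$: you fix $\lambda$ and appeal to continuity of $A_p$, whereas the paper re-solves for $\lambda$ at each $\epsilon$ so the target parameter is hit exactly and continuity is not needed. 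Both routes are valid.
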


% --------------------------------------------------------------------------------------

\begin{proof}
We first note that \eqref{eq:loss-growth-order-q}  in combination with $F\subset \sL q\PX$ shows
$\RP \loss f< \infty$ for all $f\in F$, and thus also $\RPxB \loss F<\infty$.

Now, our assumption $\g> A_r(\lb)$  ensures the existence of an $f\in F$ with 
\begin{align*}
\lb \norm f_F^r + \RP L f - \RPxB LF \leq  \g.
\end{align*}
Since $\RP L f - \RPxB LF\geq 0$, we then have $\lb \norm f_F^r \leq \g$, that is $\norm f_F\leq (\g/\lb)^{1/r}$. For 
$\kappa := \lb^{p/r}\g^{1-p/r}$ we thus find 
\begin{align*}
A_p(\kappa)
\leq 
\kappa \norm f_F^p + \RP L f - \RPxB LF
\leq 
\lb^{r/p}\g^{1-p/r} \lb^{-p/r}\g^{p/r} + \g
= 
2\g\, .
\end{align*}
To show the second assertion we again define $\kappa := \lb^{p/r}\g^{1-p/r}$. For $\e>0$ and 
$\g:= (1+\e)c \lb^\b$ the first assertion then yields 
\begin{align*}
A_p(\kappa) \leq 2\g =  2 (1+\e)c \lb^\b\, ,
\end{align*}
while our definitions of $\kappa$ and $\g$ implies $\kappa =\lb^{p/r} (1+\e)^{1-p/r}c^{1-p/r} \lb^{\b(1-p/r)}$. Solving this for $\lb$, inserting the solution into the previous estimate 
and letting $\e\to 0$ then yields the assertion with the 
help of  some simple algebraic transformations.
\end{proof}

\end{appendix}

%%%%%%%%%%%%%%%%%%%%%%%%%%%%%%%%%%%%%%%%%%%%%%%%%%%%%%%%%%%
%from here onwards are things that are NOT ment to be in the published material

% 
% \newpage
% \input{todos.tex}
% \newpage
% \input{misc}
% 
% 
% \newpage
% \input{workbench_max}
% \newpage 
% \input{workbench_ingo}
% %\input{workbench_fanghui}

%%%%%%%%%%%%%%%%%%%%%%%%%%%%%%%%%%%%%%%%%%%%%%%%%%%%%%%%%%%
%%%%end of things that are NOT ment to be in the published material

\end{document}